\title[Pareto Frontier for Online Learning of Porfolios and Quantum States]{Pushing the Efficiency-Regret Pareto Frontier for Online Learning of Portfolios and Quantum States}
 \author{%
 \Name{Julian Zimmert} \Email{zimmert@google.com}\\
 \Name{Naman Agarwal} \Email{namanagarwal@google.com}\\
 \Name{Satyen Kale} \Email{satyenkale@google.com}\\
 \addr Google Research
 }
\newcommand{\Reg}{\operatorname{Reg}}
\newcommand{\ip}[1]{\langle#1\rangle}
\newcommand{\bbR}{\mathbb{R}}
\newcommand{\bbC}{\mathbb{C}}
\newcommand{\norm}[1]{\left\lVert#1\right\rVert}
\DeclareMathOperator*{\argmin}{\arg\,\min}
\newcommand{\ve}{\operatorname{vec}}
\newcommand{\cD}{\mathcal{D}}
\newcommand{\cG}{\mathcal{G}}
\newcommand{\cH}{\mathcal{H}}
\newcommand{\cM}{\mathcal{M}}
\newcommand{\cO}{\mathcal{O}}
\newcommand{\cX}{\mathcal{X}}
\newcommand{\cK}{\mathcal{K}}
\newcommand{\bbI}{\mathbb{I}}
\newtheorem{assumption}{Assumption}
\newcommand{\identity}{\mathbf{I}_d}
\newcommand{\tidentity}{\mathbf{I}_{\tilde d}}
\newcommand{\ev}{\operatorname{ev}}
\newcommand{\actionSet}{\mathcal{A}}
\newcommand{\cT}{\mathcal{T}}
\newcommand{\sqrtX}{X^{\frac{1}{2}}}
\newcommand{\Tr}{\text{Tr}}
\newcommand{\diag}{\operatorname{diag}}
\newcommand{\lbftrl}{\textsc{LB-FTRL}\xspace}
\newcommand{\barrons}{\textsc{Ada-BarrONS}\xspace}
\newcommand{\bisons}{\textsc{BISONS}\xspace}
\newcommand{\qbisons}{\textsc{Schr\"odinger's-BISONS}\xspace}
\newcommand{\bayes}{\textsc{Soft-Bayes}\xspace}
\newcommand{\poly}{\textrm{Poly}}
\newcommand{\Hermitian}{\cH}
\newcommand{\psds}{\cH^{d}_+}
\newcommand{\hessSet}{\cK}
\begin{document}
\maketitle

\begin{abstract}%
  We revisit the classical online portfolio selection problem. 
  It is widely assumed that a trade-off between computational complexity and regret is unavoidable, with Cover's Universal Portfolios algorithm, \bayes and \barrons currently constituting its state-of-the-art Pareto frontier.
  In this paper, we present the first efficient algorithm, \bisons, that obtains polylogarithmic regret with memory and per-step running time requirements that are polynomial in the dimension, displacing \barrons from the Pareto frontier. Additionally, we resolve a COLT 2020 open problem by showing that a certain Follow-The-Regularized-Leader algorithm with log-barrier regularization suffers an exponentially larger dependence on the dimension than previously conjectured. Thus, we rule out this algorithm as a candidate for the Pareto frontier. We also extend our algorithm and analysis to a more general problem than online portfolio selection, viz. online learning of quantum states with log loss. This algorithm, called \qbisons, is the first efficient algorithm with polylogarithmic regret for this more general problem.
\end{abstract}

\begin{keywords}%
  Portfolio Management, Online Learning, Quantum Learning
\end{keywords}

\section{Introduction}
We study the classical online portfolio selection problem \citep{cover1991universal}. In this problem, there are $d$ assets (e.g. stocks) that an investor can invest money in on any given day. On each day, indexed by $t = 1, 2, \ldots, T$, the investor can choose a {\em portfolio} over the $d$ assets, which is a distribution of their wealth on the assets, after observing the {\em returns} (i.e. ratio of closing price to opening price) of the assets on the previous day.
The goal is to compete with the best \emph{constant-rebalanced portfolio} (CRP) in hindsight, which redistributes wealth on each day to maintain a fixed proportion in each asset. 
Importantly, we study the case without assumptions on the quality of the returns, i.e. any individual asset might suffer a total loss at any time. On any day, the wealth of the investor increases by a factor equal to the inner product between the portfolio chosen by the investor and the vector of returns for the $d$ assets. The goal is to develop algorithms that minimize the investor's {\em regret}, which is the difference between the {\em logarithm} of the total wealth earned by the investor after the $T$ days (starting with an initial wealth of \$1), and the logarithm of the total wealth earned by the best CRP in hindsight. Equivalently, the online portfolio selection problem can be seen as an instance of online convex optimization (OCO), where the loss is the negative logarithm of the inner product between the portfolio and the returns vector.

The online portfolio selection problem can be seen as a special case of a more general problem, viz. online learning of quantum states with log loss. In this problem, the goal is to learn to predict the outcome of a sequence of {\em two-outcome measurements} of an unknown quantum state on $\log_2(d)$ qubits. Without going into quantum computing jargon (we refer the reader to \citep{AaronsonCHKN18} and Appendix~\ref{sec:quantum-reduction} for a more detailed discussion of the setting), this online learning problem can be specified as follows. In each time step the learner constructs a quantum state, which is a $d \times d$ positive semidefinite Hermitian matrix of trace $1$, and in response, receives a {\em two-outcome measurement}, which is a $d \times d$ Hermitian matrix with eigenvalues in $[0, 1]$. The loss of the learner is the negative logarithm of the trace product between the quantum state generated by the learner and the measurement. The trace product can be interpreted as a probabilistic prediction of observing one of two outcomes in the measurement, and hence it is natural to use the log loss for measuring the quality of the prediction. The goal is to minimize regret with respect to the best quantum state in hindsight. It is easy to see that the online portfolio selection problem is exactly the special case of this problem where both the quantum state and loss matrices are restricted to be diagonal matrices. \citet{AaronsonCHKN18} developed regret minimizing algorithms for {\em Lipschitz} loss functions of the trace product  -- in particular, the natural log loss setting was not handled by their algorithms. 

Our first main contribution is the development of new algorithms, \bisons for the online portfolios problem and \qbisons for the quantum learning problem, with  regret bounds of $\cO(d^2 \log^2(T))$ and $\cO(d^3 \log^2(T))$ respectively, and $\tilde{\cO}(\text{poly}(d))$ \footnote{The $\tilde{\cO}(\cdot)$ notation suppresses polylogarithmic dependence on $T$ and $d$.} per-iteration running time. This result is noteworthy for two reasons. \bisons is the first algorithm that enjoys polylogarithmic regret with $\tilde{\cO}(\text{poly}(d))$ memory and running time per-iteration, and we show that the quantum learning problem is only slightly harder than the online portfolios problem. Technically, the \bisons algorithm operates in epochs (inspired by the \barrons algorithm of \citet{luo2018efficient}), with each epoch running a Follow-The-Regularized Leader (FTRL) algorithm with quadratic surrogate losses using the log-barrier regularizer, with an additional {\em linear bias} term added to the surrogate loss. The linear bias term is crucial to the analysis and ensures that the regret within any epoch is non-positive, while the final epoch incurs polylogarithmic regret. 

Extending the algorithm and its analysis to the quantum learning problem presents several technical challenges. First, the non-commutativity of the matrices involved makes the construction of the linear bias term non-trivial; we use semidefinite programming duality to design the linear term. Second, since the matrices are {\em complex} and Hermitian, standard convex analysis machinery such as gradients, Hessians and the intermediate value theorem need to be custom developed for the analysis. As observed earlier, the portfolios problem is a special case of the quantum learning problem when the matrices are all diagonal, and in this case \qbisons collapses to \bisons. Hence, we only give a regret bound analysis for \qbisons using the machinery developed; the bound for \bisons follows automatically.


Our second main contribution is that we provide novel insights about a certain natural FTRL algorithm for the online portfolios problem. \citet{van2020open} conjectured, in a COLT 2020 open problem, that FTRL with log-barrier regularization (denoted \lbftrl) obtains the optimal $\cO(d\log(T))$ regret bound. If this is true, this would provide the first (semi-)efficient algorithm with optimal regret. We resolve the COLT 2020 open problem by disproving this conjecture with a lower bound of $\Omega(2^d\log(T)\log\log(T))$ on the regret of the \lbftrl algorithm. This result effectively removes the \lbftrl algorithm as a candidate for an optimal trade-off between complexity and regret, since our algorithm obtains superior regret (when $T\leq \exp\exp(d)$) at a significantly better run-time and memory complexity.

\begin{figure}
\label{fig:landscape}
\centering
\begin{tikzpicture}[xscale=1]
\footnotesize
        \draw[thick,->] (-5,0) -- (8,0);
        \draw[thick,->] (-2,-0.5) -- (-2,6);
        \draw[gray, fill opacity=0.1] (-1.9,1) -- (8,1);
        \draw[gray, fill opacity=0.1] (-1.9,2) -- (8,2);
        \draw[gray, fill opacity=0.1] (-1.9,3) -- (8,3);
        \draw[gray, fill opacity=0.1] (-1.9,4) -- (8,4);
        \draw[gray, fill opacity=0.1] (-1.9,5) -- (8,5);
        \draw[gray, fill opacity=0.1] (-1,0.1) -- (-1,6);
        \draw[gray, fill opacity=0.1] (1,0.1) -- (1,6);
        \draw[gray, fill opacity=0.1] (3,0.1) -- (3,6);
        \draw[gray, fill opacity=0.1] (6,0.1) -- (6,6);

        \foreach \Point/\PointLabel/\Position in {(6,1)/Universal Portfolio/above right,
        (3,5.0)/\lbftrl (upper bound)/above right,
        (-1,4)/\bayes/above right, 
        (1,5)/ONS/above right, 
        (-1,5)/EG/above right,
        (3,2)/\barrons/above right}
        \draw[fill=gray] \Point circle (0.05) node[\Position] {\PointLabel};
        \draw (3,1) circle (0.05) node[below right] {\lbftrl (conjectured)};
        \draw[fill=black] (1,2) circle (0.05) node[above right] {\textcolor{red}{\textbf{\bisons}}};
        \draw[fill=black] (3,3) circle (0.05) node[below right] {\textcolor{red}{\textbf{\lbftrl (lower bound)}}};
        
        \foreach \Tick/\TickLabel in {
        -1/d,
        1/Poly(d),
        3/Poly(d)T ,
        6/Poly(dT)
        }
        \draw[fill=black,thick] ([yshift=1.1]\Tick,0) -- ([yshift=-1.3]\Tick,0) node[below] {$\bm{\TickLabel}$};
        
        \foreach \Tick/\TickLabel in {
        1/d\log(T),
        2/Poly(d\log(T)),
        3/min(T^\gamma;Exp(d)PolyLog(T)),
        4/\sqrt{dT}PolyLog(dT),
        5/G-dependent
        }
        \draw[fill=black,thick] ([xshift=1.1]-2,\Tick) -- ([xshift=-1.3]-2,\Tick) node[left] {$\bm{\TickLabel}$};
        \node at (-4,-0.3) {Runtime};
        \node at (-2,6.3) {Regret};
    \end{tikzpicture}
        \caption{Algorithms for the portfolio problem. Worst-case regret (y-axis) in the $\poly(d)\ll T$, $T\ll\exp\exp(d)$ regime over per-step computational complexity (x-axis). Our contributions are in red. $0<\gamma<\frac{1}{2}$ is some universal constant.}
\end{figure}
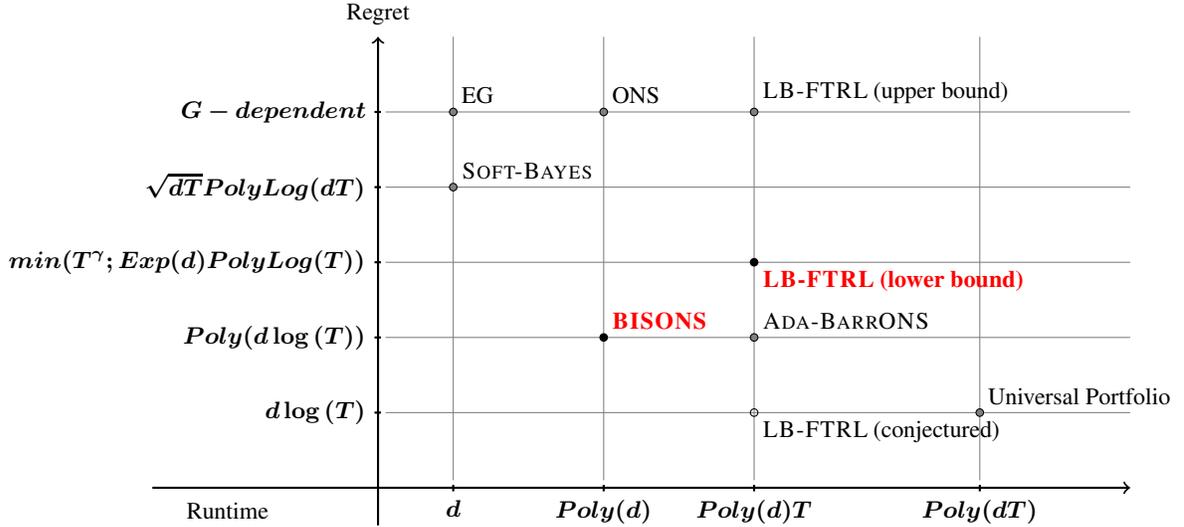

\paragraph{Related work.}
The classical online portfolios has a rich literature starting with \citet{cover1991universal}, who presented the Universal Portfolios algorithm with optimal regret. However, its fastest known implementation \citep{KalaiV00} requires $\cO(T^2(T+d)d^2)$ average per-step computation. Motivated by this inefficiency, early work \citep{agarwal2006algorithms,hazan2007logarithmic,hazan2015online} develped very efficient second order algorithms -- the primary one being Online Newton Step (ONS) -- for this problem, under the assumption that the returns of any stock are bounded away from 0 on any day. This assumption translates to a bound $G$ on the gradient of the loss function. ONS obtains $\cO(Gd\log(T))$ regret at a per-step computational complexity of $\tilde{\cO}(d^3)$. Simpler first order methods based on online gradient descent \citep{zinkevich2003online} or multiplicative weights update \citep{helmbold1998line} can also be applied to the problem, obtaining regret bounds of $\cO(G\sqrt{T\log(d)})$ and $\cO(G\sqrt{T})$ respectively, at a per step complexity of $\tilde\cO(d)$.

Since Cover's original work did not have a dependence on $G$, recent work has focused on overcoming the dependency on $G$ via both first and second order methods. 
The \bayes algorithm \citep{orseau2017soft} is a first order method that obtains $\cO(\sqrt{dT\log(d)})$ regret, while preserving linear run-time in $d$. \barrons \citep{luo2018efficient} is a second order method based on ONS and achieves $\cO(d^2\log^4(T))$ regret.
However, it requires computing the solution of log-barrier FTRL at any point, which increases its per-step complexity to $\tilde\cO(d^{2.5}T)$.

The tradeoff between regret and computational complexity described above is plotted schematically in Figure~\ref{fig:landscape}. Characterizing the Pareto frontier of this tradeoff has been a subject of study over two decades. In particular, special attention has been given to the log-barrier FTRL algorithm \citep{agarwal2005efficient}, which obtains a regret of $\cO(\min\{G^2d\log(T), d\log^d(T)\})$, but has been conjectured to obtain the optimal $\cO(d\log(T))$ regret by \citet{van2020open}.

The online learning of quantum states problem has a shorter history, being introduced by \citet{AaronsonCHKN18}. While the log loss version of the problem hasn't been studied before, it is easy to see that the log loss is 1-mixable \citep{Vovk95}, and hence Vovk's Aggregating Algorithm can be applied to the problem to obtain an algorithm with $\cO(d^2 \log(T))$ regret -- in fact, this algorithm exactly coincides with Cover's Universal Portfolios algorithm in the online portfolio setting. Implementing this algorithm however is computationally rather inefficient. 


\paragraph{Notation.}
For a natural number $d$ we define $[d] := \{1, 2, \ldots, n\}$, and $\Delta([d])$ to be the set of distributions over $[d]$, seen as vectors in $\mathbb{R}^d$. We denote the set of $d \times d$ Hermitian matrices by $\Hermitian^d$. We denote the set of $d \times d$ positive semi-definite Hermitian matrices by $\psds$. Through the paper $\| \cdot \|_p$ denotes the $\ell_p$ norm. Given a vector $v$ and a positive semi-definite matrix $M$, we define the semi-norm $\|v\|_{M} := \sqrt{\Tr(v^* M v)}$. Given two Hermitian matrices $X,Y$ we define the standard inner product (which is always a real number) between them as $\ip{X,Y} := \Tr(X^*Y) = \Tr(XY)$. We define additional notation required for the analysis of the quantum learning problem in the Appendix~\ref{sec:app complex definitions}. 

We use the acronyms PSD for positive semi-definite Hermitian matrices and PD for positive definite Hermitian matrices. In general, throughout the paper we denote matrices with capital letters and vectors by small letters. When denoting functions, capital letters are reserved for functions that are defined as sums of functions.  

\section{Problem setting}
\label{sec:problem-setting}
\paragraph{Online Optimal Portfolio:} The agent interacts with the environment in finite time-steps $t=1,\dots,T$.
At any time-step, the agent picks a portfolio distribution $x_t\in\actionSet=\Delta([d])$, observes a non-negative returns vector $r_t\in \bbR_+^{d}$ and suffers the log loss
\begin{align*}
    f_t(x_t) = f(x_t;r_t) := -\log(\ip{x_t,r_t})\,.
\end{align*}
Since multiplicative scaling of $r_t$ shifts the loss by a constant independent of $x_t$, the regret is unchanged if we scale $r_t$ so that it lies in $\actionSet$. The goal of the agent is to minimize its regret, defined as the cumulative loss compared to the best static action in hindsight.
\begin{align}
    \Reg = \max_{u\in\actionSet}\Reg(u)=\max_{u\in\actionSet}\sum_{t=1}^T\left(f_t(x_t)-f_t(u)\right)\,. \label{eqn: regret definition}
\end{align}
\paragraph{Quantum Learning with Log Loss:}
This problem generalizes the online optimal portfolios problem as follows. The agent's action set is $\actionSet := \{X | X \in \psds, \Tr(X) = 1\}$. The agent at every round picks a PSD Hermitian matrix $X_t \in \actionSet$, observes a PSD loss matrix $R_t$, which is assumed to be in $\actionSet$ as in the portfolios case, and suffers the log loss
\begin{align*}
    f_t(X_t) = f(X_t;R_t) := -\log(\ip{X_t,R_t})\,.
\end{align*}
The task of the agent is to mimimize regret defined analogously to \eqref{eqn: regret definition}. In the Appendix~\ref{sec:quantum-reduction}, we show that the above problem formulation captures problem of online learning of quantum states with log loss as described in \cite{AaronsonCHKN18}.

\section{Algorithm}
%
\begin{algorithm2e}[h]
\setstretch{1.25}
    \textbf{input}: $T$, $B$, $\eta$, $\beta$. \\
    \textbf{initialize}:  $\forall e\in\mathbb{N}:\,p^e_0=d\bm{1}, G^e_0(\cdot) = \hat{F}^e_0(\cdot) = \eta^{-1}R(\cdot), x^e_1=u^e_1=\argmin_{x\in\actionSet}G_0^e(x)$. \\
    $e\leftarrow 1, \tau\leftarrow 1$\\
    \For{$t=1,\dots$}{
        $f_t\leftarrow$ receive from playing $x_t\leftarrow x^e_{\tau}$.\\
        $\hat f^e_\tau=\hat f_t\leftarrow$ construct according to \eqref{eq: surrogate loss}.\\
        $\hat F^e_\tau \leftarrow \hat F^e_{\tau-1} + \hat f^e_\tau$\\
        $G^e_\tau  \leftarrow G^e_{\tau-1} + g^e_\tau$, where $g^e_\tau(x) := \hat f^e_\tau(x) - \ip{x, p^e_\tau - p^e_{\tau-1}}B$\\
        
        $x^e_{\tau+1} \leftarrow  \argmin_{x\in\actionSet} G^e_{\tau}(x)$, $u^e_{\tau+1} \leftarrow \argmin_{x\in\actionSet} \hat{F}^e_{\tau}(x)$ \\
        $ \forall i\in[d]:\,p^e_{{\tau+1},i} =\max\{p^e_{\tau,i},{x^e_{\tau+1,i}}^{-1}\} $ \\
        \uIf{$\exists i:\,(2(1+6\eta)\beta)u^e_{\tau+1,i} \geq (p^e_{\tau+1,i})^{-1} $}{
              $e\leftarrow e+1,
            \tau\leftarrow 1$ \tcp{Reset the algorithm}
        }\Else{
        $\tau \leftarrow \tau+1$
        }
    }
    \caption{\bisons}\label{alg: bisons}
\end{algorithm2e}
%
In this section, we present our main algorithm \bisons (Algorithm~\ref{alg: bisons}). The algorithm is inspired by the algorithm \barrons proposed by \cite{luo2018efficient}, but improves the regret bound obtained by \cite{luo2018efficient} by a factor of $\log^2(T)$, while simultaneously and more importantly improving the run-time by factors polynomial in $T$. \bisons is the first algorithm with constant per-step computational complexity that obtains polylogarithmic regret in the portfolio problem.

The algorithm operates in \textit{epochs}, where each epoch ends when either the global time reaches $T$ or when a certain reset condition (detailed below) is met. We call an epoch \emph{completed} if it ends by reset, which sets the internal time $\tau$ of the algorithm back to $1$ and lets the algorithm forget all history. Thus, we keep only one copy of all parameters in memory and reset them to the initial values when the epoch is completed.

Let $\mathcal{T}_1 \ldots \mathcal{T}_{E} \in [1, T]$ denote the timesteps following a restart trigger event. By convention we set $\mathcal{T}_{0}=1$ and $\mathcal{T}_{E+1} := T+1$. We define an \textit{epoch} $\{\mathcal{E}_i\}$ of the algorithm as the period between successive resets of the algorithm, i.e. $\mathcal{E}_i := [\mathcal{T}_{i}, \mathcal{T}_{i+1}-1]$. Note that by definition there is no restriction over the length of these epochs and they can be of variable lengths. 

On a high level, \bisons works by approximating at every step, the true loss function $f_t(x)$ by a quadratic surrogate loss
\begin{equation}
    \begin{aligned}
    \hat f_t(x) := f_t(x_t) +\ip{x-x_t,\nabla f_t(x_t)} +\frac{\beta}{2}\ip{x-x_t,\nabla f_t(x_t)}^2
    \,,\label{eq: surrogate loss}
    \end{aligned}
\end{equation}
where 
$\beta\leq 1$ is an input parameter to the algorithm.
Let $e,\tau$ be the epoch and internal time of the algorithm at time $t$, then we define $x_t = x_\tau^e$ and $\hat f^e_\tau = \hat f_t$.
For reasons that become clear in section~\ref{sec: analysis}, \bisons further augments the above surrogate loss with a linear bias term, defined at every internal step $\tau$ as
\begin{equation}
\label{eq: g loss}
    g^e_\tau(x) := \hat f^e_\tau(x)-\ip{x,p^e_{\tau}-p^e_{t-\tau}}B,
\end{equation}
where $\{p^e_\tau \in \mathbb{R}^d\}$ is an auxiliary sequence maintained by the algorithm and $B$ is a bias scaling factor which is a parameter input to the algorithm. To produce the output $x^e_\tau$ \bisons runs FTRL over the biased surrogate losses, i.e.
\begin{equation}
    x^e_\tau := \argmin_{x\in\actionSet} \sum_{s=1}^{\tau-1}g^e_s(x) + \eta^{-1}R(x), \label{eq: ftrl oracle}
\end{equation}
where $\eta$ is a learning rate parameter and $R(x):=-\sum_{i=1}^d\log(x_i)$ is the log-barrier regularization. The algorithm further maintains a reference solution $u^e_\tau$ by running FTRL over the surrogate losses without bias, 
\begin{equation}
\label{eq: unbiased FTRL}
    u^e_\tau := \argmin_{x\in\actionSet} \sum_{s=1}^{\tau-1}\hat{f}^e_s(x) + \eta^{-1}R(x). 
\end{equation}
Further, the asset dependent bias $p$ is updated according to
\begin{align}
\label{eq: P update vector}
    \forall i\in[d]:\,p^e_{\tau,i} =\max\{p^e_{\tau-1,i},{x^e_{\tau,i}}^{-1}\}\,.
\end{align}
Finally, the algorithm is reset (i.e. the bias vector $p$ is reset and all previous losses are discarded) whenever
\[\exists i\in[d]:\, u^e_{\tau+1,i} > \frac{1}{2(1+6\eta)\beta)}(p^e_{\tau+1,i})^{-1}.\]

The following theorem and corollary capture our main regret bound for \bisons. We show that the total regret in any completed epoch is always non-positive and the total regret in the last uncompleted epoch is bounded. Summing the regrets over individual epochs (which is only an over-estimation of the true regret) gives the final result. 

\begin{theorem}
\label{thm: single epoch}
Assuming\footnote{Without loss of generality, we can fill up missing time-steps with $r_t=\bm{1}_d/d$, which result in constant losses.} $T \geq 110d^2$, setting the input parameters as $B=\frac{264}{5}d\log(T)$, $\eta =\frac{1}{4B}, \beta = \frac{11}{7B}$, we have that
the regret of \bisons over a completed (i.e. end triggered by the reset condition) epoch against any comparator $u:\,\min_iu_i\geq T^{-1}$ is non-positive.
Further, for the epoch that runs until the end of time $T$, the regret is bounded by $\cO(d^2\log^2(T))$.
\end{theorem}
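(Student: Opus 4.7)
The plan is to analyze one epoch at a time, decomposing the total regret into (a) the FTRL regret against the biased surrogate losses $g^e_\tau$, (b) the approximation error in replacing the true loss $f^e_\tau$ by the quadratic surrogate $\hat f^e_\tau$, and (c) the telescoping contribution of the linear bias. The bias term is engineered precisely so that (c) cancels (a) and (b) on a completed epoch, leaving only the log-barrier initialization cost on the final epoch.

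First I would justify the surrogate: since $\hat f^e_\tau(x^e_\tau) = f^e_\tau(x^e_\tau)$ and the gap $f^e_\tau(u) - \hat f^e_\tau(u)$ can be controlled via the Taylor expansion of $-\log(1+z)$ on a range whose scale is dictated by $|\nabla_i f^e_\tau(x^e_\tau)| \leq (x^e_{\tau,i})^{-1} \leq p^e_{\tau,i}$ and $u_i \leq 1$, the true regret is bounded by the surrogate regret plus a $\beta$-dependent correction. Applying FTRL with log-barrier regularization to the biased losses yields
\begin{align*}
\sum_{\tau=1}^n \bigl(g^e_\tau(x^e_\tau) - g^e_\tau(u)\bigr) \leq \eta^{-1}\bigl(R(u) - R(x^e_1)\bigr) + \sum_{\tau=1}^n D_\tau,
\end{align*}
where $D_\tau$ is the stability term, bounded by a Bregman estimate using the log-barrier Hessian $\diag(1/x_i^2)$ as an $O(\eta d)$ benign part plus an $O(\eta B^2)$ part proportional to $(x^e_{\tau,i}(p^e_{\tau,i} - p^e_{\tau-1,i}))^2$.

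The crucial step is the bias telescoping. Since $p^e_0 = d\mathbf 1$ and $p^e_\tau$ is coordinate-wise non-decreasing,
\begin{align*}
-B\sum_{\tau=1}^n \langle x^e_\tau - u, p^e_\tau - p^e_{\tau-1}\rangle = -B\sum_{\tau=1}^n \langle x^e_\tau, p^e_\tau - p^e_{\tau-1}\rangle + B\langle u, p^e_n - d\mathbf 1\rangle,
\end{align*}
and the update $p^e_{\tau,i} = \max\{p^e_{\tau-1,i}, (x^e_{\tau,i})^{-1}\}$ gives $\langle x^e_\tau, p^e_\tau - p^e_{\tau-1}\rangle \leq d$ per step. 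On a completed epoch the reset condition delivers a coordinate $i^\star$ with $u^e_{n+1,i^\star} p^e_{n+1,i^\star} \geq (2(1+6\eta)\beta)^{-1}$; a strong-convexity transfer between the minimizer $x^e_{n+1}$ of $G^e_n$ and the minimizer $u^e_{n+1}$ of $\hat F^e_n$ (whose potentials differ exactly by $-B\langle\cdot, p^e_n - d\mathbf 1\rangle$) converts this into the required $\Omega(B\langle u, p^e_n\rangle)$ negative contribution. Under the parameter choices $B=\Theta(d\log T)$, $\eta = 1/(4B)$, $\beta = 11/(7B)$, this dominates both $\eta^{-1} R(u) = O(d^2\log^2 T)$ and the stability sum for any comparator with $\min_i u_i \geq T^{-1}$. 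For the last uncompleted epoch no bias compensation has fired, so the $\eta^{-1} R(u) = O(d^2\log^2 T)$ term remains as the final bound.

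The main obstacle is coordinating the two FTRL streams: the reset is monitored via the unbiased iterate $u^e_{\tau+1}$ but the regret is accrued by the biased iterate $x^e_{\tau+1}$, and one must argue that whenever the threshold on $u^e$ fires the biased iterate has pushed $p^e$ out far enough on some coordinate to supply the compensating term. This hinges on a tight optimality comparison of the minimizers of $\hat F^e_n$ and $G^e_n$ and on ensuring that the surrogate-error range assumed in the first step is preserved, which is the case precisely because the reset fires before any coordinate of $p^e$ grows uncontrollably.
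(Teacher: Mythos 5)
There is a genuine gap, and it sits exactly at the two places where your sketch hand-waves: the validity of the quadratic surrogate at the \emph{external} comparator $u$, and the source of the negative regret. The quadratic surrogate $\hat f_t$ is \emph{not} a global lower bound on $f_t$: the inequality $\hat f_t(u)\le f_t(u)$ only holds when $\ip{u,r_t}\lesssim \beta^{-1}\ip{x_t,r_t}$, and with $\beta=\Theta(1/(d\log T))$ an arbitrary comparator with $\min_i u_i\ge T^{-1}$ can violate this badly (the bound $|\nabla_i f_t(x_t)|\le p_{t,i}$ you invoke only gives a Taylor range of order $\max_i p_{t,i}\le T^2$, far outside the $O(\beta^{-1})$ window, so your ``$\beta$-dependent correction'' either blows up or reintroduces the gradient-norm dependence the algorithm is designed to avoid). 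The reset condition cannot rescue this for the external $u$, because it constrains the \emph{internal} iterate $u^e_{\tau+1}$, not $u$. For the same reason your claimed $\Omega(B\ip{u,p^e_n})$ negative contribution is false as stated: the reset only guarantees that some coordinate of $u^e_{n+1,i}\,p^e_{n+1,i}$ is large, and an adversarial external $u$ can put all its mass on coordinates where $p^e_n$ never grew, making $\ip{u,p^e_n-p^e_0}$ as small as $0$; no ``strong-convexity transfer'' between the minimizers of $G^e_n$ and $\hat F^e_n$ can manufacture a negative term of size $\Omega(B/\beta)$ against such a $u$.

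The paper closes both holes with one device you are missing: the auxiliary linearized surrogate $\underline{\hat f_t}$ (equal to $\hat f_t$ up to $\ip{x,r_t}=\beta^{-1}\ip{x_t,r_t}$ and extended linearly beyond), which \emph{is} a global lower bound on $f_t$ (Lemma~\ref{lem: lower surrogate}), together with Lemma~\ref{lem: extended minimum}: thanks to the reset condition and the stability lemmas, the FTRL optimum of $\sum_\tau \underline{\hat f_\tau}+\eta^{-1}R$ coincides with the unbiased-surrogate iterate $u^e_{\tau+1}$. This lets one pass from the regret against the arbitrary $u$ (paying only $\eta^{-1}R(u)=O(d\log T/\eta)$) to the surrogate regret against $u^e_{\tau+1}$, at which point both the surrogate validity and the reset condition have bite, and the bias telescoping yields $-B\ip{u^e_{\tau+1},p^e_\tau-p^e_0}=-\Omega(B/\beta)$ upon reset (Lemma~\ref{lem: negative regret}). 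Your proposal attempts the decomposition directly at $u$ and therefore cannot produce either ingredient. A secondary issue: bounding the cost of bias by $\ip{x^e_\tau,p^e_\tau-p^e_{\tau-1}}\le d$ per step gives $O(dL)$ over an epoch of length $L$, which is useless; you need the telescoping estimate $\sum_\tau\ip{x^e_\tau,p^e_\tau-p^e_{\tau-1}}\le\sum_i\log(p^e_{L,i}/d)=O(d\log T)$, which in turn requires the bound $p^e_{\tau,i}\le T^2$ proved via Lemma~\ref{lem: x range}.
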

The proof is given in Appendix~\ref{sec:bisons-analysis}, a sketch is provided at the end of Section~\ref{sec: analysis}. The following corollary is immediate:
\begin{corollary}
\label{cor: regret}
Assuming $T \geq 110d^2$, the total regret of \bisons with parameters from Theorem~\ref{thm: single epoch} is bounded by $\cO(d^2\log^2(T))$.
\end{corollary}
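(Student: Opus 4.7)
The plan is to combine the per-epoch guarantees of Theorem~\ref{thm: single epoch} and extend them from comparators bounded away from the simplex boundary to arbitrary $u\in\actionSet$. Specifically, I will (i) decompose $\Reg(u)$ additively over the epochs produced by Algorithm~\ref{alg: bisons}, (ii) bound $\Reg(\tilde u)$ for a uniformly smoothed version $\tilde u$ of $u$ by summing the per-epoch bounds, and (iii) control the smoothing overhead $\Reg(u)-\Reg(\tilde u)$ by a simple logarithmic estimate.

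First, since the losses are additive and the epochs $\mathcal{E}_0,\dots,\mathcal{E}_E$ partition $[1,T]$, for any fixed $u$ I can write
$$\Reg(u) \;=\; \sum_{i=0}^{E}\sum_{t\in\mathcal{E}_i}\bigl(f_t(x_t)-f_t(u)\bigr).$$
Each completed epoch $\mathcal{E}_0,\dots,\mathcal{E}_{E-1}$ restarts the algorithm to its initial state, so Theorem~\ref{thm: single epoch} applies to each of them verbatim, as well as to the (possibly incomplete) final epoch $\mathcal{E}_E$. For every comparator satisfying $\min_i u_i\geq T^{-1}$, the theorem declares the inner sums to be non-positive on completed epochs and at most $\cO(d^2\log^2(T))$ on the final epoch; summing yields $\Reg(u)\leq\cO(d^2\log^2(T))$ for all such $u$.

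Second, to handle an arbitrary $u\in\actionSet$, I would introduce the smoothed comparator $\tilde u:=(1-d/T)\,u+(1/T)\,\mathbf{1}_d$. Its coordinates are bounded below by $1/T$ and sum to $1$, so $\tilde u\in\actionSet$ and the previous step gives $\Reg(\tilde u)\leq\cO(d^2\log^2(T))$. Since WLOG $r_t\in\actionSet$ (as noted in Section~\ref{sec:problem-setting}) and $\tilde u\geq(1-d/T)\,u$ coordinatewise, we have $\ip{\tilde u,r_t}\geq(1-d/T)\ip{u,r_t}$, and therefore
$$f_t(\tilde u)-f_t(u)\;=\;\log\frac{\ip{u,r_t}}{\ip{\tilde u,r_t}}\;\leq\;-\log(1-d/T)\;\leq\;\frac{2d}{T},$$
where the last step uses $d/T\leq 1/2$, which follows from $T\geq 110d^2$. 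Summing over $t\in[T]$ gives $\Reg(u)-\Reg(\tilde u)\leq 2d$, hence $\Reg(u)\leq\cO(d^2\log^2(T))$, and taking the maximum over $u\in\actionSet$ finishes the proof.

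There is essentially no serious obstacle: the corollary is a clean consequence of Theorem~\ref{thm: single epoch}. The only two points requiring attention are (a) choosing the smoothing scale as $d/T$ rather than $1/T$ so that $\min_i \tilde u_i\geq T^{-1}$, which is exactly the admissibility condition demanded by the theorem, and (b) verifying that the $\cO(d)$ smoothing overhead is dominated by the $\cO(d^2\log^2(T))$ main term, which is automatic under the assumption $T\geq 110d^2$.
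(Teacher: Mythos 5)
Your proposal is correct and matches the paper's own proof: the paper likewise smooths the (optimal) comparator to $(1-\tfrac{d}{T})u + \tfrac{1}{T}\bm{1}_d$ to satisfy the $\min_i u_i \geq T^{-1}$ condition, bounds the smoothing cost by $-T\log(1-\tfrac{d}{T}) = \cO(d)$, and sums the per-epoch guarantees of Theorem~\ref{thm: single epoch}. Your choice to bound $\Reg(u)$ for arbitrary $u$ and then maximize, rather than plugging in the optimal comparator directly, is an immaterial variation.
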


\paragraph{Runtime:} Note that \bisons only uses quadratic functions ($\hat f_t$ and $g_t$) and therefore a succinct representation of these functions can be maintained in time $\tilde{\cO}(d^2)$ in each iteration. Further it can be seen that the constrained minimization upto a sufficient accuracy can also be carried out in $\tilde{\cO}(\text{poly}(d))$ time (see Appendix~\ref{sec:solving-opt} for details from the more general quantum learning perspective).

\subsection{Extension to Quantum Learning}
\label{sec: psd}
In this section, we describe the \qbisons algorithm (formally defined in the appendix as Algorithm \ref{alg: bisons (psd)}) for the quantum learning problem. \qbisons follows the same structure as \bisons, and uses the same choice of surrogate function $\hat f_t$, point played $X_t$, and comparator $U_t$ as in online optimal portfolio which are still well defined by \eqref{eq: surrogate loss}, \eqref{eq: ftrl oracle} and \eqref{eq: unbiased FTRL} respectively.

We highlight the main differences from the online optimal portfolio in this section. The main differences between the two cases firstly is that the regularizer $R$ used is the log-det-barrier, which reduces to the log-barrier for diagonal matrices: $R(X) = -\log \det(X)$. Secondly, and the primary non-trivial step in the generalization, is the appropriate definition of the biases $P_t$ and the reset condition. Analogous to Algorithm \ref{alg: bisons}, the reset condition is generalised to $U_t^e \not\preceq \frac{1}{2(1+6\eta)\beta} [P^e_t]^{-1} $, for some biases $P_t^e$ ensuring $P_t^e\succeq [X^e_s]^{-1}$ for all $s,t$ in the same epoch with $s\leq t$. This ensures that within any epoch $\hat f_t^e$ stays a valid lower bound for the comparator $U_{\tau}^e$ for that epoch. This property is summarized as Lemma \ref{lem: U bound} in the appendix. 

The main hurdle for extending our results to the quantum setting is to find a suitable bias rule $P_{\tau}^e$ that generalises \eqref{eq: P update vector}.
The goal is to construct $P_{\tau}^e$ that satisfies
$P_{\tau+1}^e \succeq P_{\tau}^e$ and $P_{\tau}^e\succeq [X_{\tau}^e]^{-1}$.
Unlike in the online optimal portfolio case, there is no canonical ``smallest'' $P_{\tau}^e$ with that property in general. Instead we choose to look for a choice satisfying these constraints that suffers a small cost of bias \footnote{See Section \ref{sec: analysis} for an explanation of what cost of bias means and how it shows up in the analysis.} 
$\sum_{t=1}^\tau \ip{X_{\tau}^e,P_{\tau}^e-P_{\tau-1}^e}$.
This objective, which can be characterized via semi-definite programming duality, leads to an optimal choice given by
\begin{align}
\label{eq: P update}
    P_{\tau+1}^e =P_{\tau}^e+ [X_{\tau+1}^e]^{-\frac{1}{2}}\left(\identity-[X_{\tau+1}^e]^{\frac{1}{2}}P_{\tau}^e [X_{\tau+1}^e]^{\frac{1}{2}}\right)_+[X_{\tau+1}^e]^{-\frac{1}{2}}\,,
\end{align}
where $(\cdot)_+$ is the operator that sets all negative eigenvalues to 0, i.e. if $M$ is a Hermitian matrix with eigendecomposition $M = U^*PU + V^*NV$, where $P$ and $N$ are diagonal matrices with the non-negative and negative eigenvalues respectively, then $M_+ = U^*PU$.
\begin{remark}
For diagonal matrices, \eqref{eq: P update} picks $P_{\tau}^e(i,i) = \max\{[X_{\tau}^e]^{-1}(i,i),P_{\tau-1}^e(i,i)\}$ and is hence a strict generalization of \eqref{eq: P update vector}. 
\end{remark}
Surprisingly, we show in the appendix that the cost of bias remains $\cO(d\log(T)B)$, so we do not pay anything for this generalization. We note that relying on the ``negative regret via linear bias'' technique used here is crucial towards obtaining this generalization. It is not clear how to use the ``negative regret by increasing learning rate'' approach used in \barrons here. We now state the theorem governing the regret for \qbisons.
\begin{theorem}
\label{thm: single epoch psd}
Assuming $T \geq 110d^2$, setting $B=\frac{264}{5}d^2\log(T)$, $\eta =\frac{1}{4B}, \beta = \frac{11d}{7B}$
the regret of \qbisons over a single epoch against any comparator $U\succeq T^{-1}\identity$ is non-positive if the end is triggered by the reset condition.
Otherwise, if the algorithm runs until the end of time $T$, then the regret is bounded by $\cO(d^3\log^2(T))$.
\end{theorem}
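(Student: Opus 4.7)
The plan is to follow the template of Theorem~\ref{thm: single epoch} but execute every step in the non-commutative matrix setting. Fixing an epoch and suppressing the superscript $e$, I would first decompose the per-epoch regret against an admissible comparator $U \succeq T^{-1}\identity$ as
\begin{align*}
\sum_\tau \bigl(f_\tau(X_\tau)-f_\tau(U)\bigr)
\le \sum_\tau \bigl(\hat f_\tau(X_\tau)-\hat f_\tau(U)\bigr)
= \underbrace{\sum_\tau \bigl(\hat f_\tau(X_\tau)-\hat f_\tau(U_{\tau+1})\bigr)}_{\text{(A)}} + \underbrace{\sum_\tau \bigl(\hat f_\tau(U_{\tau+1})-\hat f_\tau(U)\bigr)}_{\text{(B)}}.
\end{align*}
Term (B) is controlled by standard be-the-leader telescoping for the unbiased FTRL sequence \eqref{eq: unbiased FTRL} with the log-det-barrier $R(X)=-\log\det(X)$, yielding $\text{(B)}\le\eta^{-1}(R(U)-\min R)=\cO(\eta^{-1}d\log T)$ whenever $U\succeq T^{-1}\identity$. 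The first inequality—that $\hat f_\tau$ lower-bounds $f_\tau$ at $U$—is the quantum version of Lemma~\ref{lem: U bound}; I would prove it through an integral Taylor remainder along the segment $X_\tau \to U$, using the invariant $P_\tau\succeq[X_\tau]^{-1}$ together with the fact that the reset has not triggered to uniformly control the Hessian of $-\log\ip{\cdot,R_\tau}$ in the semi-norm $\|\cdot\|_{P_\tau}$.

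Second, I would analyze (A) by running FTRL on the biased losses $g_\tau=\hat f_\tau-\ip{\cdot,P_\tau-P_{\tau-1}}B$ under $R$. The standard local-norm stability bound gives
\begin{align*}
\sum_\tau g_\tau(X_\tau)-\min_{X\in\actionSet}\sum_\tau g_\tau(X)\le \tfrac12\sum_\tau \|\nabla g_\tau(X_\tau)\|_{H_\tau^{-1}}^2+\eta^{-1}\cO(d\log T),
\end{align*}
where $H_\tau\succeq \eta^{-1}\nabla^2 R(X_\tau)+\beta\,\nabla f_\tau(X_\tau)\nabla f_\tau(X_\tau)^{\ast}$, with $\nabla^2 R(X)=X^{-1}\otimes X^{-1}$ interpreted through the complex-Hermitian calculus developed in the appendix, and $\nabla f_\tau(X_\tau)=-R_\tau/\ip{X_\tau,R_\tau}$. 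Converting (A) back from $g$ to $\hat f$ reintroduces the telescoping \emph{bias contribution} $-B\sum_\tau\ip{X_\tau-U_{\tau+1},\,P_\tau-P_{\tau-1}}$, which is the central negative quantity that must absorb both the quadratic stability term and any residual positive slack.

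The third and technically heaviest step is to bound the \emph{cost of bias} $\sum_\tau \ip{X_\tau,P_\tau-P_{\tau-1}}$ induced by the non-commutative update \eqref{eq: P update}. In the diagonal case this is a one-line telescoping of $\sum_i\log P_{\tau,i}$, but here I would proceed in two substeps. First, I would invoke the SDP-duality optimality of \eqref{eq: P update}—that $P_\tau-P_{\tau-1}$ minimizes $\ip{X_\tau,\Delta}$ over $\{\Delta\succeq 0 : P_{\tau-1}+\Delta\succeq [X_\tau]^{-1}\}$—to obtain the closed form $\ip{X_\tau,P_\tau-P_{\tau-1}}=\Tr\bigl(\identity-X_\tau^{1/2}P_{\tau-1}X_\tau^{1/2}\bigr)_+$. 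Second, I would telescope using the potential $\Phi_\tau=\log\det\bigl(P_\tau+\epsilon\identity\bigr)$ and exploit operator-monotonicity of $\log\det$ together with Weyl-type eigenvalue interlacing under PSD additive perturbations to conclude $\sum_\tau\ip{X_\tau,P_\tau-P_{\tau-1}}=\cO(d\log T)$. Combined with $B=\tfrac{264}{5}d^2\log T$, this bias contribution is of order $-\Theta(Bd\log T)=-\Theta(d^3\log^2 T)$, which, via the in-epoch invariant $X_{\tau+1}\preceq 2(1+6\eta)\beta\,P_{\tau+1}^{-1}$, dominates the quadratic local-norm term $\tfrac{\beta}{2}\sum_\tau\ip{X_\tau-X_{\tau+1},\nabla f_\tau(X_\tau)}^2$. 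The extra factor of $d$ in both $B$ and $\beta$ relative to the portfolio case precisely offsets the $d$-fold worse conditioning of $\nabla^2 R$ for matrix-variate inputs.

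Putting these ingredients together, a completed epoch yields total regret $\le 0$; for the unique uncompleted epoch, non-triggering of the reset forces $U_{\tau+1}\preceq \tfrac{1}{2(1+6\eta)\beta}P_{\tau+1}^{-1}$ throughout, and the same chain of inequalities, combined with (B), leaves a residual of $\eta^{-1}\cO(d\log T)=\cO(d^3\log^2 T)$. The main obstacle will be the cost-of-bias bound for \eqref{eq: P update}: in the portfolio proof it is trivial by coordinate separation, whereas in the quantum setting it requires marrying the SDP-optimal characterization of $P_\tau$ with a careful monotonicity/interlacing argument for the spectra of a sequence of PSD additive perturbations, and essentially all the new technical work relative to \bisons will live in this single step.
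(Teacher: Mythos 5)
There is a genuine gap at the very first step. You bound the true regret by $\sum_\tau(\hat f_\tau(X_\tau)-\hat f_\tau(U))$ for an \emph{arbitrary} comparator $U\succeq T^{-1}\identity$, i.e.\ you assert $\hat f_\tau(U)\leq f_\tau(U)$, and propose to prove it "as the quantum version of Lemma~\ref{lem: U bound}" using the reset invariant. This cannot work: the reset condition constrains only the algorithm's internal leader $U_{\tau+1}$ (the minimizer of $\hat F_\tau$), and says nothing about the external comparator $U$. For the log loss the ONS-style quadratic surrogate satisfies $\hat f_\tau(U)\leq f_\tau(U)$ only when $\ip{U,R_\tau}\lesssim\beta^{-1}\ip{X_\tau,R_\tau}$; when $U$ places large weight on a direction the algorithm has down-weighted, $\hat f_\tau(U)$ grows quadratically while $f_\tau(U)$ decreases, so the inequality fails --- and this is precisely the failure mode the whole reset/bias machinery exists to handle (avoiding a $\beta\sim 1/G$ scaling). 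The paper's route around this is the piece you omit: the linearly-extended lower surrogate $\underline{\hat f_\tau}$, which \emph{is} a global lower bound (Lemma~\ref{lem: lower surrogate}), combined with Lemma~\ref{lem: extended minimum}, which uses the reset condition on $U_{\tau+1}$ together with the stability lemmas (Lemmas~\ref{lem: X stability}, \ref{lem: U stability}) to show that the FTRL minimizer of $\sum_\tau\underline{\hat f_\tau}+\eta^{-1}R$ coincides with $U_{L+1}$. Only through this detour can one pass from the arbitrary $U$ (paying $\eta^{-1}R(U)=\cO(d\log T/\eta)$) to the algorithm-controlled comparator $U_{L+1}$, after which the $g$-regret, cost-of-bias, and negative-regret accounting proceeds. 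Without $\underline{\hat f}$ and Lemma~\ref{lem: extended minimum}, your chain of inequalities does not start.

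Two secondary points. First, your decomposition uses the \emph{moving} comparator $U_{\tau+1}$ inside the sums, so your negative term is $-B\sum_\tau\ip{U_{\tau+1},P_\tau-P_{\tau-1}}$; but the reset condition only lower-bounds $\ip{U_{L+1},P_{L+1}}$ at the final step, which directly controls the paper's fixed-comparator term $-B\ip{U_{L+1},P_L-P_0}$ (Lemma~\ref{lem: negative regret}) and not your time-varying sum, so even after fixing the first step you would need to revert to a fixed comparator. Second, the part of your plan that is on target --- the SDP-optimality identity $\ip{X_\tau,P_\tau-P_{\tau-1}}=\Tr\bigl(\identity-X_\tau^{1/2}P_{\tau-1}X_\tau^{1/2}\bigr)_+$ and the $\log\det$-potential telescoping giving a cost of bias of $\cO(d\log T)$ --- is essentially the paper's Lemmas~\ref{lem: sum of P}, \ref{lem: telescoping sum} and \ref{lem: rho sum}, so that ingredient is correct, but it is not where the real difficulty of the theorem lies.
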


Theorem \ref{thm: single epoch psd} can be used to prove the following regret bound for \qbisons yields the following corollary analogous to Corollary~\ref{cor: regret}. Missing proofs are in Appendix~\ref{sec:bisons-analysis}.
\begin{corollary}
\label{cor:qbisons-regret}
For $T \geq 110d^2$, the regret of \qbisons is bounded by $\cO(d^3\log^2(T))$.
\end{corollary}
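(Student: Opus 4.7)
The plan is to derive the global regret bound by decomposing time into the algorithm's epochs and invoking Theorem~\ref{thm: single epoch psd} on each one. Writing $\mathcal{E}_1, \ldots, \mathcal{E}_E$ for the epochs produced by \qbisons, every epoch in $\mathcal{E}_1,\ldots,\mathcal{E}_{E-1}$ is \emph{completed} in the sense of the theorem (it ends because the reset condition fires), while $\mathcal{E}_E$ is the final, possibly truncated, epoch. For a fixed comparator $U \in \actionSet$ the overall regret splits additively as $\sum_{e=1}^{E}\sum_{t \in \mathcal{E}_e}\bigl(f_t(X_t) - f_t(U)\bigr)$, so it suffices to control each epoch-level regret against a single well-chosen $U$.

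The only real obstacle is that Theorem~\ref{thm: single epoch psd} only guarantees non-positive per-epoch regret against comparators $U \succeq T^{-1}\identity$, whereas the definition of regret maximizes over all of $\actionSet$. The standard workaround will be a mild perturbation of the unconstrained maximizer. Let $U^\star \in \actionSet$ achieve the maximum in the definition of regret and define $\tilde U := (1 - d/T)\,U^\star + (1/T)\,\identity$. Then $\tilde U \in \actionSet$ (it is PSD with trace one) and $\tilde U \succeq T^{-1}\identity$, so Theorem~\ref{thm: single epoch psd} applies with comparator $\tilde U$: every completed epoch contributes non-positive regret against $\tilde U$, and the final epoch contributes at most $\cO(d^3\log^2(T))$. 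Summing across epochs gives $\sum_{t=1}^T \bigl(f_t(X_t) - f_t(\tilde U)\bigr) \leq \cO(d^3\log^2(T))$.

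It remains to pay for the perturbation. Using $\ip{\tilde U, R_t} \geq (1-d/T)\ip{U^\star, R_t}$ (since $\Tr(R_t)\geq 0$) and the monotonicity of $-\log$, each step incurs at most $f_t(\tilde U) - f_t(U^\star) \leq -\log(1-d/T)$; under the assumption $T \geq 110 d^2$ one has $d/T \leq 1/(110 d) \leq 1/2$, so the telescoped perturbation cost is $-T\log(1-d/T) \leq 2d = \cO(d)$. Combining the two parts,
\begin{align*}
    \Reg
    &= \sum_{t=1}^T\bigl(f_t(X_t) - f_t(U^\star)\bigr) \\
    &\leq \sum_{t=1}^T\bigl(f_t(X_t) - f_t(\tilde U)\bigr) + \sum_{t=1}^T\bigl(f_t(\tilde U) - f_t(U^\star)\bigr)
    \leq \cO(d^3\log^2(T)) + \cO(d),
\end{align*}
which is exactly the claimed bound. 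The entire technical difficulty is encapsulated in Theorem~\ref{thm: single epoch psd}; at the corollary level the only subtlety is choosing the perturbation scale so that the additive slack is dominated by the per-epoch guarantee.
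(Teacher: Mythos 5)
Your proposal is correct and follows essentially the same route as the paper: perturb the optimal comparator to $(1-d/T)U^\star + T^{-1}\identity$ so that it satisfies $U \succeq T^{-1}\identity$, apply Theorem~\ref{thm: single epoch psd} epoch-by-epoch (non-positive regret for completed epochs, $\cO(d^3\log^2(T))$ for the final one), and absorb the $-T\log(1-d/T) = \cO(d)$ perturbation cost. No gaps.
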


\section{Overview of the Analysis}
\label{sec: analysis}
\paragraph{Intuition for the regret bound.}
Using quadratic surrogate losses instead of the true losses is a standard technique for improving computation complexity while preserving logarithmic regret (see the Online Newton Step (ONS) method from \cite{hazan2007logarithmic}). We use the same quadratic surrogate $\hat{f}_t$ as the ONS method (with a different choice of $\beta$). Such analyses including ONS often require that the surrogate is a lower bound for the function value of the comparator $u$, i.e. $\hat f_t(u) \leq f_t(u)$ at all time-steps. Since $u$ is unknown, this is typically enforced by ensuring lower boundedness over the entire domain. However in the case of optimal portfolio, a uniform lower bound requires $\beta$ to scale with the smallest observed gradient $G$, a quantity we wish to avoid in our bound.

\citet{luo2018efficient} observe that for any $t$, $\hat f_t(u)>f_t(u)$ only if there exists $i$ such that $u_i = \Omega(\frac{x_{t,i}}{\beta})$.
Intuitively this condition is triggered when the stock $i$ underperformed up to time $t$, thereby receiving a low weight from the algorithm, but later on recovers overproportionally.
To counter this case, our algorithm biases stocks to give them more weight according to the poorest performance they experienced.
 The bias term we introduce in our algorithm ensures a negative contribution to appear in the regret analysis.
This quantity is carefully tuned such that, if a reset happens, the regret for this phase is non-positive.
To demonstrate how the negative regret contribution appears, consider the following decomposition of the surrogate losses:
\begin{align*}
    \sum_{t=1}^\tau(\hat f_t(x_t)-\hat f_t(u))&= \sum_{t=1}^\tau(g_t(x_t)-g_t(u) +\ip{x_t-u,p_{t}-p_{t-1}}B)\\
    &=\underbrace{\Reg_g(u)}_{\text{FTRL regret bound}} +\underbrace{\sum_{t=1}^\tau\ip{x_t,p_{t}-p_{t-1}}B}_{\text{cost of bias}} \underbrace{- \ip{u,p_\tau-p_0}B}_{\text{ negative regret} }\,.
\end{align*}
The FTRL regret over the sequence of functions $g_t$ is bounded via ONS analysis. Further recall that the bias parameters $p_t$ satisfy for all $i$, $p_{ti}=\max_{s\leq t} x_{ti}^{-1}$. Therefore for all $t,i$, $p_{ti}-p_{t-1,i}\neq 0$ implies $x_{ti}=p_{ti}^{-1}$. We can now bound the \textit{cost of bias} is any epoch by
\[\sum_{t=1}^\tau\ip{x_t,p_t-p_{t-1}}B=\sum_{i=1}^d\sum_{t=1}^\tau p_{ti}^{-1}(p_{ti}-p_{t-1,i})B\leq \sum_{i=1}^d \log(p_{\tau i}/d)B\,.\]
We show in our analysis that 
$p_{ti}\leq T^2$ at all time-steps, so this
term is bounded by $\cO(d\log(T)B)$. If a reset is triggered at timestep $\tau$, then by the reset condition we have for the comparator $u_\tau$ (maintained by the algorithm), $\exists i\in[d]:\,u_{\tau i}p_{\tau i}=\Omega(\beta^{-1})$.
Hence the negative regret is of order $\Omega(\frac{B}{\beta})$, which is, given the right tuning, significantly larger than the cost of bias. We argued the above for the comparator $u_{\tau}$ maintained by the algorithm, which is the FTRL solution of the quadratic surrogate losses $\hat{f}_t$. This choice of comparator is the core reason behind our runtime improvement. We now explain why this works. 

\paragraph{Improving the run-time.}
The key to our improved runtime complexity is using the FTRL solution over the surrogate losses as comparator for the reset condition.
This computation is as costly as $x_t$, which can be done in $\cO(d^{2.5})$ arithmetic operations, in contrast to $\cO(dT)$ required by previous algorithms with optimal regret, e.g. \barrons \citep{luo2018efficient}. We first setup some auxiliary notation to simplify our argument.
Let $\ell_t(x)=\ip{x,r_t}$ be the linear reward at time $t$, then we can rewrite $f_t = h\circ \ell_t$ and $\hat f_t = \hat h_t\circ \ell_t$, where $y_t:=\ell_t(x_t)$ and $h(x),\hat h_t(x): \mathbb{R}_+ \rightarrow \mathbb{R}$ are functions defined as
\[h(x):=-\log(x)\,,\qquad\hat h_t(x) := h(y_t) + (x-y_t)h'(y_t)+\frac{\beta}{2}(x-y_t)^2h'(y_t)^2\,.\]

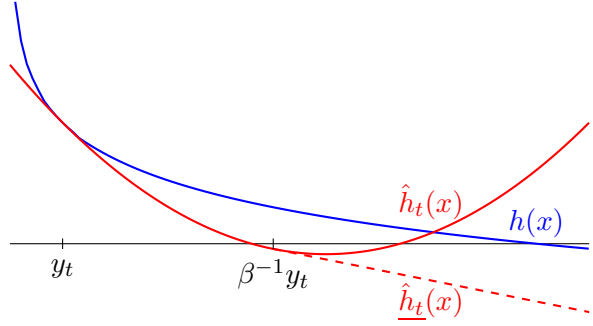
\begin{wrapfigure}{r}{0.5\textwidth}
    \centering
\begin{tikzpicture}[xscale=7,yscale=0.7]
\draw (0,0) -- (1.1,0);
\draw[thick, blue] plot [domain=0.01:1.1,samples=100] (\x,{-ln(\x)});
\draw[thick, red] plot [domain=0.00:1.1,samples=100] (\x,{-ln(0.1)-(\x-0.1)*10+10*(\x-0.1)*(\x-0.1)});
\draw[thick,dashed,red] plot [domain=0.5:1.1,samples=2] (\x,{-ln(0.1)-(0.5-0.1)*10+10*(0.5-0.1)*(0.5-0.1)-2*
(\x-0.5)
});
\draw (0.1,-0.1) -- (0.1,0.1);
\node[below] at (0.1,-0.1)  {$y_t$};
\draw (0.5,-0.1) -- (0.5,0.1);
\node[below] at (0.5,-0.1)  {$\beta^{-1}y_t$};
\node[blue] at (1,0.4)  {$h(x)$};
\node[red] at (0.8,0.8)  {$\hat h_t(x)$};
\node[red] at (0.8,-1.1)  {$\underline{\hat h_t}(x)$};
\end{tikzpicture}
    \caption{Surrogate losses}
    \label{fig:surrogate losses}
\end{wrapfigure}
Note that both $h, \hat h_t$ are convex functions. We now define an additional function $\underline{\hat f_t}=\underline{\hat h_t}\circ \ell_t$, with $\underline{\hat h_t}(x) = \hat h_t(x)$  $x\leq \beta^{-1} y_t$, and $\hat h_t(\beta^{-1}y_t) +(x-\beta^{-1}y_t) \hat h'_t(\beta^{-1}y_t)$ otherwise. Geometrically $\underline {\hat h_t}$ coincides with $\hat h_t$ for $x$ up to $\beta^{-1}y_t$ and follows its linear extension at $x = \beta^{-1}y_t$ afterwards (see Figure~\ref{fig:surrogate losses}). From the convexity of $\hat h_t$, it follows that both $\underline{\hat h_t}$ and $\underline{\hat f_t}$ are convex.
Furthermore as shown by the following lemmma, it holds that $\underline{\hat h_t}$ is a proper lower approximation of $h$ and therefore $\underline{\hat f_t}$ is a proper lower approximation of $f$.

\begin{lemma}
\label{lem: lower surrogate}
For all $x\in (0,\infty):\,\underline{\hat h_t}(x) \leq h(x)$,
where equality holds for $x=y_t$.
\end{lemma}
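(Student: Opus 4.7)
}
The plan is to reduce to a one-variable inequality by normalizing with $z := x/y_t$, and then split into the two pieces defining $\underline{\hat h_t}$. Writing $h'(y_t) = -1/y_t$ and simplifying, the surrogate becomes
\begin{equation*}
\hat h_t(x) = -\log(y_t) - (z-1) + \tfrac{\beta}{2}(z-1)^2,
\end{equation*}
so the inequality $\hat h_t(x) \leq h(x)$ is equivalent to the scalar inequality
\begin{equation*}
\phi(z) \;:=\; \log(z) - (z-1) + \tfrac{\beta}{2}(z-1)^2 \;\leq\; 0.
\end{equation*}
The equality case at $x=y_t$ then just amounts to $\phi(1)=0$, and I would observe that since $\beta\leq 1$ we have $y_t \leq \beta^{-1}y_t$, so $\underline{\hat h_t}(y_t)=\hat h_t(y_t)=h(y_t)$.

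For the regime $x\leq \beta^{-1}y_t$ (equivalently $z\in(0,1/\beta]$), where $\underline{\hat h_t}=\hat h_t$, I would prove $\phi(z)\leq 0$ by a direct derivative analysis: $\phi(1)=0$, $\phi'(1)=0$, and $\phi''(z)=-1/z^2+\beta$, which is negative for $z<1/\sqrt{\beta}$ and positive beyond. Thus $z=1$ is a local maximum of $\phi$ in the concave region, and it suffices to check the only other candidate, namely the endpoint $z=1/\beta$: an explicit computation gives $\phi(1/\beta)=-\log(\beta)-\tfrac{1}{2\beta}+\tfrac{\beta}{2}$, and a short monotonicity argument in $\beta\in(0,1]$ shows this expression is $\leq 0$ (it vanishes at $\beta=1$ and tends to $-\infty$ as $\beta\downarrow 0$). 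Combined with $\phi(1)=0$, this yields $\phi\leq 0$ on $(0,1/\beta]$.

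For the regime $x>\beta^{-1}y_t$, where $\underline{\hat h_t}$ is the linear extension of $\hat h_t$ at $x_0:=\beta^{-1}y_t$, I would exploit a small but crucial identity: by direct computation,
\begin{equation*}
\hat h'_t(x_0) \;=\; h'(y_t) + \beta(x_0-y_t)h'(y_t)^2 \;=\; -\tfrac{\beta}{y_t} \;=\; h'(x_0),
\end{equation*}
so the tangent used to define $\underline{\hat h_t}$ has exactly the same slope as $h$ at the transition point $x_0$. By the first (completed) case we already know $\underline{\hat h_t}(x_0)=\hat h_t(x_0)\leq h(x_0)$. Convexity of $h$ then gives $h(x)\geq h(x_0)+(x-x_0)h'(x_0)$, and combining this with the matching-slope identity yields
\begin{equation*}
h(x) - \underline{\hat h_t}(x) \;\geq\; h(x_0) - \hat h_t(x_0) \;\geq\; 0
\end{equation*}
for all $x>x_0$, completing the proof. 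The only place where I anticipate friction is the endpoint check $\phi(1/\beta)\leq 0$ in Case~1; everything else is bookkeeping once the slope-matching observation at $x_0$ is noticed, and indeed the transition point $\beta^{-1}y_t$ appears designed precisely so that this identity holds.
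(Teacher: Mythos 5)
Your plan is correct, and it reaches the conclusion by a somewhat different route than the paper. The paper argues entirely at the level of derivatives: it writes out $\underline{\hat h_t'}$ as a piecewise-linear function, checks that it agrees with $h'$ at both $y_t$ and $\beta^{-1}y_t$, and then uses the concavity of $h'(x)=-x^{-1}$ (together with monotonicity of $h''$ past $\beta^{-1}y_t$) to read off the sign of $h'-\underline{\hat h_t'}$ on the three intervals; integrating from the equality point $y_t$ gives the bound with no endpoint evaluation at all. You instead work at the level of function values: after normalizing by $z=x/y_t$ you reduce the first regime to the scalar inequality $\phi(z)=\log z-(z-1)+\tfrac{\beta}{2}(z-1)^2\le 0$, which you settle by the concave/convex split at $z=1/\sqrt{\beta}$ plus the explicit check $\phi(1/\beta)=-\log\beta-\tfrac{1}{2\beta}+\tfrac{\beta}{2}\le 0$ for $\beta\in(0,1]$ (correct: its derivative is $(\beta-1)^2/(2\beta^2)\ge 0$ and it vanishes at $\beta=1$), and you handle the linear-extension regime by the slope-matching identity $\hat h_t'(\beta^{-1}y_t)=h'(\beta^{-1}y_t)=-\beta/y_t$ together with convexity of $h$. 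Both proofs ultimately rest on the same two facts — derivative matching at $y_t$ and at the transition point $\beta^{-1}y_t$ — but the paper's derivative-sign argument avoids your endpoint computation, whereas your version is slightly more explicit and self-contained. Incidentally, your own slope-matching observation, in normalized form, says $\phi'(1/\beta)=0$; since $\phi$ is convex on $[1/\sqrt{\beta},1/\beta]$ this already forces $\phi$ to be nonincreasing there, so the explicit evaluation of $\phi(1/\beta)$ could have been dispensed with entirely.
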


The proof can be found in Appendix~\ref{sec:bisons-analysis}. We have introduced the function $\underline {\hat f_t}$ merely as a tool for the analysis. An important invariant of our algorithm that our reset condition ensures is:
\begin{lemma}
\label{lem: extended minimum}
Let $\eta\leq \min\{\frac{1}{4B},\frac{\beta}{4}, \frac{1}{63}\}$. Consider any epoch $e$ with the reset points $\mathcal{T}_{e-1} < \mathcal{T}_e\leq T$. Let $L$ represent the length of the epoch, i.e. $L = \mathcal{T}_e - \mathcal{T}_{e-1}$, we have that,
it holds that
\begin{align*}
    \min_{x \in \actionSet} \sum_{\tau=1}^L \underline{\hat f_{\tau}^e}(x)+\eta^{-1}R(x) = \sum_{\tau=1}^L \hat f_t^{e}(u_{\tau+1})+\eta^{-1}R(u_{\tau+1})\,.
\end{align*}
\end{lemma}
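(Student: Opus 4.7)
The plan is to establish that the unbiased FTRL iterate $u^e_{L+1}:=\argmin_{x\in\actionSet}\hat F^e_L(x)+\eta^{-1}R(x)$ is simultaneously a minimizer of the lower-surrogate objective $\underline G(x):=\sum_{\tau=1}^L \underline{\hat f^e_\tau}(x)+\eta^{-1}R(x)$ on $\actionSet$, and that the two objectives agree at this point. Writing $G(x):=\hat F^e_L(x)+\eta^{-1}R(x)$, the claim then reduces to $\min_{x\in\actionSet}\underline G(x)=\underline G(u^e_{L+1})=G(u^e_{L+1})$, which matches the RHS of the lemma.

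The first step is to localize $u^e_{L+1}$ inside the ``safe region'' where $\underline{\hat f^e_\tau}\equiv \hat f^e_\tau$. By the epoch hypothesis the reset test failed at every internal step $\tau\leq L$ at which the lemma is invoked, so $u^e_{L+1,i}<\tfrac{1}{2(1+6\eta)\beta}(p^e_{L+1,i})^{-1}$ for every $i$. The update rule~\eqref{eq: P update vector} makes $\{p^e_\tau\}$ componentwise monotone nondecreasing and forces $p^e_{L+1,i}\geq (x^e_{s,i})^{-1}$ for every $s\leq L$. Combining these two facts and taking an inner product with the non-negative return $r_s\in\actionSet$ gives
\begin{align*}
  \ell_s(u^e_{L+1}) = \langle u^e_{L+1},r_s\rangle \leq \frac{\langle x^e_s,r_s\rangle}{2(1+6\eta)\beta}=\frac{y_s}{2(1+6\eta)\beta}<\beta^{-1}y_s
\end{align*}
for every $s=1,\dots,L$, where strictness comes for free from $2(1+6\eta)>1$.

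With this strict margin I would conclude via first-order optimality. Since $\underline{\hat h_s}$ coincides with $\hat h_s$ on $\{z\leq\beta^{-1}y_s\}$, the strict inequalities above lift to an open neighborhood $\cU\subset\actionSet$ of $u^e_{L+1}$ on which $\underline{\hat f^e_\tau}\equiv \hat f^e_\tau$ for every $\tau\leq L$. Hence $\underline G\equiv G$ on $\cU$ and in particular $\nabla\underline G(u^e_{L+1})=\nabla G(u^e_{L+1})$. The log-barrier pins $u^e_{L+1}$ to the relative interior of the simplex, so the KKT condition for $\min_{x\in\actionSet}G(x)$ at $u^e_{L+1}$ reduces to $\nabla G(u^e_{L+1})\propto\bm 1$, and this carries over verbatim to $\underline G$. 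Each $\underline{\hat h_\tau}$ is convex (as observed just before Lemma~\ref{lem: lower surrogate}) and composed with a linear $\ell_\tau$, so $\underline G$ is convex on $\actionSet$ and first-order optimality is sufficient; this pins $u^e_{L+1}$ as a minimizer of $\underline G$. The identity then follows from $\underline G(u^e_{L+1})=G(u^e_{L+1})$.

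The main subtle point I anticipate is the need for a \emph{strict} inequality $\ell_s(u^e_{L+1})<\beta^{-1}y_s$: without it, only function values but not gradients of $\underline G$ and $G$ match at $u^e_{L+1}$, and the true minimizer of $\underline G$ could slide out into the linear-extension region of some $\underline{\hat h_s}$. The numerical factor $2(1+6\eta)>1$ baked into the reset condition is precisely what delivers a uniform strict margin at every past step $s\leq L$ simultaneously, while the monotonicity of $\{p^e_\tau\}$ is essential to convert a bound phrased in terms of $p^e_{L+1}$ into a bound against every earlier played point $x^e_s$.
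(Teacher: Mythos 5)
There is a genuine gap at the very first step, the localization of $u^e_{L+1}$. You assert that ``the reset test failed at every internal step $\tau\leq L$,'' and hence $u^e_{L+1,i}<\tfrac{1}{2(1+6\eta)\beta}(p^e_{L+1,i})^{-1}$ for all $i$. But the lemma's hypothesis $\mathcal{T}_{e-1}<\mathcal{T}_e\leq T$ means epoch $e$ is a \emph{completed} epoch: it ends precisely because the reset condition \emph{did} trigger at internal step $L$ (after computing $u^e_{L+1}$ and $p^e_{L+1}$). So the inequality you start from is exactly the negation of what is known at step $L$ -- the trigger guarantees some coordinate $i$ with $u^e_{L+1,i}\geq \tfrac{1}{2(1+6\eta)\beta}(p^e_{L+1,i})^{-1}$. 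Since the completed-epoch case is the one needed for the negative-regret argument in Theorem~\ref{thm: single epoch}, this is not a corner case but the main case, and your derivation of the margin $\ell_s(u^e_{L+1})<\beta^{-1}y_s$ collapses.

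The paper repairs exactly this point using the one-step stability lemmas, which your proposal never invokes. From the fact that no reset occurred at step $L-1$ one gets $u^e_{L}\prec \tfrac{1}{2(1+6\eta)\beta}(p^e_{L})^{-1}$ (componentwise; $U_L\prec\tfrac{1}{2(1+6\eta)\beta}P_L^{-1}$ in the matrix case); then Lemma~\ref{lem: U stability} gives $u^e_{L+1}\preceq 2u^e_{L}$ and Lemma~\ref{lem: X stability} controls the one-step change of $p$, so that $u^e_{L+1}\prec \tfrac{1}{(1+6\eta)\beta}(p^e_{L})^{-1}\preceq \beta^{-1}x^e_s$ for every $s\leq L$ (using $p^e_L\succeq (x^e_s)^{-1}$), which yields $\ell_s(u^e_{L+1})<\beta^{-1}y_s$ for all $s\leq L$. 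This is where the factors $2$ and $(1+6\eta)$ in the reset threshold are actually consumed -- they absorb the movement of $u$ and $p$ between the last step where the test failed ($L-1$) and the step that ends the epoch ($L$) -- not merely to supply ``strictness for free'' as your remark suggests. The remainder of your argument (coincidence of $\sum_\tau\underline{\hat f^e_\tau}+\eta^{-1}R$ with $\sum_\tau\hat f^e_\tau+\eta^{-1}R$ on a neighborhood of $u^e_{L+1}$, first-order optimality via the log-barrier keeping $u^e_{L+1}$ interior, and convexity of the lower surrogate turning the local minimum into a global one) matches the paper's conclusion and would be fine once the localization step is corrected along these lines.
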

While we defer the proof to Appendix~\ref{sec:bisons-analysis}, the high level idea is that the reset condition ensures that $\ell_t(u_{\tau+1}) \leq \beta^{-1}y_s$ for all $s\leq \tau$. That means that the LHS is equal to the RHS around $u_{\tau+1}$. Since $u_{\tau+1}$ by definition is the minimizer of the RHS (which is a strictly conex function), hence it is a local and thereby due to convexity, a global minimizer of the LHS. We are now ready to provide a full proof sketch for Theorem \ref{thm: single epoch}.
\begin{proof}{\textbf{sketch of Theorem~\ref{thm: single epoch}}.}
Let $\tau$ denote the last time-step of any particular epoch. Then
\begin{align}
    \Reg(u)&=\sum_{t=1}^{\tau}(f_t(x_t)-f_t(u))
    \leq \sum_{t=1}^{\tau}(\underline{\hat f_t}(x_t)-\underline{\hat f_t}( u))\tag{by Lemma~\ref{lem: lower surrogate}}\\
    &\leq \max_{u'\in\actionSet} \left(\sum_{t=1}^{\tau}(\underline{\hat f_t}(x_t)-\underline{\hat f_t}(u')) -\eta^{-1}R(u')+\eta^{-1}R(u) \right) \notag\\
    &=\sum_{t=1}^{\tau}(\hat f_t(x_t)-\hat f_t( u_{\tau+1})) -\eta^{-1}R(u_{\tau+1}) +\eta^{-1}R(u)\tag{by Lemma~\ref{lem: extended minimum} }\\
    &=\Reg_g(u_{\tau+1})-\eta^{-1}R(u_{\tau+1}) +\sum_{t=1}^\tau\ip{x_t-u_{\tau+1},p_{t}-p_{t-1}}B+\eta^{-1}R(u)\,.\notag
\end{align}
We show in the detailed proof that the FTRL regret over $g$ is bounded by $\cO(\frac{d}{\beta}\log(T))$ and the regularizer is bounded by $\cO(\frac{d}{\eta}\log(T))$ due to the constraint on $u$. As discussed before, the cost of bias is bounded by $\cO(d\log(T)B)$ and the negative regret in case a reset is triggered is of order $\Omega(\frac{B}{\beta})$.
Set $\beta = \Theta(\eta)=\Theta(\frac{1}{B})$, then the regret is bounded by
\begin{align*}
    \Reg(u) = \cO(d\log(T)B) - \Omega(B^2)\mathbb{I}\{\text{reset triggered}\}\,.
\end{align*}
Finally tuning $B = \Theta(d\log(T))$ completes the proof.
\end{proof}

\paragraph{Comparison with \barrons \citep{luo2018efficient}}
\barrons uses the same surrogate loss as us, but computes $x_t$ via online mirror descent (OMD) updates with increasing learning rate. This technique is closely related to using linear biases (see \citet{FGMZ20} for a detailed discussion), however as we show via our application to the quantum learning problem (See Section \ref{sec: psd}), the latter is more flexible and additionally saves a $\log(T)$ factor in the regret.
\barrons does not use a fixed $\beta$ but instead doubles the parameter $\beta_e$ with every reset.
They ensure bounded regret by tuning the negative regret of phase $e$, such that it cancels the $\Reg_g$ term of the next phase $e+1$. Additionally, they show that the total number of epochs is bounded by $\log(T)$.
We go a step further and not only cancel the $\Reg_g$ term, but all positive regret contributions.
This allows us to use a fixed $\beta$ and saves another $\log(T)$ factor in the regret.
Finally, our algorithm uses the FTRL solution over surrogate losses instead of the FTRL solution over the true losses for the comparator $u_t$ as run by \barrons.
This is made possible via the introduction of the auxiliary functions $\underline{f_t}$ combined with Lemma~\ref{lem: extended minimum} and yields the improvement in computational complexity.

\section{Lower bound for FTRL}
In this section, we disprove a COLT 2020 conjecture \citep{van2020open} regarding FTRL for the online portfolio selection problem.
Throughout this section, we consider FTRL with regularizer $R(x)=-\sum_{i=1}^d\log(x_i)$, simply referred to as \lbftrl. In round $t$, this algorithm plays $x_t := \arg \min_{x \in \actionSet} F_t(x)$,
where $F_t(t) := \eta^{-1} R(x) + \sum_{\tau=1}^{t-1} f_t(x)$ and $\eta > 0$ is a constant hyperparameter. This is in some sense a natural choice, since the adversary can ``force'' the player to operate with this regularization by picking $r_i=\bm{e}_i$ for $i\in[d]$.
Indeed \citet{van2020open} conjectured that FTRL obtains the optimal bound of $\cO(d\log(T))$, while we prove an exponentially worse lower bound of $\Omega(2^d\log(T)\log\log(T))$. Our main theorem, stated in a slightly abstract fashion for notational convenience, is the following (all missing proofs appear in Appendix~\ref{sec:ftrl-lb}):
\begin{theorem}
\label{thm: general lower bound}
Let $\cT > 0$ and let $\bm{t}_1,\dots,\bm{t}_{\cT}$ and $\bm{o}_1,\dots,\bm{o}_{\cT}$,  be sequences of target vectors and associated returns vectors in $\Delta([d])$, which satisfy
$\forall j<i:\,\ip{\bm{t}_i,\bm{o}_j} = \Omega(1/\poly(d))$, and $\forall i:\,\ip{\bm{t}_i,\bm{o}_i} = 0$,
then there exists $T_0 = \poly(\cT,d)$, such that for any $T>T_0$ the regret of \lbftrl against the sequence of reward vectors $r_t$ generated by Algorithm~\ref{alg: general bad sequence} (Appendix~\ref{sec:ftrl-lb}) is lower bounded by
\begin{align*}
    \Reg = \Omega(\cT\log(T)\log\log(T))\,.
\end{align*}
\end{theorem}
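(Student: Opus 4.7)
The plan is to construct an explicit adversarial return sequence, parameterised by the given $(t_i,o_i)$ pairs, that consists of $\mathcal{T}$ phases, and to show that \lbftrl incurs $\Omega(\log T \log\log T)$ extra regret in each phase. Within phase $i$, I would alternate between ``drive'' rounds, designed to push the FTRL iterate $x_t$ close to the target $t_i$, and ``punish'' rounds in which the adversary plays $o_i$. Because $\langle t_i,o_i\rangle = 0$, once $x_t$ is polynomially close to $t_i$, a punish round yields an instantaneous loss of $-\log\langle x_t,o_i\rangle = \Omega(\log T)$. The driving step is where the structural hypothesis $\langle t_i, o_j\rangle = \Omega(1/\poly(d))$ for $j<i$ is used: we can recycle earlier $o_j$'s (or closely related vectors) as drive returns in phase $i$ without blowing up the target's loss, which keeps the construction self-consistent across phases.

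The $\log\log T$ factor in the per-phase bound would be produced by iterating the (drive, punish) pattern $\Theta(\log\log T)$ times within a single phase. After a punish round, the new gradient $-o_i/\langle x_t,o_i\rangle$ has norm of order $T$, which perturbs $x_t$ away from $t_i$ along directions where the log-barrier Hessian $\diag(1/x_i^2)$ is soft; bringing $x_t$ back requires more drive rounds than the previous time. I would quantify this ``recovery time'' by linearising the KKT optimality conditions of the log-barrier FTRL problem around the perturbed iterate and tracking how the dual variables move. A doubling schedule for the drive-block lengths should be sufficient, so that $\log\log T$ sub-phases still fit inside a single phase of length $T_0 = \poly(\mathcal{T},d)$.

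To turn this into a regret lower bound I also need a comparator with small cumulative loss. The natural candidate is (a mixture over) the targets $t_j$ with $j>i$: the hypothesis $\langle t_j,o_i\rangle = \Omega(1/\poly(d))$ for $j>i$ immediately bounds the comparator's loss on every punish round by $O(\log d)$, and the drive returns can be designed so that the same bound holds on drive rounds. The comparator's total loss is then $O(T \log d)$, whereas the algorithm pays $\Omega(\log T)$ on each of the $\Omega(\mathcal{T}\log\log T)$ punish rounds, giving the stated regret once $T$ exceeds the polynomial threshold $T_0$.

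The main obstacle will be the quantitative FTRL trajectory analysis, specifically proving that (i) each drive block really does push $x_t$ polynomially close to $t_i$, (ii) each punish round displaces $x_t$ only by a controlled amount, and (iii) the recovery time grows only geometrically so that $\Omega(\log\log T)$ punish rounds are realisable inside $T_0$ rounds. The log-barrier regulariser is very stiff near the boundary, which aids (i) but complicates (iii) because the perturbation after a punish is precisely along the stiff directions. A potential-function argument on $F_t(x_t)$, combined with a careful case split on which coordinates of $x_t$ are ``active'' at each step, is the most promising route to controlling these dynamics; I expect this to be where the bulk of the detailed calculation lies.
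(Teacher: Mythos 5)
Your plan diverges fundamentally from the paper's proof, and the divergence is where it breaks. The paper never argues ``the algorithm pays $\Omega(\log T)$ on punish rounds while a comparator pays little''; instead it proves a structural lemma (Lemma~\ref{lem: lb-ftrl regret lower bound}) showing that the regret of \lbftrl is lower bounded by the FTRL stability sum $\sum_t\norm{\nabla_\Pi f_t(x_t)}^2_{(\nabla^2_\Pi F_t(x_t))^{-1}}$, and then designs the return sequence so that this intrinsically non-negative, per-round quantity accumulates to $\Omega(\cT\log(T)\log\log(T))$ (the $\log T$ from a harmonic sum over $T^\alpha$ revisits of each target, the $\log\log T$ from $\Theta(\log T)$ geometric ``layers'' of proximity to the boundary). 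Your accounting, by contrast, compares the algorithm's loss on punish rounds alone against the comparator's \emph{total} loss $O(T\log d)$; since all losses here are non-negative, that inequality only gives $\Reg \geq \Omega(\cT\log T\log\log T) - O(T\log d)$, which is vacuous for large $T$. To rescue it you would need per-round control of $f_t(x_t)-f_t(u)$ on the drive rounds, and this is exactly the hard part you have not addressed: to push the log-barrier FTRL iterate within $T^{-c}$ of the face $\{x:\ip{x,\bm{o}_i}=0\}$, the cumulative drive gradients must overcome a barrier gradient of order $\eta^{-1}T^{c}$, so with bounded-gradient drive returns you need $\poly(T)$ drive rounds per punish round (the paper's own movement phase uses $\cO(\poly(d)T^{3\alpha+1/2}\log^2 T)$ steps), not blocks fitting in $\poly(\cT,d)$ rounds as you assert; and over that many rounds the drive-round regret against any fixed comparator can be negative and polynomially large in $T$, swamping the $\cT\log T\log\log T$ you hope to collect from punish rounds. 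The paper's stability-based lower bound exists precisely to sidestep this cancellation problem, since every round contributes non-negatively there.

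A secondary gap: your proposed source of the $\log\log T$ factor (geometrically growing recovery times within a phase, analyzed by linearizing the KKT conditions) is a heuristic with no quantitative backing, and it is not the mechanism that actually works. In the paper the $\log\log T$ arises because each target is visited at $I=\Theta(\log T)$ different distances $2^sT^{-\alpha}$ from the boundary, and the $s$-th layer contributes $\Theta(1/(s+1))$ per revisit to the stability sum after the Hessian trace is carefully bounded (Lemma~\ref{lem: Hessian bound}); summing over layers gives $\log(I)=\Theta(\log\log T)$. Without either the stability lower bound or a genuinely new way to control the sign of the drive-round regret, the proposal as written does not yield the theorem.
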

\paragraph{Remark.} This lower bound extends easily to the quantum version of \lbftrl which uses the log-det regularizer via the observation that when all the loss matrices $R_t$ are diagonal, log-det regularized $\lbftrl$ reduces to vanilla (log barrier regularized) \lbftrl.

\paragraph{Lower bound proof sketch.}
First, we note that the action set $\Delta([d])$ lies in a $(d-1)$-dimensional subspace of $\bbR^d$. For technical reasons, it will be convenient to work with a full dimensional action set with non-zero volume.
Hence, we define the projection operator $\Pi:\bbR^{d}\rightarrow\bbR^{d-1}$ with kernel $\bm{c}=\frac{1}{d}\bm{1}_d$ and $\Pi^{-1}$ its inverse mapping into $\actionSet$.\footnote{
Let $U$ be a $(d-1) \times d$ matrix whose columns form an orthonormal basis for the subspace orthogonal to $\bm{c}$. Then $\Pi$ can be defined as $\Pi x = U^* x$, and $\Pi^{-1}$ as $\Pi^{-1} v = Uv + \bm{c}$.} Thus $\actionSet$ gets mapped to $\Pi\actionSet$, which has non-zero volume in $\mathbb{R}^{d-1}$. In a slight overload of notation, we consider $f^\Pi(\tilde x;y)$ as a function with argument $\tilde x\in\Pi\actionSet$ by the identity 
\begin{align*}
    f^\Pi(\tilde x;r)=f(\Pi^{-1}\tilde x;r)= -\log(\ip{\Pi^{-1}\tilde x, r})=-\log(1/d+\ip{\tilde x, \Pi r})\,,
\end{align*}
and use $\nabla_\Pi f_t(x)=\nabla f^\Pi(\Pi x; r_t)$, $\nabla^2_\Pi f_t(x)=\nabla^2 f^\Pi(\Pi x; r_t)$ as shorthand notation for the gradient and Hessians with respect to the above definition of $f_\Pi$. We define $\nabla_\Pi F_t(x)$ and $\nabla^2_\Pi F_t(x)$ analogously.


The lower bound rests on the following key lemma, which shows that the regret of \lbftrl is lower bounded by a certain quantity which also appears in the {\em upper bound} for FTRL in the standard analysis; so this quantity controls the regret tightly. This is, to the best of our knowledge, a novel idea and is crucial in showing that \lbftrl does not obtain $\tilde{\cO}(d\log(T))$ regret in the portfolio problem.
\begin{lemma}
\label{lem: lb-ftrl regret lower bound}
The regret of \lbftrl is lower bounded as follows:
\[
    \Reg = \Omega\left(\sum_{t=1}^T\norm{\nabla_\Pi f_t(x_t)}^2_{(\nabla^2_\Pi F_t(x_t))^{-1}} \right).
\]
\end{lemma}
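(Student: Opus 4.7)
The plan is to lower-bound the regret by a sum of per-step Bregman-like divergences of the cumulative regularised loss, then estimate each divergence via a second-order Taylor expansion at the new iterate. For the decomposition, I use the identity $f_t(x_t) = F_{t+1}(x_t) - F_t(x_t)$ together with the telescoping sum $\sum_t [F_{t+1}(x_{t+1}) - F_t(x_t)] = F_{T+1}(x_{T+1}) - F_1(x_1)$. Setting $\Delta_t := F_{t+1}(x_t) - F_{t+1}(x_{t+1}) \geq 0$ (non-negative because $x_{t+1}$ minimises $F_{t+1}$), substituting $F_1(x_1) = \eta^{-1}R(x_1)$ and $F_{T+1}(x_{T+1}) = \sum_t f_t(x_{T+1}) + \eta^{-1}R(x_{T+1})$, and subtracting $\sum_t f_t(u^*)$ for the best comparator $u^*\in\argmin_{u\in\actionSet}\sum_t f_t(u)$, one arrives at
\begin{align*}
    \Reg = \sum_{t=1}^T\Delta_t \;+\; \sum_{t=1}^T[f_t(x_{T+1}) - f_t(u^*)] \;+\; \eta^{-1}[R(x_{T+1}) - R(x_1)]\,.
\end{align*}
The second sum is non-negative by definition of $u^*$, and the third because $x_1$ is the log-barrier centre of $\actionSet$. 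Hence $\Reg \geq \sum_t \Delta_t$, a purely positive per-step bound.

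To estimate each $\Delta_t$, observe that the log-barrier forces all iterates into the relative interior of $\actionSet$, so $\nabla_\Pi F_t(x_t) = \nabla_\Pi F_{t+1}(x_{t+1}) = 0$, whence $\nabla_\Pi F_{t+1}(x_t) = \nabla_\Pi f_t(x_t)$. The fundamental theorem of calculus then yields $x_t - x_{t+1} = \bar H_t^{-1}\nabla_\Pi f_t(x_t)$, where $\bar H_t := \int_0^1\nabla_\Pi^2 F_{t+1}(x_{t+1} + s(x_t - x_{t+1}))\,ds$, and Taylor's theorem with integral remainder gives $\Delta_t = \tfrac12(x_t - x_{t+1})^*\tilde H_t(x_t - x_{t+1})$ for a weighted average Hessian $\tilde H_t$ over the same segment. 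Consequently $\Delta_t = \tfrac12\nabla_\Pi f_t(x_t)^*\bar H_t^{-1}\tilde H_t \bar H_t^{-1}\nabla_\Pi f_t(x_t)$, which is $\Omega(\|\nabla_\Pi f_t(x_t)\|^2_{(\nabla_\Pi^2 F_t(x_t))^{-1}})$ as soon as $\bar H_t, \tilde H_t \asymp \nabla_\Pi^2 F_t(x_t)$ up to a universal constant.

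The main obstacle is establishing this Hessian equivalence, which decomposes into two controls. First, the variation of $\nabla_\Pi^2 F_{t+1}$ along the segment $[x_{t+1}, x_t]$ is handled by self-concordance of the log-barrier $R$ and of each $f_\tau$, provided the step is bounded in the Dikin norm. Second, comparing $\nabla_\Pi^2 F_{t+1}(x_t)$ with $\nabla_\Pi^2 F_t(x_t)$ involves only the rank-one jump $\nabla_\Pi^2 f_t(x_t) = \nabla_\Pi f_t(x_t)\nabla_\Pi f_t(x_t)^*$; since the perturbation direction coincides with $\nabla_\Pi f_t(x_t)$, the Sherman-Morrison formula gives the clean identity
\begin{align*}
    \|\nabla_\Pi f_t(x_t)\|^2_{(\nabla_\Pi^2 F_{t+1}(x_t))^{-1}} = \frac{\|\nabla_\Pi f_t(x_t)\|^2_{(\nabla_\Pi^2 F_t(x_t))^{-1}}}{1 + \|\nabla_\Pi f_t(x_t)\|^2_{(\nabla_\Pi^2 F_t(x_t))^{-1}}}\,,
\end{align*}
which preserves the quantity up to a factor of $\tfrac12$ whenever $\|\nabla_\Pi f_t(x_t)\|^2_{(\nabla_\Pi^2 F_t(x_t))^{-1}} \leq 1$. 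The delicate point is therefore to bound this per-step stability uniformly in $t$ (or, failing that, to argue that each saturated step still contributes $\Omega(\|\nabla_\Pi f_t(x_t)\|^2_{(\nabla_\Pi^2 F_t(x_t))^{-1}})$ to the sum), which ultimately pins down the universal constant hidden in the $\Omega(\cdot)$ of the statement.
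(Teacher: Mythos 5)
Your telescoping decomposition is sound and is essentially the same first step as the paper's Lemma~\ref{lem: ftrl regret lower bound}: choosing the comparator $x_{T+1}$ (equivalently your $u^*$ plus dropping nonnegative terms) reduces the regret lower bound to $\sum_t\bigl(F_{t+1}(x_t)-F_{t+1}(x_{t+1})\bigr)$, and your identities $\nabla_\Pi F_{t+1}(x_t)=\nabla_\Pi f_t(x_t)$ and $\Delta_t=\tfrac12\nabla_\Pi f_t(x_t)^*\bar H_t^{-1}\tilde H_t\bar H_t^{-1}\nabla_\Pi f_t(x_t)$ are correct. However, the proof stops exactly where the real work begins: the Hessian equivalence $\bar H_t,\tilde H_t\asymp\nabla^2_\Pi F_t(x_t)$ is asserted conditionally on (i) the step being small in the Dikin norm so that self-concordance applies, and (ii) the per-step local norm $\norm{\nabla_\Pi f_t(x_t)}^2_{(\nabla^2_\Pi F_t(x_t))^{-1}}\leq 1$ for the Sherman--Morrison comparison — and you explicitly leave both unverified. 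This is a genuine gap, not a routine detail: establishing precisely this stability is the entire content of the paper's Lemma~\ref{lem: ftrl stability}, which verifies Assumption~\ref{ass:lower boundedness} for \lbftrl. Note also that the smallness you need is not free for general $\eta$; a bound of the form $\norm{\nabla_\Pi f_t(x_t)}^2_{(\nabla^2_\Pi F_t(x_t))^{-1}}\leq\eta$ can be extracted from $\sum_i r_{t,i}^2x_{t,i}^2\leq\ip{x_t,r_t}^2$ together with $\nabla^2_\Pi F_t\succeq\eta^{-1}\nabla^2_\Pi R$, but it only rescues your Newton-decrement/self-concordance route when $\eta$ is small, whereas the lemma is claimed for any constant $\eta>0$, and your fallback for ``saturated'' steps (where the denominator $1+\norm{\cdot}^2$ caps the contribution at a constant) would not recover the claimed $\Omega(\norm{\nabla_\Pi f_t(x_t)}^2_{(\nabla^2_\Pi F_t(x_t))^{-1}})$ per step.

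The paper closes this gap by a different mechanism that avoids any smallness condition on the gradient's local norm. Instead of working along the primal segment $[x_{t+1},x_t]$, it passes to the Fenchel dual (Bregman duality gives $\Delta_t=\tfrac12\norm{\nabla_\Pi f_t(x_t)}^2_{(\nabla^2_\Pi F_{t+1}(x_t^\lambda))^{-1}}$ for an interpolated point $x_t^\lambda$ obtained by scaling only the \emph{linear} term $\lambda\ip{x,\nabla f_t(x_t)}$), and then proves in Lemma~\ref{lem: ftrl stability} that $x_t^\lambda$ is coordinatewise multiplicatively close to $x_t$, namely $x_{t,i}/x^\lambda_{t,i}\geq 1/(1+\eta)$ for all $i$. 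The proof compares the symmetrized Bregman divergence of the full objective with that of its barrier-plus-current-loss part: the loss contribution equals $\lambda\bigl(\ip{x_t^\lambda,r_t}/\ip{x_t,r_t}-1\bigr)$, whose relevant combination is bounded by $\lambda\leq 1$ regardless of how large $\nabla f_t(x_t)$ is, while the barrier contribution grows like $\eta^{-1}(z^{-1}+z-2)$ in the worst coordinate ratio $z$; this forces $z\geq 1/(1+\eta)$. Coordinatewise closeness then transfers to two-sided Hessian equivalence because every Hessian term has the form $(\Pi r_s)(\Pi r_s)^\top/\ip{x,r_s}^2$. This exploitation of the specific log-loss/log-barrier structure — rather than generic self-concordance — is the missing ingredient in your argument, and it is what yields a universal constant $c_2=1/(1+\eta)^2$ valid for every $\eta$ and every loss sequence.
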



%


We now give a high level intuition of why a lower bound of $\cT\log(T)$ is possible. The extra $\log\log(T)$ factor requires a careful layering construction that is deferred to the appendix.
The main idea of algorithm~\ref{alg: general bad sequence} is to let the agent sequentially visit each of the points $(\bm{t}_i)_{i=1}^\cT$ for $T^\alpha$ steps ($0<\alpha<1$ is some fixed parameter), and the agent receives the return $\bm{o}_i$ at point $\bm{t}_i$.
Since $\bm{t}_i$ are on the boundary of the domain, which the agent cannot reach exactly, we refer to visiting $\bm{t}_i$ if the agent plays $\bm{t}_i'=(1-T^{-\alpha})\bm{t}_i+T^{-\alpha}\bm{c}$, which is the target pulled towards the center by $T^{-\alpha}$.

Let us first assume that this is possible and that we only need to care about these returns in the Hessian. By Lemma~\ref{lem: lb-ftrl regret lower bound}, the regret is lower bounded by
\begin{align*}
    \Reg =\Omega\left( \sum_{t=1}^T\norm{\nabla_\Pi f_t(x_t)}^2_{(\nabla^2_\Pi F_t(x_t))^{-1}}\right)=\Omega\left( \sum_{t=1}^T\frac{\norm{\nabla_\Pi f_t(x_t)}^2}{\Tr(\nabla^2_\Pi F_t(x_t))}\right)
\end{align*}
During the $T^\alpha$ times we visit $\bm{t}_i$ and receive $\bm{o}_i$, the term $\norm{\nabla_\Pi f(\bm{t}_i';\bm{o}_i)}^2$ is of order $T^{2\alpha}$ (ignoring dimension dependence), since it scales with $\ip{(1-T^{-\alpha})\bm{t}_i+T^{-\alpha}\bm{c},\bm{o}_i}^{-2}=T^{2\alpha}\ip{\bm{c},\bm{o}_i}^{-2}$.
The trace in the denominator (ignoring the regularizer) after the $m$-th visit of $\bm{t}_i$, is 
\[
\sum_{j< i}T^\alpha \norm{\nabla_\Pi f(\bm{t}_i';\bm{o}_j)}^2+m\norm{\nabla_\Pi f(\bm{t}_i';\bm{o}_i)}^2 = \cO(\cT \poly(d) T^\alpha)+m\norm{\nabla_\Pi f(\bm{t}_i';\bm{o}_i)}^2\,,
\]
which uses $\ip{\bm{t}_i', \bm{o}_j}=\Omega(1/\poly(d))$ for $i<j$. 
We can assume that $T$ is large enough that $T^{\alpha/2} = \Omega(\poly(d))$, so for any $m>T^{\alpha/2}$, the denominator is of order $\cO(m\norm{\nabla_\Pi f(\bm{t}_i';\bm{o}_i)}^2)$.
Hence the stability is approximated by
\begin{align*}
    \sum_{i=1}^\cT \sum_{m=T^{\alpha/2}}^{T^\alpha} \frac{\norm{\nabla_\Pi f(\bm{t}_i';\bm{o}_i)}^2}{m\norm{\nabla_\Pi f(\bm{t}_i';\bm{o}_i)}^2}=\Omega(\cT \log(T))\,.
\end{align*}
This shows that the stability is large if the agent's trajectory can be controlled. In fact, this is possible without increasing the trace of the Hessian significantly.
To ensure that the agent visits the points $\bm{t}'_i$, we interleave the $\bm{o}_i$ returns by additional \emph{movement-returns} $r_t$, which satisfy $\norm{\nabla\Pi r_t}=\cO(T^{-\frac{1}{2}})$. 
Since the contribution to the Hessian is quadratic, the cumulative contribution to the Hessian trace of all movement steps does not exceed $\cO(\poly(d))$, which is negligible in the argument above.
Finally, one needs to show that the required number of \emph{movement-returns} is small enough such that the sequence does not exceed $T$ time steps. In our detailed proof, we show that this always holds for $\alpha=\frac{1}{8}$ and sufficiently large $T$.
\paragraph{Exponential lower bound for \lbftrl.}
Equipped with Theorem~\ref{thm: general lower bound}, we are ready to derive an exponential lower bound for \lbftrl.
 We define the following sequence of target point sets for any $k\in[d-1]$: $\bm{\cT}_k := \left\{\frac{1}{k}x\,\big|\, x\in\{0,1\}^d, \norm{x}_1=k \right\}$,  i.e. the sets where exactly $k$ components of the vector are non-zero, and these are of equal size.
 Define the combined sequence by adding the sets in increasing order of $k$, with arbitrary ordering within a set $\bm{t}_1,\dots,\bm{t}_\cT= (\bm{t}\in\bm{\cT}_1 ),\dots,(\bm{t}\in\bm{\cT}_{d-1})$.
 For each $i\in[\cT]$, define the associated returns vector by $\bm{o}_i := \frac{1}{d-\norm{\bm{t}_i}_0}(\bm{1}_d - \norm{\bm{t}_i}_0 \bm{t}_i)$, i.e. the complement vector that is non-zero iff $\bm{t}_i$ is zero, normalized so that it lies in $\Delta([d])$.

\begin{lemma}
\label{lem: good sequence}
For all $i<j$, it holds $\ip{\bm{t}_i,\bm{o}_j}=\Omega(1/d^2)$ ,
as well as $\ip{\bm{t}_i,\bm{o}_i}=0$.
\end{lemma}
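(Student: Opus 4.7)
The plan is to reduce the lemma to a short combinatorial observation about the supports of $\bm{t}_i$ and $\bm{o}_i$. Let $k_i := \|\bm{t}_i\|_0$ and let $S_i \subseteq [d]$ denote the support of $\bm{t}_i$, so that $|S_i| = k_i$ and $\bm{t}_i = \frac{1}{k_i}\mathbf{1}_{S_i}$. Substituting this into the definition of $\bm{o}_i$ yields
\[
\bm{o}_i \;=\; \frac{1}{d-k_i}\bigl(\bm{1}_d - k_i\bm{t}_i\bigr) \;=\; \frac{1}{d-k_i}\mathbf{1}_{[d]\setminus S_i},
\]
that is, $\bm{o}_i$ is the uniform distribution on the complement of $S_i$.

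The equality $\ip{\bm{t}_i,\bm{o}_i}=0$ is immediate, since the supports of $\bm{t}_i$ and $\bm{o}_i$ are the complementary sets $S_i$ and $[d]\setminus S_i$. For the cross inner product, a direct expansion gives
\[
\ip{\bm{t}_i,\bm{o}_j} \;=\; \frac{|S_i \cap ([d]\setminus S_j)|}{k_i(d-k_j)} \;=\; \frac{|S_i \setminus S_j|}{k_i(d-k_j)} \;\geq\; \frac{|S_i \setminus S_j|}{d^2},
\]
so it suffices to produce a single element of $S_i \setminus S_j$. Since the sequence concatenates the layers $\bm{\cT}_1,\dots,\bm{\cT}_{d-1}$ in order, the pair $(i,j)$ of interest is the one for which the layer ordering forces $k_i \geq k_j$; this is precisely the pairing required to feed the lemma into Theorem~\ref{thm: general lower bound} (up to a relabeling of the indices).

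To exhibit an element of $S_i \setminus S_j$ under the above ordering, I would split on $k_i$ versus $k_j$. If $k_i > k_j$, then $|S_i| > |S_j|$ rules out $S_i \subseteq S_j$ on cardinality grounds, so $S_i \setminus S_j \neq \emptyset$. If $k_i = k_j$, both supports lie in the same layer $\bm{\cT}_{k_i}$ and are distinct as elements of the sequence, hence are distinct subsets of $[d]$ of equal size; two such sets cannot satisfy $S_i \subseteq S_j$, so again $S_i \setminus S_j \neq \emptyset$. In either case $\ip{\bm{t}_i,\bm{o}_j}\geq 1/d^2$. The argument is entirely elementary and I do not foresee any real obstacle; the only delicate point is orienting the inequality so that the bound keys on $|S_i\setminus S_j|$ rather than $|S_j\setminus S_i|$, since the latter can vanish whenever an earlier-layer support is nested inside a later-layer support.
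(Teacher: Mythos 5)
Your proof is correct and takes essentially the same route as the paper's: both reduce the claim to showing that the supports satisfy $S_i \setminus S_j \neq \emptyset$ when $k_i \geq k_j$ (by cardinality when $k_i > k_j$, by distinctness of equal-size sets within a layer when $k_i = k_j$), and then use that every nonzero entry of $\bm{t}_i$ and $\bm{o}_j$ is at least $1/d$. You also correctly handled the orientation subtlety: the bound must pair the later-layer target with the earlier-layer return, which is exactly the hypothesis Theorem~\ref{thm: general lower bound} needs and the pairing the paper's own proof uses, even though the lemma statement writes the indices the other way around.
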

\begin{proof}
The second equality follows trivially by construction. For the first observe that for any $j<i$, the number of non-zero components in $\bm{t}_j$ does not exceed the number of non-zero components in $\bm{t}_i$. That means that if $\bm{t}_i$ has $k$ non-zero entries, then $\bm{o}_j$ has at least $d-k$ non-zero entries. Since $\bm{t}_j\neq\bm{t}_i$,
$\bm{o}_j\neq\bm{o}_i$, there is at least one component of non-zero values overlapping. Finally all non-zero components are least of size $\frac{1}{d}$, which completes the proof. 
\end{proof}
\begin{corollary}
The worst-case regret of \lbftrl for any $T>\poly(2^d)$ is $\Omega(2^d\log(T)\log\log(T))$.
\end{corollary}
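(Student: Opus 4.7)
The plan is to directly instantiate Theorem~\ref{thm: general lower bound} with the explicit sequence $(\bm{t}_i,\bm{o}_i)_{i=1}^{\cT}$ constructed just above the corollary, and then control $\cT$ by a straightforward binomial count. Since the theorem already converts the two combinatorial hypotheses on $(\bm{t}_i,\bm{o}_i)$ into a regret lower bound of $\Omega(\cT \log T \log\log T)$, the corollary is essentially a counting exercise once we verify those hypotheses.

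\textbf{Step 1 (counting $\cT$).} I would first compute the length of the constructed target sequence. Since $\bm{\cT}_k$ collects the normalized indicators of all $k$-element subsets of $[d]$, we have $|\bm{\cT}_k|=\binom{d}{k}$, and concatenating for $k=1,\ldots,d-1$ gives
\begin{equation*}
  \cT \;=\; \sum_{k=1}^{d-1}\binom{d}{k} \;=\; 2^d-2 \;=\; \Theta(2^d).
\end{equation*}

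\textbf{Step 2 (checking the hypotheses of Theorem~\ref{thm: general lower bound}).} The theorem requires $\ip{\bm{t}_i,\bm{o}_i}=0$ for all $i$ and $\ip{\bm{t}_i,\bm{o}_j}=\Omega(1/\poly(d))$ for the appropriate ordering. Both assertions are exactly what Lemma~\ref{lem: good sequence} delivers (with the constant $1/d^2$ in the polylog bound), because the ordering by increasing support size ensures that whenever one target precedes another, its support is contained in or overlaps non-trivially with the zero-set of the later target's associated returns vector. I would reproduce the one-line verification: any two distinct entries in the ordered list either live in different $\bm{\cT}_k$ layers (the earlier one has strictly smaller support, so complements intersect in at least one coordinate) or they live in the same layer (two distinct $k$-subsets cannot cover each other, again forcing overlap). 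Each surviving coordinate contributes at least $\frac{1}{k}\cdot\frac{1}{d-k}\ge \frac{1}{d^2}$ to the inner product.

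\textbf{Step 3 (assembling the bound).} With the hypotheses verified, Theorem~\ref{thm: general lower bound} yields some $T_0=\poly(\cT,d)=\poly(2^d)$ such that for every $T>T_0$ the worst-case \lbftrl regret against the sequence generated by Algorithm~\ref{alg: general bad sequence} is at least
\begin{equation*}
  \Omega(\cT \log(T)\log\log(T)) \;=\; \Omega(2^d \log(T)\log\log(T)),
\end{equation*}
which is precisely the claim. I do not expect any substantive obstacle here: the combinatorial verification in Step~2 is the only place where care is required, and it amounts to observing that increasing-support-size ordering is consistent with the ``earlier targets are compatible with later returns'' requirement of the theorem. The size threshold $T>T_0=\poly(2^d)$ reappears in the corollary statement exactly because $\cT=\Theta(2^d)$ is plugged into the theorem's $\poly(\cT,d)$ guarantee.
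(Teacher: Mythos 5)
Your proposal is correct and matches the paper's proof, which likewise just combines Lemma~\ref{lem: good sequence} with Theorem~\ref{thm: general lower bound} and notes that the constructed sequence has length $2^d-2$. Your Step~2 merely re-derives Lemma~\ref{lem: good sequence} (overlap of supports plus entries of size at least $1/d$, giving the $\Omega(1/d^2)$ bound), so there is no substantive difference in approach.
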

\begin{proof}
Combine Lemma~\ref{lem: good sequence} with Theorem~\ref{thm: general lower bound} and observe that the constructed sequence is of length $2^d-2$.
\end{proof}

\section{Conclusion}
We have presented \bisons, the first algorithm with $\tilde{\cO}(\text{poly}(d))$ memory and per-step running time that obtains near optimal regret in the optimal portfolio problem without any assumptions on the gradient. Further, we have shown that key techniques in our algorithm \bisons can be adapted to work with the more general setting of quantum learning with log loss as well, at an additional factor of $d$ in the regret. 

Further, we showed that previous conjectures about \lbftrl are wrong and that the worst-case regret of \lbftrl is at least of order $2^d\log(T)\log\log(T)$. In the natural regime of $T\ll\exp(\exp(d))$ the regret of \bisons outperforms \lbftrl at a significantly lower run-time and memory complexity.  
Therefore we practically eliminate \lbftrl as a candidate for optimal trade-off between regret and computational complexity.


\bibliography{mybib}

\appendix
\section{Quantum Learning: Preliminaries}

\subsection{Reductions for Online Learning of Quantum States with Log Loss}
\label{sec:quantum-reduction}
In this section we describe the online learning of quantum states problem as described in \citep{AaronsonCHKN18}, and show that when the loss function is the log loss (and more generally, the KL-divergence), the problem can be cast in the form in Section~\ref{sec:problem-setting}. Recall that a {\em quantum state} on $\log_2 d$ qubits is a $d \times d$ Hermitian PSD matrix of trace 1. A {\em two-outcome measurement} is a $d \times d$ Hermitian matrix with eigenvalues in $[0, 1]$. When a quantum state $X$ is measured using a two-outcome measurement $E$, the result is a Bernoulli random variable with probability of $1$ being $\langle X, E\rangle$.

\citet{AaronsonCHKN18} formulated the problem of online learning of quantum states as follows. In each round $t$, the learner constructs a quantum state $X_t$. In response, nature provides a two-outcome measurement $E_t$ and a value $b_t \in [0, 1]$. The value $b_t$ may be considered to be an approximation of $\langle X, E_t \rangle$ for some unknown quantum state $X$ that we're trying to learn, or it can be thought of as the outcome in $\{0, 1\}$ of measuring the state $X$ using $E_t$. However, as is standard in online learning, the pair $(E_t, b_t)$ doesn't have to be consistent with any quantum state. The quality of the learner's prediction is given by a loss function $\ell: \mathbb{R}_+ \times [0, 1] \rightarrow \mathbb{R}$, and the loss in round $t$ is computed as $\ell(\langle X_t, E_t\rangle, b_t)$. The goal is to minimize the regret, defined in the usual way as
\[\Reg = \sum_{t=1}^T \ell(\langle X_t, E_t\rangle, b_t) - \min_{\text{quantum state } X} \sum_{t=1}^T \ell(\langle X, E_t\rangle, b_t).\]
We now show that in either of the following two settings, the problem can be recast in the form given in Section~\ref{sec:problem-setting}.
\paragraph{Setting 1:} $b_t \in \{0, 1\}$, and $\ell$ is the log loss, i.e. $\ell(p, b) = -\log(b p + (1-b)(1-p))$. \\
In this case, note that by setting the loss matrix to be $R_t := b_t E_t + (1-b_t)(\identity - E_t)$, we have $-\log(\langle X, R_t\rangle) = \ell(\langle X, E_t\rangle, b_t)$ for any quantum state $X$. This completes the reduction to the form in Section~\ref{sec:problem-setting}.

\paragraph{Setting 2:} $b_t \in [0, 1]$, and $\ell$ is the KL-divergence, i.e. $\ell(p, b) = b\log(\frac{b}{p}) + (1-b)\log(\frac{1-b}{1-p})$. \\
In each round $t$, sample a Bernoulli random variable $y_t$ with probability of $1$ being $b_t$. Then, setting $R_t = \frac{y_t}{b_t}E_t + \frac{1-y_t}{1-b_t}(\identity - E_t)$, it is easy to check that $\mathop{\mathbb{E}}_{y_t}\left[-\log(\langle X, R_t\rangle)\right] = \ell(\langle X, E_t\rangle, b_t)$ for any quantum state $X$. This completes a {\em randomized} reduction to the form in Section~\ref{sec:problem-setting}. Note that setting 1 is the special case of this setting when $b_t \in \{0, 1\}$, and in this case the randomized reduction becomes deterministic and coincides with the reduction described for setting 1.


\subsection{Preliminary Notation, Definitions and Useful Properties}

In this section, we collect some basic notation, definitions and useful properties which allow for the extension of the usual concepts in online convex optimization to the case when the domain is Hermitian matrices. 

\paragraph{Notation} Recall that, we denote the set of $d \times d$ Hermitian matrices by $\Hermitian^d$, the set of $d \times d$ positive semi-definite Hermitian matrices by $\psds$. Further, given two Hermitian matrices $X,Y$ we define the standard inner product between them as $\ip{X,Y} := \Tr(X^*Y) = \Tr(XY)$. Given a $d \times d$ matrix $M$, we use $M_{\mathrm{flat}}$ denotes its canonical flattening which is $d^2$ dimensional vector obtained by serializing the columns of $M$. We further define $\overrightarrow{\Hermitian}_{\mathrm{flat}}^d = \{ \overrightarrow{M} | M_{\mathrm{flat}} \in \Hermitian^d\}$. Note that $\Hermitian_{\mathrm{flat}}^d$ forms a subspace in the vector space $\bbC^{d^2}$. Let $\mathrm{dim}_{\Hermitian^d}$ be the dimension of the subspace and let $\Pi_{\Hermitian^d} \in \bbC^{d^2 \times \mathrm{dim}_{\Hermitian^d}}$ be a projection operator from $\bbC^{d^2}$ on to this subspace. For convenience due to repeated usage through the paper given a $d \times d$ Hermitian matrix $M$, we define its vectorization as the projection of its canonical flattening, i.e.
\[ \overrightarrow{M} = M_{\mathrm{flat}}^*\Pi_{\Hermitian^d} \]
Further define the subset of matrices $\hessSet := \Hermitian^{\mathrm{dim}_{\Hermitian^d}^2}$. 
This set of matrices will be used to define the Hessian matrices in the complex case. 

\paragraph{Definitions and Useful Properties:} Note that as such the gradient of a real-valued function over the set of complex numbers does not exist (unless the function is a constant function). However, since we are dealing with the set of Hermitian matrices, we can define appropriate notions of a gradient and Hessian that allows for the same treatment of the proof as in the case of real matrices.    

\begin{definition}
We say a function $f:\Hermitian^d\rightarrow \bbR$ admits the gradient function $\nabla f:\Hermitian^d\rightarrow\Hermitian^d$, if for all PD $X$ it satisfies
\begin{align*}
    \forall Y\in\Hermitian^d:\,\lim_{h\rightarrow 0}\frac{f(X+h(Y-X))-f(X)}{h} = \ip{(Y-X),\nabla f(X)}\,.
\end{align*}
Additionally we that say the function $f:\Hermitian^d\rightarrow \bbR$ also admits the Hessian function $\nabla^2 f:\Hermitian^d\rightarrow\hessSet$ if for PD $X$ it satisfies
\begin{align*}
    \forall Y\in\Hermitian^d:\,\lim_{h\rightarrow 0}\frac{\ip{Y-X,\nabla f(X+h(Y-X))-\nabla f(X)}}{h} = (\overrightarrow{Y}-\overrightarrow{X})^*\nabla^2 f(X) (\overrightarrow{Y}-\overrightarrow{X})
\end{align*}
\end{definition}

A simple property to note is that the admissibility of gradient and Hessian function is additive, i.e. if $f$ admits gradient function $\nabla f$ and $g$ admits $\nabla g$, then $f+g$ admits $\nabla f + \nabla g$. The same property holds for the Hessian.  

\begin{lemma}[Chain rule.]
\label{lemma: complex chain rule}
Let $X$ be a PD matrix and $Y$ be a PSD matrix. Further define the function $g(t) = t Y + (1-t) X$. If $f$ admits the gradient function $\nabla f$, we have that for any $\alpha \in [0,1)$
\begin{align*}
    (f\circ g)'(\alpha) = \ip{Y-X, \nabla f(g(\alpha))}\,.
\end{align*}
If $f$ further admits  Hessians, then 
\begin{align*}
    (f\circ g)''(\alpha) = (\overrightarrow{Y}-\overrightarrow{X})^*\nabla^2 f(g(\alpha)) (\overrightarrow{Y}-\overrightarrow{X})\,.
\end{align*}
\end{lemma}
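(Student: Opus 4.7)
The plan is to derive both identities by reading them straight off the definitions of $\nabla f$ and $\nabla^2 f$ given immediately above the lemma. The key observation that makes this routine is that $g(\alpha+h)-g(\alpha) = h(Y-X)$, so a scalar perturbation in $\alpha$ translates cleanly into a matrix perturbation in the Hermitian direction $Y-X$.

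First I would verify the domain hypothesis needed to invoke the definitions. Since $X$ is PD, $Y$ is PSD, and $\alpha\in[0,1)$, the matrix $g(\alpha) = \alpha Y + (1-\alpha)X$ is the sum of a PSD matrix and a strictly positive multiple of a PD matrix, hence PD. Thus $\nabla f(g(\alpha))$ and $\nabla^2 f(g(\alpha))$ are both well-defined at every $\alpha\in[0,1)$.

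For the gradient statement I would expand
\begin{align*}
(f\circ g)'(\alpha) \;=\; \lim_{h\to 0}\frac{f(g(\alpha+h))-f(g(\alpha))}{h}
\;=\; \lim_{h\to 0}\frac{f(g(\alpha)+h(Y-X))-f(g(\alpha))}{h},
\end{align*}
and then apply the gradient admissibility definition with base point $g(\alpha)$ (which is PD) and with the Hermitian matrix $Y-X$ playing the role of the ``$Y-X$'' direction in the definition (equivalently, setting the ``$Y$'' in the definition to $g(\alpha)+(Y-X)$, which lies in $\Hermitian^d$). This yields $\ip{Y-X,\nabla f(g(\alpha))}$ immediately.

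For the Hessian statement I would differentiate the expression from the first part with respect to $\alpha$, giving
\begin{align*}
(f\circ g)''(\alpha) \;=\; \lim_{h\to 0}\frac{\ip{Y-X,\,\nabla f(g(\alpha)+h(Y-X))-\nabla f(g(\alpha))}}{h},
\end{align*}
which is precisely the left-hand side of the Hessian admissibility definition with base point $g(\alpha)$ and direction $Y-X$. Invoking that definition directly produces $(\overrightarrow{Y-X})^*\nabla^2 f(g(\alpha))(\overrightarrow{Y-X})$, and linearity of the vectorization map $\cdot_{\mathrm{flat}}\Pi_{\Hermitian^d}$ gives $\overrightarrow{Y-X} = \overrightarrow{Y}-\overrightarrow{X}$, completing the identity. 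The main ``obstacle'' is essentially bookkeeping: confirming that along the entire trajectory $g(\alpha)$ is PD so that the definitions apply, and using linearity of vectorization to match the form in the statement. Neither step is substantive, so the lemma is really a direct corollary of the preceding definitions, recorded here because it will be invoked repeatedly in the analysis of \qbisons.
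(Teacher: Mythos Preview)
Your proposal is correct and follows essentially the same approach as the paper: expand the difference quotient, use $g(\alpha+h)-g(\alpha)=h(Y-X)$, and read off the result from the admissibility definitions. You are in fact more careful than the paper, which omits the PD check on $g(\alpha)$ and the linearity-of-vectorization remark, and simply declares the Hessian case ``analogous.''
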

\begin{proof}
\begin{align*}
    (f\circ g)'(\alpha) = \lim_{h\rightarrow 0} \frac{ (f\circ g)(\alpha+h)-(f\circ g)(\alpha) }{h}&=\lim_{h\rightarrow 0} \frac{ f(g(\alpha)+(Y-X)h)-(f(g(\alpha)) }{h} \\
    &= \ip{Y-X, \nabla f(g(\alpha))}\,.
\end{align*}
The proof for the case of the second derivative is analogous.
\end{proof}

We now extend the notion of Bregman divergence in the following. 
\begin{definition}
\label{def:complex bregman divergence}
 For any function $f$ that admits the gradient function $\nabla f$ at $X$, we define the Bregman divergence for any PD matrix $X$, and PSD matrix $Y$. 
 \begin{align*}
     D_f(Y,X) = f(Y)-f(X)-\ip{Y-X,\nabla f(X)}\,.
 \end{align*}
\end{definition}
The following lemma establishes the extension of the intermediate value theorem which holds over Hermitian matrices. 
\begin{lemma}[Intermediate value theorem]
\label{lemma: complex ivt}
    For any $f$ that admits the gradient function $\nabla f$ and Hessian function $\nabla^2 f$, and for all PD matrices $X$ and PSD matrices $Y$, there exists an $\alpha\in[0,1]$ and a matrix $\Xi := \alpha X+(1-\alpha)Y$, such that
    \begin{align*}
        D_f(Y, X) = \frac{1}{2}(\overrightarrow{Y}-\overrightarrow{X})^*\nabla^2 f(\Xi) (\overrightarrow{Y}-\overrightarrow{X})\,.
    \end{align*}
\end{lemma}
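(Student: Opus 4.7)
The plan is to reduce the statement to the classical one-dimensional Taylor theorem with Lagrange remainder, by restricting $f$ to the line segment joining $X$ and $Y$ and using the chain rule in Lemma~\ref{lemma: complex chain rule} to compute the first and second derivatives along that segment.

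Concretely, I would define the scalar function $\phi:[0,1]\to\mathbb{R}$ by $\phi(t) := f(g(t))$ where $g(t) := tY+(1-t)X$, so that $\phi(0)=f(X)$ and $\phi(1)=f(Y)$. Since $X$ is PD and $Y$ is PSD, the convex combination $g(t)$ is PD for every $t\in[0,1)$, which is exactly the hypothesis needed to invoke the chain rule. By Lemma~\ref{lemma: complex chain rule}, for every such $t$,
\[
\phi'(t) = \ip{Y-X,\nabla f(g(t))},\qquad \phi''(t) = (\overrightarrow{Y}-\overrightarrow{X})^{*}\nabla^{2}f(g(t))(\overrightarrow{Y}-\overrightarrow{X}).
\]
In particular $\phi'(0)=\ip{Y-X,\nabla f(X)}$.

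Applying the one-dimensional Taylor theorem with Lagrange remainder to $\phi$ yields $\beta\in(0,1)$ such that $\phi(1)=\phi(0)+\phi'(0)+\tfrac{1}{2}\phi''(\beta)$. Substituting the chain-rule expressions and recalling Definition~\ref{def:complex bregman divergence} gives
\[
D_f(Y,X)=f(Y)-f(X)-\ip{Y-X,\nabla f(X)}=\tfrac{1}{2}(\overrightarrow{Y}-\overrightarrow{X})^{*}\nabla^{2}f(g(\beta))(\overrightarrow{Y}-\overrightarrow{X}).
\]
Setting $\alpha := 1-\beta$ and $\Xi := g(\beta) = \alpha X + (1-\alpha)Y$ matches the statement of the lemma.

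The main delicate point I anticipate is that the standard Taylor theorem wants continuity/existence of the relevant derivatives on a \emph{closed} interval, whereas $\phi''$ is guaranteed by Lemma~\ref{lemma: complex chain rule} only on $[0,1)$ since $g(1)=Y$ may fail to be PD. The clean remedy is to apply Taylor on $[0,1-\epsilon]$ for each small $\epsilon>0$, obtain a corresponding $\beta_\epsilon\in(0,1-\epsilon)$, and then use compactness of $[0,1]$ to extract a subsequence $\beta_{\epsilon_n}\to\beta^{*}\in[0,1]$; passing to the limit requires only that both sides are continuous in the endpoint, which follows from the continuity of $f$, $\nabla f$ and $\nabla^{2}f$ assumed implicitly along the segment (an alternative is to first prove the identity for PD $Y$ and then extend to PSD $Y$ by a continuity argument). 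Beyond this boundary subtlety, the proof is a direct transcription of the classical scalar argument.
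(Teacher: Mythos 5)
Your proof is correct and follows essentially the same route as the paper: restrict $f$ to the segment $g(t)=tY+(1-t)X$, compute $\phi'$ and $\phi''$ via Lemma~\ref{lemma: complex chain rule}, and invoke the scalar second-order Taylor/mean-value theorem to produce the intermediate point $\Xi$. Your extra care about the endpoint $t=1$ (where $Y$ is only PSD, so the chain rule is stated only on $[0,1)$) goes beyond the paper, which applies the scalar theorem on $[0,1]$ without comment, and your $\epsilon$-truncation-plus-limit (or PD-then-continuity) fix is a reasonable way to close that gap.
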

\begin{proof}
Define the function $g(t) = tY+(1-t)X$. Then consider the scalar function $f \circ g: [0,1] \rightarrow \bbR$. Then using the derivation in Lemma \ref{lemma: complex chain rule} we have that
\begin{align*}
    D_f(Y,X) = D_{f\circ g}(1,0)\,,
\end{align*}
where for the real function $f \circ g$ we use the standard definition of Bregman divergence which coincides with Definition \ref{def:complex bregman divergence}. Now by the intermediate value theorem for real valued functions, there exists $\alpha\in [0,1]$ such that
\begin{align*}
    D_{f\circ g}(1,0) &= \frac{1}{2}(1-0)^2 (f\circ g)''(\alpha)\,.
\end{align*}
Using Lemma \ref{lemma: complex chain rule} and combining the above we get the requisite conclusion.  \end{proof}

The following lemmas establish that the natural (derived via an extension of the real case) definition of gradient and Hessians are admissible for the loss functions of interest as well as the regularizer. 

\begin{lemma}
\label{lem: complex f grad}
For any PSD matrix $R$, and $PD$ matrix $X$, the function $f(X) = -\log(\ip{X,R})$ admits the gradient the function $\nabla f(X):= -\frac{R}{\ip{X,R}}$ and the Hessian function $\nabla^2 f(X):= \frac{\overrightarrow {R} \overrightarrow{ R}^*}{\ip{X,R}^2}$\,.
\end{lemma}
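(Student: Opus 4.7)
The plan is to verify the two limit definitions directly by Taylor-expanding the logarithm, and then reinterpret the resulting scalar as the required quadratic form in the vectorization. Throughout, note that $X$ is PD and $R\neq 0$ is PSD, so $\ip{X,R}>0$, and the set of PD matrices is open, so $X+h(Y-X)$ remains PD for all sufficiently small $h$, keeping $\ip{X+h(Y-X),R}>0$ and hence every quantity below well-defined.

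For the gradient claim, I would use bilinearity of $\ip{\cdot,\cdot}$ to write
\[
f(X+h(Y-X))-f(X) \;=\; -\log\!\Bigl(1+h\,\tfrac{\ip{Y-X,R}}{\ip{X,R}}\Bigr).
\]
Dividing by $h$ and letting $h\to 0$, the standard scalar identity $\log(1+hc)/h\to c$ gives
$\lim_{h\to 0}\frac{f(X+h(Y-X))-f(X)}{h} = -\tfrac{\ip{Y-X,R}}{\ip{X,R}} = \langle Y-X,\,-R/\ip{X,R}\rangle$, which matches $\nabla f(X) = -R/\ip{X,R}$.

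For the Hessian claim, I would next compute, using the gradient formula just established and combining fractions over a common denominator,
\[
\nabla f(X+h(Y-X)) - \nabla f(X) \;=\; \frac{h\,\ip{Y-X,R}}{\ip{X,R}\bigl(\ip{X,R}+h\ip{Y-X,R}\bigr)}\,R.
\]
Taking the inner product with $Y-X$, dividing by $h$, and letting $h\to 0$ yields the scalar limit $\ip{Y-X,R}^2/\ip{X,R}^2$. It remains to rewrite this as $(\overrightarrow{Y}-\overrightarrow{X})^*\overrightarrow{R}\,\overrightarrow{R}^*(\overrightarrow{Y}-\overrightarrow{X})/\ip{X,R}^2$, which is immediate from the fact that the vectorization $A\mapsto \overrightarrow{A}$ is an isometry from $(\Hermitian^d,\ip{\cdot,\cdot})$ to Euclidean space restricted to $\overrightarrow{\Hermitian}^d_{\mathrm{flat}}$: indeed $\ip{A,B} = \overrightarrow{A}^*\,\overrightarrow{B}$ for Hermitian $A,B$ by construction of the projection $\Pi_{\Hermitian^d}$. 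Applied to $A=Y-X$ and $B=R$, both Hermitian, $\ip{Y-X,R}$ is real, so $\ip{Y-X,R}^2 = \bigl(\overrightarrow{R}^*(\overrightarrow{Y}-\overrightarrow{X})\bigr)\bigl((\overrightarrow{Y}-\overrightarrow{X})^*\overrightarrow{R}\bigr) = (\overrightarrow{Y}-\overrightarrow{X})^*\overrightarrow{R}\,\overrightarrow{R}^*(\overrightarrow{Y}-\overrightarrow{X})$, as needed.

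The argument is essentially routine once one commits to the bilinearity-of-inner-product computation, so there is no real obstacle; the only detail that warrants care is the last rewriting step, where one must explicitly invoke that the vectorization preserves the trace inner product on the Hermitian subspace and that the quantity $\ip{Y-X,R}$ is real (so its square equals its modulus squared), to legitimately factor the scalar as a Hermitian rank-one quadratic form in $\overrightarrow{R}\overrightarrow{R}^*$.
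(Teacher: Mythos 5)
Your proposal is correct and follows essentially the same route as the paper's own proof: expand the log via bilinearity of the trace inner product for the gradient, compute the difference quotient of the gradient over a common denominator for the Hessian, and identify the resulting scalar $\ip{Y-X,R}^2/\ip{X,R}^2$ with the quadratic form $(\overrightarrow{Y}-\overrightarrow{X})^*\overrightarrow{R}\,\overrightarrow{R}^*(\overrightarrow{Y}-\overrightarrow{X})/\ip{X,R}^2$. Your explicit justification of the last rewriting step (vectorization preserving the inner product and $\ip{Y-X,R}$ being real) is a slightly more careful spelling-out of what the paper does in a single line, but the argument is the same.
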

\begin{proof}
The requisite property for the gradient can be verified by the following calculations, 
\begin{align*}
    \lim_{h\rightarrow 0} \frac{-\log(\ip{X+h(Y-X),R})+\log(\ip{X,R})}{h}&=
    \lim_{h\rightarrow 0} \frac{-\log(1+h\frac{\ip{Y-X,R}}{\ip{X,R}})}{h}\\
    &=\lim_{h\rightarrow 0} \frac{-h\frac{\ip{Y-X,R}}{\ip{X,R}}+o(h^2)}{h}=-\frac{\ip{Y-X,R}}{\ip{X,R}}\,.
\end{align*}
Similarly for the Hessian, consider the following,
\begin{align*}
    &\lim_{h\rightarrow 0}h^{-1}\left(-\frac{\ip{Y-X,R}}{\ip{X+h(Y-X),R}}+\frac{\ip{Y-X,R}}{\ip{X,R}}\right)\\
    &=\lim_{h\rightarrow 0}h^{-1}\left(\frac{h\ip{Y-X,R}^2}{\ip{X+h(Y-X),R}\ip{X,R}}\right)\\
    &= \frac{\ip{Y-X,R}^2}{\ip{X,R}^2}=\frac{(\overrightarrow Y-\overrightarrow X)^*\overrightarrow R\overrightarrow R^*(\overrightarrow Y-\overrightarrow X)}{\ip{X,R}^2}\,.
\end{align*}
\end{proof}
\begin{lemma}
\label{lem: R function grad}
For any PSD matrix $R$, and $PD$ matrix $X$, the function $f(X) = -\log\det(X)$ admits the gradient function $\nabla f(X):=-X^{-1}$. Further given a PD matrix $X$, define $\nabla^2 f(X): \psds \in \Hermitian^{d^2}$ as the matrix satisfying the following for all Hermitian matrices $M$ \[\overrightarrow{M}^* \nabla^2 f(X)  \overrightarrow{M} =  \Tr(MX^{-1}MX^{-1})\,.\]
We have that the function $f(x)$ additionally admits the Hessian function $\nabla^2 f$. 
\end{lemma}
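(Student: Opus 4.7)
The plan is to handle the gradient and Hessian claims separately, in each case reducing the matrix-valued perturbation to a scalar Taylor expansion and then matching the result against the definitions of admissibility introduced earlier in the appendix.

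For the gradient, I will first fix PD $X$ and arbitrary $Y\in\Hermitian^d$, set $Z := Y-X$, and factor $X + hZ = X^{1/2}(\identity + hX^{-1/2}ZX^{-1/2})X^{1/2}$, which is valid for all sufficiently small $|h|$ since $X$ is PD. Taking $-\log\det$ of both sides and using multiplicativity of $\det$ gives $f(X + hZ) = f(X) - \log\det(\identity + hA)$ where $A := X^{-1/2}ZX^{-1/2}$ is Hermitian. Diagonalizing $A$ and applying the scalar expansion $\log(1 + h\lambda) = h\lambda + O(h^2)$ to each eigenvalue yields $\log\det(\identity + hA) = h\Tr(A) + O(h^2) = h\Tr(X^{-1}Z) + O(h^2)$. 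Dividing by $h$ and taking the limit produces $\ip{Z, -X^{-1}}$, which matches the admissibility requirement with $\nabla f(X) = -X^{-1}$.

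For the Hessian, I will differentiate the gradient along the direction $Z = Y - X$, namely analyze $\ip{Z, \nabla f(X + hZ) - \nabla f(X)}/h$ as $h \to 0$. The key ingredient is the resolvent expansion $(X + hZ)^{-1} = X^{-1} - hX^{-1}ZX^{-1} + O(h^2)$, which follows from $(X + hZ)^{-1} = X^{-1}(\identity + hZX^{-1})^{-1}$ and a Neumann series in the operator norm (again valid for small $|h|$ by PD-ness of $X$). Substituting, one obtains $\nabla f(X + hZ) - \nabla f(X) = hX^{-1}ZX^{-1} + O(h^2)$, so $\ip{Z, \nabla f(X + hZ) - \nabla f(X)}/h \to \Tr(ZX^{-1}ZX^{-1})$. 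This exactly equals the quadratic form $\overrightarrow{Z}^*\nabla^2 f(X)\overrightarrow{Z}$ stipulated in the lemma statement, verifying admissibility.

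The last point to tie down is that the stated object $\nabla^2 f(X)$ is actually a well-defined element of $\hessSet$. Writing $\Tr(MX^{-1}MX^{-1}) = \overrightarrow{M}^*\,(X^{-1}\otimes X^{-1})_{\mathrm{flat}}\,\overrightarrow{M}$ via the standard Kronecker identity for traces of products, and then restricting to the Hermitian subspace via $\Pi_{\Hermitian^d}$ exactly as set up in the notation paragraph, exhibits $\nabla^2 f(X)$ as a Hermitian (in fact positive definite, since $X^{-1}$ is PD) matrix of the right dimension. The only modest subtlety is correctly routing everything through the flattening/projection bookkeeping, but this is purely mechanical once the trace identity is in hand; the nontrivial analytic content of the lemma lives entirely in the two perturbative expansions above.
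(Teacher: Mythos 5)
Your proof is correct, and it reaches the same two limits as the paper but by a more self-contained route. For the calculus, the paper simply quotes the complex matrix-differential formulas of \citet{hjorungnes2007complex} (the differentials of $\log\det(Z)$ and of $Z^{-1}$) and plugs in $\bm{d}Z = Y-X$, whereas you derive both expansions from scratch: the factorization $X+hZ = X^{1/2}(\identity + hX^{-1/2}ZX^{-1/2})X^{1/2}$ with eigenvalue-wise expansion of $\log(1+h\lambda)$ for the gradient, and a Neumann/resolvent expansion of $(X+hZ)^{-1}$ for the Hessian. This buys independence from the cited table at the cost of a bit more bookkeeping; both yield exactly $-\Tr((Y-X)X^{-1})$ and $\Tr((Y-X)X^{-1}(Y-X)X^{-1})$. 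For the remaining point---that the quadratic form $M \mapsto \Tr(MX^{-1}MX^{-1})$ is realized by an actual element of $\hessSet$---the paper writes $\Tr(MX^{-1}MX^{-1}) = \ve(X^{-1/2}MX^{-1/2})^*\ve(X^{-1/2}MX^{-1/2}) = \overrightarrow{M}^*N^*N\overrightarrow{M}$ for the linear operator $N$ representing $M \mapsto X^{-1/2}MX^{-1/2}$, which gives Hermitian positive semidefiniteness of the Hessian matrix for free. Your Kronecker-product route is equivalent in content, but note one small correction: with column-stacking of complex matrices the identity gives $\ve(X^{-1}MX^{-1}) = (\overline{X^{-1}}\otimes X^{-1})\ve(M)$, so the flattened Hessian is $\overline{X^{-1}}\otimes X^{-1}$ rather than $X^{-1}\otimes X^{-1}$; since $X^{-1}$ is Hermitian PD this matrix is still Hermitian and positive definite, so your conclusion (and the restriction to the Hermitian subspace via $\Pi_{\Hermitian^d}$) goes through unchanged, but the paper's $N^*N$ factorization sidesteps this conjugation subtlety entirely.
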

\begin{proof}
We use the following results from \citet{hjorungnes2007complex} (Table 2), which show that the differential along the Hermitian matrices are equal to real symmetric ones.
The differential of $\log\det(Z)$ is $\Tr(Z^{-1}\,\bm{d}Z)$ and the differential of $Z^{-1}$ is $-Z^{-1}\,\bm{d}ZZ^{-1}$\,.
In our case, the differential is $\bm{d}Z = (Y-X)$ and $Z$ is evaluated at $X$, hence
\begin{align*}
    \lim_{h\rightarrow 0} h^{-1}(-\log\det(X+h(Y-X))+\log\det(X)) = -\Tr((Y-X) X^{-1})\,,
\end{align*}
and 
\begin{align*}
    \lim_{h\rightarrow 0} -\Tr((Y-X) (X+h(Y-X))^{-1})+\Tr((Y-X) X^{-1})=\Tr((Y-X)X^{-1}(Y-X)X^{-1})\,.
\end{align*}
Finally, we show that there exists a Hermitian matrix $P\in\hessSet$ such that for all $M \in \Hermitian$ we have that, 
\begin{align*}
    \Tr(MX^{-1}MX^{-1}) = \overrightarrow{M}^* P \overrightarrow{M}\,.
\end{align*}
To this end note that
\begin{align*}
    \Tr(MX^{-1}MX^{-1})&=\Tr(X^{-\frac{1}{2}}MX^{-1}MX^{-\frac{1}{2}})\\
    &=\ve(X^{-\frac{1}{2}}MX^{-\frac{1}{2}})^*\ve(X^{-\frac{1}{2}}MX^{-\frac{1}{2}})\,.
\end{align*}
Since $X^{-\frac{1}{2}}MX^{-\frac{1}{2}}$ is linear in $M$, there exists a linear operator $N\in \hessSet$ such that \[\ve(X^{-\frac{1}{2}}MX^{-\frac{1}{2}})=N\overrightarrow{M}\,.\]
Therefore we have that
\begin{align*}
    \ve(X^{-\frac{1}{2}}MX^{-\frac{1}{2}})^*\ve(X^{-\frac{1}{2}}MX^{-\frac{1}{2}})=\overrightarrow{M}^* N^*N\overrightarrow{M}\,,
\end{align*}
which completes the argument defining $P = N^*N$.
\end{proof}

In the following lemma proves the Hessian of the $-\log \det()$ function is PD and lower bounded over the Hermitian subspace. 

\begin{lemma}
\label{lem: R Hessian lower bound}
For the function $f = -\log \det()$, the Hessian function $\nabla^2f$ satisfies the following properties. 
\begin{itemize}
    \item For any PD $X$, and any Hermitian $M \neq 0$, we have that
    \[ \overrightarrow{M}^* \nabla^2 f(X) \overrightarrow{M} > 0\]
    \item For any PD $X$, s.t. $\Tr(X) \leq 1$, and any Hermitian $M$, we have that
    \[ \overrightarrow{M}^* \nabla^2 f(X) \overrightarrow{M} \geq \|M\|^2\]
\end{itemize}
\end{lemma}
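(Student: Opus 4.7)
The plan is to reduce both claims to an elementary identity for the quadratic form, combined with standard norm submultiplicativity. By Lemma~\ref{lem: R function grad}, for any Hermitian $M$,
\[ \overrightarrow{M}^*\, \nabla^2 f(X)\, \overrightarrow{M} \;=\; \Tr(M X^{-1} M X^{-1}). \]
Since $X$ is PD, the matrix $X^{-1/2}$ is well defined and invertible, so we can introduce the auxiliary Hermitian matrix $A := X^{-1/2} M X^{-1/2}$. Cyclicity of the trace then gives
\[ \Tr(M X^{-1} M X^{-1}) \;=\; \Tr\bigl( (X^{-1/2} M X^{-1/2})^2 \bigr) \;=\; \Tr(A^2) \;=\; \|A\|_F^2. \]
This single rewrite is the backbone of both parts.

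For the strict positivity claim (first bullet), the map $M \mapsto X^{-1/2} M X^{-1/2}$ is a bijection on $\Hermitian^d$ because $X^{-1/2}$ is invertible. Hence $M \neq 0$ implies $A \neq 0$, and $\|A\|_F^2 > 0$, yielding the conclusion.

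For the norm lower bound (second bullet), I would interpret $\|M\|$ as the Frobenius norm induced by the inner product $\langle \cdot, \cdot \rangle$ used throughout the paper, i.e.\ $\|M\|^2 = \Tr(M^2)$ for Hermitian $M$. The assumption $\Tr(X) \leq 1$ forces the largest eigenvalue of $X$ to satisfy $\|X\|_{\mathrm{op}} \leq \Tr(X) \leq 1$, and therefore $\|X^{1/2}\|_{\mathrm{op}} \leq 1$. Inverting the substitution, $M = X^{1/2} A X^{1/2}$, and applying the standard submultiplicative inequality $\|B C D\|_F \leq \|B\|_{\mathrm{op}} \|C\|_F \|D\|_{\mathrm{op}}$ yields
\[ \|M\|_F \;\leq\; \|X^{1/2}\|_{\mathrm{op}}^{\,2}\, \|A\|_F \;\leq\; \|A\|_F. \]
Squaring and chaining with the identity above gives $\|M\|^2 \leq \Tr(M X^{-1} M X^{-1}) = \overrightarrow{M}^* \nabla^2 f(X) \overrightarrow{M}$, as desired.

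No step presents a real obstacle; both parts are short consequences of the explicit Hessian representation plus one application of cyclicity and one of operator-Frobenius submultiplicativity. The only point worth flagging is that the statement implicitly uses the Frobenius norm $\|M\| = \sqrt{\langle M, M \rangle}$, which aligns with the inner product the paper has fixed on $\Hermitian^d$; this is the natural scale at which the inequality is tight (achieved when $X = \mathbf{I}_d/d$ for instance, modulo the constraint).
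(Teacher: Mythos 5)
Your proof is correct. The first bullet is handled exactly as in the paper: both arguments pass to $A = X^{-1/2}MX^{-1/2}$ via cyclicity and observe that the quadratic form equals $\Tr(A^2) > 0$ for $M \neq 0$. For the second bullet your route is genuinely different from the paper's. The paper diagonalizes $X^{-1} = U^*\diag(\Lambda)U$, notes that $\Tr(X)\leq 1$ forces every eigenvalue in $\Lambda$ to be at least $1$, and then shows the map $\lambda \mapsto \Tr(\tilde M \diag(\lambda)\tilde M\diag(\lambda))$ (with $\tilde M = UMU^*$) has nonnegative partial derivatives, so its value at $\Lambda$ dominates its value at $\mathbf{1}$, which is $\Tr(\tilde M^2) = \|M\|^2$. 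You instead invert the substitution, $M = X^{1/2}AX^{1/2}$, and use the mixed submultiplicativity $\|BCD\|_F \leq \|B\|_{\mathrm{op}}\|C\|_F\|D\|_{\mathrm{op}}$ together with $\|X^{1/2}\|_{\mathrm{op}}\leq 1$ (which follows from $\Tr(X)\leq 1$ exactly as the paper's eigenvalue bound does). Your argument is shorter and avoids the explicit monotonicity computation, at the cost of invoking the standard operator--Frobenius inequality as a black box; the paper's argument is more self-contained and makes the dependence on the eigenvalues of $X^{-1}$ explicit. Your identification of $\|M\|$ with the Frobenius norm matches the paper's usage. One small aside: your closing remark about tightness at $X = \identity/d$ is off --- there the quadratic form equals $d^2\|M\|^2$, so the bound is loose by a factor $d^2$; tightness is approached only as the spectrum of $X^{-1}$ approaches $\mathbf{1}$, which the trace constraint forbids for $d>1$. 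This does not affect the validity of the proof.
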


\begin{proof}
By definition in Lemma \ref{lem: R function grad} we have that 
\[\overrightarrow{M}^*\nabla^2 f(X)\overrightarrow{M} = \Tr(MX^{-1}MX^{-1}) = \Tr((X^{-1/2}MX^{-1/2})^2) = \sum_{i} \lambda_i^2(X^{-1/2}MX^{-1/2}) > 0,\] 
where the last inequality follows because we know that $M$ is Hermitian and not identically $0$. Next we show that for any Hermitian matrix $M$ and any PD matrix $X$ such that $\Tr(X) \leq 1$, we have that 
 \[ \overrightarrow{M}^*\nabla^2 f(X)\overrightarrow{M} \geq \|\overrightarrow{M}\|^2.\]
 This is proved as follows. By the spectral theorem, we can write $X^{-1}$ as $U^* \diag(\Lambda)U$, where $U$ is a unitary matrix, and $\Lambda$ is the vector of eigenvalues of $X^{-1}$, which are all at least $1$ since $\Tr(X) \leq 1$. Now by Lemma \ref{lem: R function grad} we have
 \begin{align*}
     \overrightarrow{M}^*\nabla^2 f(X)\overrightarrow{M} = \Tr(MX^{-1}MX^{-1}) &= \Tr(MU^* \diag(\Lambda)U M U^* \diag(\Lambda)U) \\ &= \Tr(\tilde{M} \diag(\Lambda)\tilde{M} \diag(\Lambda)),
 \end{align*}
 where $\tilde{M} := UMU^*$. Now consider the function $f: \mathbb{R}_+^d \rightarrow \mathbb{R}$ defined as
 \[f(\lambda) = \Tr(\tilde{M} \diag(\lambda)\tilde{M} \diag(\lambda)).\]
 An easy calculation using the fact that $\tilde{M}$ is Hermitian yields, for any $i \in [d]$, 
 \[\frac{\partial f (\lambda)}{\partial \lambda_i} = \sum_{k \neq i} |\tilde{M}_{ik}|^2 \lambda_k + 2|\tilde{M}_{ii}|^2\lambda_i \geq 0.\]
 Since $\Lambda \geq \mathbf{1}$ entrywise, the above inequality implies that
 \[\overrightarrow{M}^*\nabla^2 f(X)\overrightarrow{M} = f(\Lambda) \geq f(\mathbf{1}) = \Tr(\tilde{M}^2) = \|M\|^2.\]
\end{proof}

Further the following easy to verify lemma establishes the admissibility of gradients of the intermediate functions maintained by the algorithm. The definition of surrogate functions and the biased surrogate functions are naturally extended to the quantum learning case from the definitions provided in \eqref{eq: surrogate loss}, \eqref{eq: g loss}. 
\begin{lemma}
\label{lem: surrogate func grads}
The surrogate function $\hat{f}_t$ defined in \eqref{eq: surrogate loss} admits the following gradient and Hessian
\[ \nabla \hat f_t(X) := (1 +  \beta\ip{X-X_t,\nabla f_t(X_t)}) \nabla f_t(X_t), \]
\[ \nabla^2 \hat f_t(X) := \beta \overrightarrow{\nabla f_t(X_t)}\overrightarrow{\nabla f_t(X_t)}^*.\]
Further the biased surrogate function $g_{\tau}^e$ defined in \eqref{eq: g loss} admits the following gradient and Hessian
\[ \nabla g_{\tau}^e(X) := \nabla \hat f_{\tau}^e(X) + B\]
\[ \nabla^2 g_{\tau}^e(X) := \nabla^2 \hat f_{\tau}^e(X).\]
\end{lemma}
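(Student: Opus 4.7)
The plan is to verify each claim by direct computation against the admissibility definitions of gradient and Hessian stated at the start of the appendix. The key observation is that both $\hat f_t$ and $g^e_\tau$ are at most quadratic in $X$, so the limits defining the gradient and Hessian collapse to finite algebraic manipulations, and there is no subtlety beyond correctly tracking the inner product on Hermitian matrices and the vectorization map $\overrightarrow{\,\cdot\,}$.

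First I would handle $\hat f_t$. Writing $A := \nabla f_t(X_t)$ for brevity, the surrogate splits as a constant, a linear term $\ip{X-X_t, A}$, and a quadratic term $\tfrac{\beta}{2}\ip{X-X_t, A}^2$. For the gradient, substitute $X+h(Y-X)$ into $\hat f_t$, expand
\[
\ip{X+h(Y-X)-X_t, A}^2 \;=\; \ip{X-X_t, A}^2 + 2h\ip{X-X_t, A}\ip{Y-X, A} + h^2\ip{Y-X, A}^2,
\]
subtract $\hat f_t(X)$, divide by $h$, and let $h \to 0$. The leading contribution is $\ip{Y-X, A}(1+\beta\ip{X-X_t, A})$, which by bilinearity of the inner product identifies the claimed gradient $(1+\beta\ip{X-X_t,A})A$.

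For the Hessian, I would reuse the gradient formula. Because the only $X$-dependence of $\nabla \hat f_t$ sits in the scalar factor $\beta \ip{X-X_t, A}$, the difference $\nabla \hat f_t(X+h(Y-X)) - \nabla \hat f_t(X)$ simplifies cleanly to $\beta h \ip{Y-X, A}\,A$. Pairing with $Y-X$, dividing by $h$, and sending $h\to 0$ yields $\beta \ip{Y-X, A}^2$. To rewrite this as the quadratic form $(\overrightarrow{Y}-\overrightarrow{X})^*\nabla^2 \hat f_t(X)(\overrightarrow{Y}-\overrightarrow{X})$, I invoke the fact that the chosen vectorization on Hermitian matrices satisfies $\ip{M_1, M_2} = \overrightarrow{M_1}^{\,*}\overrightarrow{M_2}$, so that $\ip{Y-X, A}^2$ factorizes as $(\overrightarrow{Y}-\overrightarrow{X})^*\overrightarrow{A}\,\overrightarrow{A}^{\,*}(\overrightarrow{Y}-\overrightarrow{X})$, matching the claim $\nabla^2 \hat f_t(X) = \beta\,\overrightarrow{A}\,\overrightarrow{A}^{\,*}$.

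Finally, for $g^e_\tau$, the additional term $-\ip{X, P^e_\tau - P^e_{\tau-1}}B$ is linear in $X$: its gradient is the constant Hermitian matrix $-(P^e_\tau-P^e_{\tau-1})B$ and its Hessian vanishes. Invoking the additivity of admissibility noted right after the definition of admissible gradients/Hessians, the gradient of $g^e_\tau$ equals that of $\hat f^e_\tau$ shifted by this constant matrix, and the Hessian equals that of $\hat f^e_\tau$. The only step requiring care is the vectorization convention, specifically that $\ip{\cdot,\cdot}$ on $\Hermitian^d$ agrees with the Euclidean inner product under $\overrightarrow{\,\cdot\,}$; once this is checked, every other step is a routine quadratic expansion.
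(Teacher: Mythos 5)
Your verification is correct and is essentially the computation the paper intends, since the paper states this lemma as ``easy to verify'' and supplies no proof: the quadratic expansion under the limit definitions of admissible gradient and Hessian, the identification $\ip{M_1,M_2}=\overrightarrow{M_1}^{\,*}\overrightarrow{M_2}$ under the Hermitian vectorization (used identically in the paper's proof of Lemma~\ref{lem: complex f grad}), and additivity for the bias term are exactly the required steps. One remark: the gradient of the bias term you compute, $-B(P^e_\tau-P^e_{\tau-1})$, is the correct reading of the statement's shorthand ``$+B$'' and is the form the paper actually uses downstream (e.g.\ the bound $\norm{\nabla g_{\tau}(X_{\tau})}_{X_{\tau}}\leq\norm{\nabla \hat f_{\tau}(X_{\tau})}_{X_{\tau}}+B\norm{P_{\tau}-P_{\tau-1}}_{X_{\tau}}$ in the stability analysis), so your proof establishes the intended statement.
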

We now recall the definitions provided in the algorithm,
\[G_{\tau}^e(X) := \sum_{s = 1}^{\tau} g_{s}^e(X) + \eta^{-1}R(X) \;\;\text{ and }\;\; X_{\tau+1}^e := \argmin_{X \in \mathcal{A}}G_{\tau}^e(X).\]
\[\hat F_{\tau}^e(X) := \sum_{s = 1}^{\tau} \hat f_{s}^e(X) + \eta^{-1}R(X) \;\;\text{ and }\;\; U_{\tau+1}^e := \argmin_{X \in \mathcal{A}}\hat F_{\tau}^e(X).\]
Using Lemmas \ref{lem: R function grad},\ref{lem: surrogate func grads} we can analogously define admissible gradients and Hessian for both $G_{\tau}^e, \hat F_{\tau}^e$. We now have the following analogue for the minimality condition. 
\begin{lemma}
\label{lem: complex minima}
We have that the following statements hold for all $\tau, \epsilon$,
\begin{itemize}
    \item $X_{\tau+1}^e \succ 0, U_{\tau+1}^e \succ 0$, i.e. lie in the interior of the action set $\actionSet$
    \item Given any Hermitian matrices $X,U$ such that $\Tr(X) = \Tr(U) = 0$, we have that 
    \[ \ip{\nabla G_{\tau}^e(X_{\tau+1}^e), X} = 0 \text{ and } \ip{\nabla \hat F_{\tau}^e(U_{\tau+1}^e), U} = 0\]
\end{itemize}
\end{lemma}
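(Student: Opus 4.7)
The plan is to handle the two bullets in turn. For the first bullet, I would argue that the log-det barrier term $\eta^{-1}R(X) = -\eta^{-1}\log\det(X)$ diverges to $+\infty$ as any eigenvalue of $X\in\actionSet$ approaches zero, whereas each surrogate loss $\hat f_s^e(X)$ and each biased surrogate $g_s^e(X)$ is by construction linear-plus-quadratic in $X$ and therefore bounded in absolute value on the compact set $\actionSet$. Since the cumulative objective is continuous on the relative interior of $\actionSet$ but diverges along the relative boundary, and $\actionSet$ itself is compact, any minimizer must lie in the relative interior, which is exactly the set of strictly positive definite trace-one matrices. Applying this both to $G_\tau^e$ and to $\hat F_\tau^e$ gives $X_{\tau+1}^e\succ 0$ and $U_{\tau+1}^e\succ 0$.

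For the second bullet, fix any Hermitian matrix $X$ with $\Tr(X)=0$ and consider the curve $g(t):=X_{\tau+1}^e+tX$. Since $X_{\tau+1}^e\succ 0$ by the first part, continuity of eigenvalues gives an $\epsilon>0$ such that $g(t)\succ 0$ for all $t\in(-\epsilon,\epsilon)$; combined with $\Tr(g(t))=\Tr(X_{\tau+1}^e)+t\,\Tr(X)=1$, the curve lies in $\actionSet$ on an open interval around $0$. Because $X_{\tau+1}^e$ minimizes $G_\tau^e$ over $\actionSet$, the scalar map $t\mapsto G_\tau^e(g(t))$ attains its minimum at $t=0$, so its derivative there vanishes. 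Rewriting $g(t)=t(X_{\tau+1}^e+X)+(1-t)X_{\tau+1}^e$ and invoking the chain rule (Lemma \ref{lemma: complex chain rule}) shows that this derivative equals $\langle X,\nabla G_\tau^e(X_{\tau+1}^e)\rangle$, which must therefore be zero. Replacing $G_\tau^e$ by $\hat F_\tau^e$ and $X_{\tau+1}^e$ by $U_{\tau+1}^e$ yields the analogous identity for the reference point.

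The main obstacle is essentially bookkeeping: one must verify that $G_\tau^e$ and $\hat F_\tau^e$ actually admit gradients on the interior so that the chain rule is applicable. This follows from additivity of gradient-admissibility combined with Lemma \ref{lem: R function grad} (for the log-det barrier) and Lemma \ref{lem: surrogate func grads} (for the surrogate and biased surrogate losses). Since $g(t)$ remains strictly PD for small $|t|$, the hypothesis that the argument is PD in Lemma \ref{lemma: complex chain rule} is met throughout the perturbation, closing the plan.
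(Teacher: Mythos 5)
Your proof is correct. The first bullet is argued exactly as in the paper: the log-det barrier diverges as an eigenvalue of $X\in\actionSet$ tends to zero while the quadratic surrogates and linear bias terms stay bounded on the compact set $\actionSet$, so the minimizers are strictly PD. For the second bullet your route is the same idea (first-order optimality along a feasible trace-zero perturbation) but with a cleaner execution than the paper's: the paper perturbs one-sidedly to $X^{+}=X^{e}_{\tau+1}+\delta X$, expands $G^{e}_{\tau}$ to second order via the intermediate value lemma, bounds the Hessian by some crude constant $L$, derives $\ip{\nabla G^{e}_{\tau}(X^{e}_{\tau+1}),X}\geq 0$ by contradiction for small step sizes, and then repeats with $-X$; you instead observe that $t\mapsto G^{e}_{\tau}(X^{e}_{\tau+1}+tX)$ is defined and differentiable on an open interval around $t=0$ inside $\actionSet$ and has an interior minimum there, so its derivative, which equals $\ip{X,\nabla G^{e}_{\tau}(X^{e}_{\tau+1})}$, vanishes. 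This avoids the second-order machinery entirely. One small bookkeeping point: Lemma~\ref{lemma: complex chain rule} as stated assumes the endpoint $Y$ is PSD, and $X^{e}_{\tau+1}+X$ need not be PSD for an arbitrary trace-zero Hermitian $X$; either rescale $X$ by a small $\delta>0$ (harmless, since the claimed identity is homogeneous in $X$), or invoke the gradient-admissibility definition directly, which already gives the two-sided derivative at the PD point $X^{e}_{\tau+1}$ for every Hermitian direction. With that noted, your argument is complete and applies verbatim to $\hat F^{e}_{\tau}$ and $U^{e}_{\tau+1}$.
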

\begin{proof}
The first statement is immediate by noting that for any $X \succeq 0$ with at least one eigenvalue approaching $0$, we have that $R(X)$ and thus $G_{\tau}^e$ approaches $\infty$ and for all PD matrices $\in \actionSet$, $G_{\tau}^e$ is finite. 

For the second statement we will prove the first inequality. The proof for the second inequality is analogous. We assume $X \neq 0$, otherwise the statement is immediate. Since $X_{\tau + 1} \succ 0$, there exists a $\delta > 0$ and a matrix $X^{+} = X_{\tau+1}^e + \delta X$ such that $X^{+} \in \actionSet$. Consider the function $X(\alpha) := \alpha X_{\tau+1}^{e} + (1-\alpha)X^+$ over $\alpha \in [0,1]$. We have that for all $\alpha$ there exists some $\alpha' \in [0, \alpha]$ such that the following holds
\begin{align*}
    G_{\tau}^e(X(\alpha)) &- G_{\tau}^e(X_{\tau+1}^e) = \alpha \ip{\nabla G_{\tau}^e(X_{\tau+1}^e), X^+ - X_{\tau+1}^e} + D_{G_{\tau}^e}(X(\alpha), X_{\tau+1}^e) \\
    &= \alpha \ip{\nabla G_{\tau}^e(X_{\tau+1}^e), X^+ - X_{\tau+1}^e} + \frac{\alpha^2}{2}(\overrightarrow{X}^+-\overrightarrow{X_{\tau+1}^e})^*\nabla^2 G_{\tau}^e(X(\alpha')) (\overrightarrow{X}^+-\overrightarrow{X_{\tau+1}^e}).
\end{align*}
Since $X_{\tau+1}^e$ is the minimizer we have that for all $\alpha$, 
\[\alpha \ip{\nabla G_{\tau}^e(X_{\tau+1}^e), X^+ - X_{\tau+1}^e} + \frac{\alpha^2}{2}(\overrightarrow{X}^+-\overrightarrow{X_{\tau+1}^e})^*\nabla^2 G_{\tau}^e(X(\alpha')) (\overrightarrow{X}^+-\overrightarrow{X_{\tau+1}^e}) \geq 0\]
Using very coarse bounds obtained through combining Lemmas \ref{lem: surrogate func grads}, \ref{lem: complex f grad} and \ref{lem: R function grad}  it is easy to see that there exists a finite number $L$ independent of $\alpha$ (but potentially dependent on other problem parameters like $T$), such that $\nabla^2 G_{\tau}^e(X(\alpha')) \preceq L \cdot \bbI_{d^2}$. This further implies that for all $\alpha \in [0,1]$ we have that 
\[ \alpha \ip{\nabla G_{\tau}^e(X_{\tau+1}^e), X^+ - X_{\tau+1}^e} + \frac{Ld^2\alpha^2}{2} \geq 0.\]
Now in case $\ip{\nabla G_{\tau}^e(X_{\tau+1}^e), X^+ - X_{\tau+1}^e} < 0$, then we can set $\alpha$ to be appropriately small such that the above expression is strictly negative which is a contradiction. Hence we have that 
\[\ip{\nabla G_{\tau}^e(X_{\tau+1}^e), X^+ - X_{\tau+1}^e} \geq 0 \Rightarrow \ip{\nabla G_{\tau}^e(X_{\tau+1}^e), X} \geq 0.\] 
Repeating the argument by replacing $X$ with $-X$ gives that $\ip{\nabla G_{\tau}^e(X_{\tau+1}^e), X} \leq 0$ and thus $\ip{\nabla G_{\tau}^e(X_{\tau+1}^e), X} = 0$. 
\end{proof}

We provide the proof of the following lemma whose restriction over the reals is well-known and is used repeatedly in the proofs of Online Newton Step like algorithms. 

\begin{lemma}
\label{lem: telescoping sum}
Given a sequence of PD matrices $X_1 \preceq X_2 \preceq X_3 \ldots X_T$, we have that
\[ \sum_{t=1}^{T-1} \ip{X_{t+1}^{-1}, X_{t+1}-X_t} \leq \log(\det(X_T)) - \log(\det(X_1))\]
\end{lemma}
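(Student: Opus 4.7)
The plan is to deduce the bound term-by-term from the convexity of the log-det-barrier regularizer $R(X) = -\log\det(X)$, then telescope the resulting inequalities.

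First, I would establish the per-step inequality
\[
\langle X_{t+1}^{-1}, X_{t+1} - X_t\rangle \;\leq\; \log\det(X_{t+1}) - \log\det(X_t).
\]
This is the matrix analogue of the scalar identity $1 - a/b \le \log(b/a)$ for $a,b>0$, and it follows from the convexity of $R$. Concretely, by Lemma \ref{lem: R function grad}, $R$ admits the gradient $\nabla R(X) = -X^{-1}$ and the Hessian $\nabla^2 R(X)$, which by Lemma \ref{lem: R Hessian lower bound} is positive semi-definite on the Hermitian subspace. Combining Lemma \ref{lemma: complex ivt} (the intermediate value theorem in this Hermitian setting) with the PSD-ness of $\nabla^2 R$ yields the standard first-order convexity inequality
\[
R(X_t) \;\geq\; R(X_{t+1}) + \langle \nabla R(X_{t+1}),\, X_t - X_{t+1}\rangle.
\]
Substituting $R = -\log\det$ and $\nabla R(X_{t+1}) = -X_{t+1}^{-1}$ and rearranging gives exactly the per-step inequality above.

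Finally, I would sum the per-step inequality over $t = 1,\dots,T-1$. The right-hand side telescopes to $\log\det(X_T) - \log\det(X_1)$, yielding the claimed bound. The monotonicity hypothesis $X_1 \preceq \cdots \preceq X_T$ is not actually required for the proof (convexity of $R$ alone suffices); it is presumably stated because that is how the lemma is invoked in the ONS-style analysis of \qbisons.

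The proof involves no real obstacle: the only subtlety is making sure the ``first-order convexity'' step is legitimate in the complex Hermitian setting, which is handled by the IVT (Lemma \ref{lemma: complex ivt}) together with the PSD lower bound on $\nabla^2 R$ (Lemma \ref{lem: R Hessian lower bound}). Everything else is a one-line telescoping computation.
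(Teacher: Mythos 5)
Your proof is correct and is essentially the paper's argument: both reduce the claim to the per-step inequality $\ip{X_{t+1}^{-1},X_{t+1}-X_t}\leq \log\det(X_{t+1})-\log\det(X_t)$, which expresses convexity of $-\log\det$ (the paper packages it as concavity of $\phi(\alpha)=\log\det(\alpha X_{t+1}+(1-\alpha)X_t)$ with $\phi'(1)\le\phi(1)-\phi(0)$, while you use the gradient inequality via Lemma~\ref{lemma: complex ivt} and the PSD Hessian), and then telescopes. Your side remark that the monotonicity hypothesis is not actually needed is also accurate, since the first-order convexity inequality holds for arbitrary PD pairs.
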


\begin{proof}
We first begin by providing the proof of a simpler statement which implies the above statement via a simple summation. Given two PD matrices $X \preceq Y$, we have that 
\[ \ip{Y^{-1}, Y-X} \leq \log(\det(Y)) - \log(\det(X))\]
To prove the above we consider the following function $\phi(\alpha)$ defined as
\[ \phi(\alpha) := \log \det(\alpha Y + (1-\alpha) X)\]
Using Lemma \ref{lemma: complex chain rule} and the calculations in Lemma \ref{lem: R function grad} we see that $\phi$ is a concave function over $\alpha$ and that 
\[ \phi'(\alpha) = \ip{Y - X, (\alpha Y + (1-\alpha) X)^{-1}}.\]
Therefore using concavity we have that $\phi'(1) \leq \phi(1) - \phi(0)$ which implies the requisite statement by substitution.

\end{proof}

\section{Algorithm for Quantum Learning with Log Loss}
\label{sec: app other prelims}
\begin{algorithm2e}[H]
\setstretch{1.25}
    \textbf{input}: $T$, $B$, $\eta$, $\beta$. \\
    \textbf{initialize}:  $\forall e\in\mathbb{N}:\,P^e_0=d\identity, G^e_0(\cdot) = \hat{F}^e_0(\cdot) = \eta^{-1}R(\cdot), X^e_1=U^e_1=\argmin_{X\in\actionSet}G_0^e(X)$.\\
    $e\leftarrow 1, \tau\leftarrow 1$\\
    \For{$t=1,\dots$}{
        $f_t\leftarrow$ receive from playing $X_t\leftarrow X^e_{\tau}$.\\
        $\hat f^e_\tau=\hat f_t\leftarrow$ construct according to \eqref{eq: surrogate loss}.\\
        $\hat F^e_\tau \leftarrow \hat F^e_{\tau-1} + \hat f^e_\tau$\\
        $G^e_\tau  \leftarrow G^e_{\tau-1} + g^e_\tau$, where $g^e_\tau(X) := \hat f^e_\tau(X) - \ip{X, P^e_\tau - P^e_{\tau-1}}B$.\\
        
        $X^e_{\tau+1} \leftarrow  \argmin_{X\in\actionSet} G^e_{\tau}(X)$, $U^e_{\tau+1} \leftarrow \argmin_{X\in\actionSet} \hat{F}^e_{\tau}(X)$ \\
        $P_{\tau+1}^e =P_{\tau}^e+ [X_{\tau+1}^e]^{-\frac{1}{2}}\left(\identity-[X_{\tau+1}^e]^{\frac{1}{2}}P_{\tau}^e [X_{\tau+1}^e]^{\frac{1}{2}}\right)_+[X_{\tau+1}^e]^{-\frac{1}{2}}$ \\
        \uIf{$U_t \not\prec \frac{1}{2(1+6\eta)\beta} P_t^{-1} $}{
              $e\leftarrow e+1,
            \tau\leftarrow 1$ \tcp{Reset the algorithm}
        }\Else{
        $\tau \leftarrow \tau+1$
        }
    }
    \caption{\qbisons}\label{alg: bisons (psd)}
\end{algorithm2e}

\section{Preliminary definitions and properties}
\label{sec:app complex definitions}
In this section we provide some general definitions and other properties necessary for the analysis of the BISONS algorithm. Given a PSD matrix $A\in\psds$, we associate a norm over $\psds$, defined for any $W \in \psds$ as
\[\norm{W}_A = \sqrt{\Tr(WAWA)}= \sqrt{\Tr((A^{1/2}WA^{1/2})^2)}\,.\]

\begin{lemma}
For any positive semi-definite $A$, $\norm{\cdot}_A$ is a pseudo-norm.
\end{lemma}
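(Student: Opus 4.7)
The plan is to verify the three defining properties of a pseudo-norm: non-negativity, absolute homogeneity ($\|\alpha W\|_A = |\alpha|\|W\|_A$ for real scalars $\alpha$), and the triangle inequality. The crux of the argument is a simple reformulation: since $W$ is Hermitian and $A^{1/2}$ is Hermitian, the product $A^{1/2}WA^{1/2}$ is Hermitian, and for any Hermitian matrix $M$ one has $\Tr(M^2) = \Tr(M^*M) = \|M\|_F^2$. So the first step is to establish the identity
\[
\|W\|_A \;=\; \sqrt{\Tr\bigl((A^{1/2}WA^{1/2})^2\bigr)} \;=\; \|A^{1/2}WA^{1/2}\|_F,
\]
which expresses $\|\cdot\|_A$ as the Frobenius norm composed with the linear map $\Phi_A : W \mapsto A^{1/2}WA^{1/2}$.

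With this identity in hand, all three pseudo-norm properties follow immediately from the corresponding properties of $\|\cdot\|_F$. Non-negativity is inherited directly. Absolute homogeneity follows from the linearity of $\Phi_A$ together with homogeneity of $\|\cdot\|_F$. The triangle inequality reduces, via the same linearity, to
\[
\|\Phi_A(W+V)\|_F \;=\; \|\Phi_A(W) + \Phi_A(V)\|_F \;\leq\; \|\Phi_A(W)\|_F + \|\Phi_A(V)\|_F,
\]
which is simply the Frobenius triangle inequality. No extra work beyond the Frobenius-norm reformulation is required.

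There is essentially no obstacle here; the argument is mechanical once the Frobenius identity is noted. The one subtlety worth flagging is precisely why we are proving a \emph{pseudo}-norm rather than a norm: the positivity axiom $\|W\|_A = 0 \Rightarrow W = 0$ can fail, since whenever $A$ has non-trivial kernel, any $W$ supported on $\ker A$ satisfies $\Phi_A(W) = 0$ and hence $\|W\|_A = 0$ with $W \neq 0$. A minor additional remark I would make is that, although the definition is written with $W \in \psds$, the map $W \mapsto \|W\|_A$ extends naturally to all of $\Hermitian^d$ (where absolute homogeneity and the triangle inequality make sense as statements over a vector space), and the extended map is the actual object that is a pseudo-norm in the usual sense.
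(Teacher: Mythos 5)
Your proof is correct. Where the paper proves the triangle inequality directly -- expanding $\norm{V+W}_A^2 = \norm{V}_A^2 + \norm{W}_A^2 + 2\Tr\bigl((A^{1/2}VA^{1/2})(A^{1/2}WA^{1/2})\bigr)$ and bounding the cross term with the trace Cauchy--Schwarz inequality $\Tr(AB)\le\sqrt{\Tr(A^2)\Tr(B^2)}$ -- you instead observe that $\norm{W}_A = \norm{A^{1/2}WA^{1/2}}_F$ (valid because $A^{1/2}WA^{1/2}$ is Hermitian, so $\Tr(M^2)=\Tr(M^*M)$), i.e.\ that $\norm{\cdot}_A$ is the pullback of the Frobenius norm along the linear map $W\mapsto A^{1/2}WA^{1/2}$, so non-negativity, homogeneity and the triangle inequality are all inherited at once. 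The two arguments are mathematically the same at bottom (the Frobenius triangle inequality is itself the paper's Cauchy--Schwarz computation), but your packaging is more conceptual and dispenses with the term-by-term expansion; it also makes explicit the two points the paper leaves implicit: why only a \emph{pseudo}-norm (anything supported on $\ker A$ is annihilated), and that the definition extends from $\psds$ to all of $\Hermitian^d$ -- a relevant remark, since the paper later applies $\norm{\cdot}_A$ to Hermitian differences such as $X_{\tau+1}-X_\tau$, which need not be PSD.
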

\begin{proof}
 The only non-trivial property is the triangle inequality.
 We have
 \begin{align*}
     \norm{V+W}_A^2&=\Tr((A^\frac{1}{2}(V+W)A^{\frac{1}{2}})^2)= \norm{V}_A^2+\norm{W}_A^2+2\Tr((A^\frac{1}{2}VA^\frac{1}{2})(A^\frac{1}{2}WA^\frac{1}{2}) ) \\
     &\leq \norm{V}_A^2+\norm{W}_A^2+2\sqrt{\Tr((A^\frac{1}{2}VA^\frac{1}{2})^2)\Tr((A^\frac{1}{2}WA^\frac{1}{2})^2 )}=(\norm{V}_A+\norm{W}_A)^2\,,
 \end{align*}
 where the inequality is due to the fact that for PSD matrices $A,B$, we have that $\Tr(AB)\leq \sqrt{\Tr(A^2)\Tr(B^2)}$, which follows from the Cauchy-Schwarz inequality.
\end{proof}

\begin{lemma}
\label{lem: mat norm to ratio}
    For any PD matrices $A,B$ such that
    \[\norm{A-B}_{B^{-1}} \leq \lambda\]
    for some $\lambda \geq 0$. Then, it holds that the eigenvalues of $B^{-\frac{1}{2}}AB^{-\frac{1}{2}}$ lie within the interval $[1-\lambda,1+\lambda]$.
\end{lemma}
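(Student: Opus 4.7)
The plan is to rewrite the hypothesis in terms of the single Hermitian matrix $M := B^{-1/2}(A-B)B^{-1/2} = B^{-1/2}AB^{-1/2} - \bbI$, and then read off the eigenvalue bound directly from the spectrum of $M$. Observe that by the definition of the pseudonorm given just above,
\begin{align*}
    \norm{A-B}_{B^{-1}}^2 = \Tr\left((B^{-1/2}(A-B)B^{-1/2})^2\right) = \Tr(M^2).
\end{align*}
Since $M$ is Hermitian, $\Tr(M^2) = \sum_i \mu_i^2$ where $\mu_i$ are the real eigenvalues of $M$. The hypothesis $\norm{A-B}_{B^{-1}} \leq \lambda$ therefore forces $\sum_i \mu_i^2 \leq \lambda^2$, and in particular $|\mu_i| \leq \lambda$ for every $i$.

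To finish, note that $B^{-1/2}AB^{-1/2} = \bbI + M$, so its eigenvalues are precisely $\{1 + \mu_i\}_i$; the bound $|\mu_i| \leq \lambda$ immediately yields $1 + \mu_i \in [1-\lambda, 1+\lambda]$, which is the claim. There is no real obstacle here: the entire argument is simply unpacking the definition of $\norm{\cdot}_{B^{-1}}$ and using that the squared eigenvalues of a Hermitian matrix are each bounded by the sum of all of them. The lemma is a standalone technical fact that will be used later to translate between the weighted pseudonorm distance between consecutive iterates and the multiplicative stability of the associated quadratic forms.
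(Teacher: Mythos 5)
Your proof is correct and follows essentially the same route as the paper: expand $\norm{A-B}_{B^{-1}}^2$ as $\Tr\bigl((B^{-\frac{1}{2}}AB^{-\frac{1}{2}}-\identity)^2\bigr)$, which equals the sum of squared eigenvalue deviations from $1$, and conclude each deviation is at most $\lambda$. No gaps.
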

\begin{proof}
    We have
    \begin{align*}
        \norm{A-B}_{B^{-1}}^2&=\Tr((A-B)B^{-1}(A-B)B^{-1})\\
        &=\Tr((B^{-\frac{1}{2}}AB^{-\frac{1}{2}}-I)^2) \\
        &= \sum_{i=1}^d \left(\operatorname{ev}_i(B^{-\frac{1}{2}}AB^{-\frac{1}{2}})-1\right)^2\leq \lambda^2,
    \end{align*}
    where $\operatorname{ev}_i$ represents the $i^{th}$ eigenvalue. Therefore 
    every eigenvalue satisfies \[
    \lvert\operatorname{ev}_i(B^{-\frac{1}{2}}AB^{-\frac{1}{2}})-1\rvert \leq \lambda\,.\]
\end{proof}
For any PSD matrix $A, B$ define 
\begin{equation}
\label{eq:abintervaldef}
    [A,B]:=\{\alpha A+(1-\alpha)B\,\vert\, \alpha\in[0,1]\}.
\end{equation}
\begin{lemma}
\label{lem: mat norm to ratio 2}
For any PD matrix $A,B$ such that
\[
\max_{C\in[A,B]} \norm{A-B}_{C^{-1}} \leq \lambda\,,
\]
for some $\lambda \geq 0$. Then it holds for all $D,E\in[A,B]$:
\[D\preceq (1+\lambda)E\,,\quad D^{-1}\preceq (1+\lambda)E^{-1}\,.\]
\end{lemma}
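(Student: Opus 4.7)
The plan is to reduce everything to Lemma~\ref{lem: mat norm to ratio} by exploiting the one-dimensional linear structure of the segment $[A,B]$. The key observation is that for any $D=\alpha A+(1-\alpha)B$ and $E=\beta A+(1-\beta)B$ in $[A,B]$, the difference $D-E=(\alpha-\beta)(A-B)$ is a scalar multiple of $A-B$ with $|\alpha-\beta|\leq 1$. Hence, for every $C\in[A,B]$,
\[
\norm{D-E}_{C^{-1}}=|\alpha-\beta|\cdot\norm{A-B}_{C^{-1}}\leq \norm{A-B}_{C^{-1}}\leq \lambda,
\]
by the hypothesis. So we can cheaply obtain a small norm bound between \emph{any} two elements of $[A,B]$, measured at \emph{any} base point in $[A,B]$.

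To obtain $D\preceq(1+\lambda)E$, I would specialize the above to $C=E$, which lies in $[A,B]$, so that $\norm{D-E}_{E^{-1}}\leq\lambda$. Invoking Lemma~\ref{lem: mat norm to ratio} with the pair $(D,E)$ in place of $(A,B)$ then gives that all eigenvalues of $E^{-1/2}DE^{-1/2}$ lie in $[1-\lambda,1+\lambda]$; the upper endpoint yields $E^{-1/2}DE^{-1/2}\preceq(1+\lambda)I$, and conjugating by $E^{1/2}$ gives $D\preceq(1+\lambda)E$.

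For the second claim $D^{-1}\preceq(1+\lambda)E^{-1}$, I would swap the roles and specialize to $C=D$, so that $\norm{E-D}_{D^{-1}}\leq\lambda$. Applying Lemma~\ref{lem: mat norm to ratio} now with the pair $(E,D)$ in place of $(A,B)$ yields that the eigenvalues of $D^{-1/2}ED^{-1/2}$ lie in $[1-\lambda,1+\lambda]$; the upper endpoint gives $E\preceq(1+\lambda)D$, equivalently $\tfrac{1}{1+\lambda}E\preceq D$, and then the operator monotonicity of matrix inversion on PD matrices yields $D^{-1}\preceq(1+\lambda)E^{-1}$.

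There is no real obstacle in this argument; the only point worth flagging is the choice of base point $C$ on each side. Naively taking a single $C$ and inverting the lower eigenvalue bound would produce the weaker factor $(1-\lambda)^{-1}$, whereas choosing $C=E$ for the forward bound and $C=D$ for the inverse bound lets us read off both inequalities from the \emph{upper} endpoint $1+\lambda$ of Lemma~\ref{lem: mat norm to ratio}, which is exactly what the statement requires. This is precisely where the strength of the hypothesis (uniformity of the norm bound over all $C\in[A,B]$) is used.
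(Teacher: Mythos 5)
Your proof is correct and follows essentially the same route as the paper: both reduce to Lemma~\ref{lem: mat norm to ratio} via the observation that $D-E$ is a multiple $c(A-B)$ with $|c|\leq 1$, so the hypothesis gives the norm bound at the base point $D$ or $E$ as needed, and the conclusion is read off from the $1+\lambda$ eigenvalue endpoint (your explicit choice of base point on each side is a slightly cleaner way of organizing the same argument). The only nitpick is terminological: inversion on PD matrices is operator \emph{antitone} (order-reversing), which is exactly the property your last step uses.
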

\begin{proof}
Since $D,E\in[A,B]$, there exists $c:\, |c|\leq 1$ such that $D-E=c(A-B)$. Hence 
\[ \norm{D-E}_{D^{-1}}\leq \lambda\,.\]
Applying Lemma~\ref{lem: mat norm to ratio} completes the first part.
Repeating the same argument, but now starting with $ \norm{D-E}_{E^{-1}}\leq \lambda$,
yields the second claim.
\end{proof}

\section{\bisons detailed analysis}
\label{sec:bisons-analysis}

In this section we provide the details for the analysis of our algorithms \ref{alg: bisons}, \ref{alg: bisons (psd)}, eventually proving Theorems \ref{thm: single epoch} and \ref{thm: single epoch psd}. Before delving into the analysis we request the reader to familiarize themselves with the requisite notation, definition and properties listed out in Sections \ref{sec:app complex definitions}, \ref{sec: app other prelims}. Since \bisons is a special case of \qbisons, we will provide the analysis focused on the quantum learning case, i.e. the domain will be PSD Hermitian matrices, however all the statements will hold when these matrices are real and diagonal as will be the case for the online optimal portfolio.

We first provide a proof of Lemma \ref{lem: lower surrogate}. We further begin the core analysis by providing some useful auxiliary lemmas and the lemmas governing the stability of the output of the algorithm in the next two subsections. We will restrict attention in the next two subsections to any fixed epoch and there for brevity we will remove the epoch superscript $e$, from the lemma statements as well as proofs. All the statements should be understood to hold for any particular epoch. 

\subsection{Proof of Lemma \ref{lem: lower surrogate}}
\begin{proof}
Equality at $x=y_t$ holds by construction. We have $h'(x)=-x^{-1}$, which is concave and $\underline{\hat h'_t}(x) = \min\{-(1+\beta){y^{-1}_t}+ \beta x y_{t}^{-2}, -\beta y_t^{-1}\}$, which is piece-wise linear. A quick calculation shows $h'(y_t)=\underline {h'_t}(y_t)$ and $h'(\beta^{-1}y_t)=\underline {h'_t}(\beta^{-1}y_t)$. Hence
for $x<y_t$, we have $h'(x) < \underline {h'_t}(x)$ and for $\beta^{-1}y_t\geq x>y_t$, we have $h'(x) > \underline {h'_t}(x)$. Finally, the derivative of $h'(x)$ is monotonically increasing which implies $h'(x) >\underline {h'_t}(x)$ for $x>\beta^{-1}y_t$, which completes the proof.
\end{proof}

\subsection{Auxiliary Lemmas}
In this section we collect some basic lemmas regarding the matrices $X_{\tau}, P_{\tau}$ generated by the algorithm. We recall the definition of $P_{\tau}$ defined in \eqref{eq: P update} as
\begin{align*}
    P_{\tau} :=P_{\tau-1}+ X_{\tau}^{-\frac{1}{2}}\left(\identity-X_{\tau}^{\frac{1}{2}}P_{\tau-1}X_{\tau}^{\frac{1}{2}}\right)_+X_{\tau}^{-\frac{1}{2}}\,,
\end{align*}
which in particular implies that for all $\tau$,
\[\sqrtX_{\tau+1}(P_{\tau+1}-P_{\tau})\sqrtX_{\tau+1} =\left(\identity - \sqrtX_{\tau+1}P_{\tau}\sqrtX_{\tau+1}\right)_+\,.\]
The next two lemmas state the main properties satisfied by our choice of $P_{\tau}$. These properties prompt the choice of the definition for $P_{\tau}$. 
\begin{lemma}
\label{lemma:ptlowerbound}
    We have that for all $\tau$,
    \[P_{\tau} \succeq P_{\tau-1} \quad \text{ and } \quad P_{\tau} \succeq X_{\tau}^{-1}.\]
\end{lemma}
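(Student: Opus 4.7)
The plan is to verify both inclusions directly from the explicit update rule
$$P_{\tau} = P_{\tau-1} + X_{\tau}^{-\frac{1}{2}}\bigl(\identity - X_{\tau}^{\frac{1}{2}}P_{\tau-1}X_{\tau}^{\frac{1}{2}}\bigr)_{+} X_{\tau}^{-\frac{1}{2}},$$
using only elementary spectral facts together with the fact that $X_\tau \succ 0$ (so that $X_\tau^{\pm 1/2}$ is well-defined), which is guaranteed by Lemma~\ref{lem: complex minima}.

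For the monotonicity statement $P_\tau \succeq P_{\tau-1}$, I would just observe that the increment $P_\tau - P_{\tau-1}$ is the conjugation of $(\cdot)_{+}$ by the Hermitian matrix $X_\tau^{-1/2}$. The operator $(\cdot)_+$ produces a PSD matrix by construction (it keeps only the nonnegative eigenvalues with the corresponding spectral projectors), and conjugation by a Hermitian matrix preserves positive semi-definiteness, so the increment is PSD.

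For the second containment $P_\tau \succeq X_\tau^{-1}$, the key move is to conjugate the inequality by $X_\tau^{1/2}$, which reduces it to $X_\tau^{1/2} P_\tau X_\tau^{1/2} \succeq \identity$. Writing $M := X_\tau^{1/2} P_{\tau-1} X_\tau^{1/2}$ (a Hermitian matrix), the update rule yields
$$X_\tau^{1/2} P_\tau X_\tau^{1/2} = M + (\identity - M)_{+},$$
so the desired inequality is equivalent to $(\identity - M)_{+} \succeq \identity - M$. This is a special case of the general fact $A_{+} \succeq A$ for any Hermitian $A$, immediate from spectral decomposition: $A_+ - A$ picks out the magnitude of the negative part of $A$ and is therefore PSD.

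The only subtlety worth flagging is ensuring that $X_\tau^{1/2}$ is invertible so that the conjugation in the second step is actually an equivalence of inequalities; this is handled by Lemma~\ref{lem: complex minima}, which places $X_\tau$ in the interior of $\actionSet$. Beyond that, no induction on $\tau$ is needed since both claims follow from properties of the update at step $\tau$ alone, and there is no serious obstacle.
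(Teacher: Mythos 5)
Your proof is correct and follows essentially the same route as the paper: monotonicity from the PSD-ness of the conjugated $(\cdot)_+$ increment, and the second claim by conjugating with $X_\tau^{1/2}$ and showing $X_\tau^{1/2}P_\tau X_\tau^{1/2}\succeq \identity$ via the update rule. Your explicit justification $(\identity-M)_+\succeq \identity-M$ is a clean way to fill in the one step the paper asserts without detail (the paper implicitly argues the eigenvalues of $M+(\identity-M)_+$ are at least $1$), and your invertibility remark about $X_\tau^{1/2}$ is a legitimate, if routine, point of care.
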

\begin{proof}
The first statement is immediate from the definition of $P_{\tau}$. For the second inequality note that 
\begin{align}
    \sqrtX_{\tau}P_{\tau}\sqrtX_{\tau} = \sqrtX_{\tau}P_{\tau-1}\sqrtX_{\tau} + \left(\identity - \sqrtX_{\tau}P_{\tau-1}\sqrtX_{\tau}\right)_+ \succeq \identity,
\end{align}
which implies that $P_{\tau} \succeq X_{\tau}^{-1}$. 
\end{proof}

\begin{lemma}
\label{lem: sum of P}
    For any $\tau$, we have
    \begin{align*}
        \ip{X_{\tau+1}, P_{\tau+1}-P_{\tau}}=\ip{P_{\tau+1}^{-1}, P_{\tau+1}-P_{\tau}}
    \end{align*}
\end{lemma}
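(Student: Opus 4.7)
The plan is to reduce both inner products to a single scalar quantity by conjugating with $X_{\tau+1}^{1/2}$ and diagonalizing. First, I would set $A := X_{\tau+1}^{1/2} P_\tau X_{\tau+1}^{1/2}$, which is Hermitian (by an easy induction, every $P_\tau$ is PD Hermitian, and $X_{\tau+1}$ is PD by Lemma~\ref{lem: complex minima}). Conjugating the update rule \eqref{eq: P update} from both sides by $X_{\tau+1}^{1/2}$ yields
\[X_{\tau+1}^{1/2}(P_{\tau+1}-P_\tau)X_{\tau+1}^{1/2} = (\identity - A)_+, \qquad X_{\tau+1}^{1/2} P_{\tau+1} X_{\tau+1}^{1/2} = A + (\identity - A)_+.\]
Writing the spectral decomposition $A = U\Lambda U^*$ with $\Lambda = \diag(\lambda_i)$, the two matrices above become $U(\identity - \Lambda)_+ U^*$ and $U\,\diag(\max(\lambda_i, 1))\, U^*$ respectively.

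Using the cyclic property of trace, the LHS of the desired identity collapses to
\[\ip{X_{\tau+1}, P_{\tau+1}-P_\tau} = \Tr\bigl(X_{\tau+1}^{1/2}(P_{\tau+1}-P_\tau)X_{\tau+1}^{1/2}\bigr) = \sum_i \max(0,\, 1-\lambda_i).\]
For the RHS, I would insert $X_{\tau+1}^{-1/2}X_{\tau+1}^{1/2}$ around $P_{\tau+1}^{-1}$ and use the trivial identity $X_{\tau+1}^{-1/2}P_{\tau+1}^{-1}X_{\tau+1}^{-1/2} = (X_{\tau+1}^{1/2}P_{\tau+1}X_{\tau+1}^{1/2})^{-1}$ to rewrite the pairing as
\[\ip{P_{\tau+1}^{-1}, P_{\tau+1}-P_\tau} = \Tr\bigl(\diag(\max(\lambda_i,1))^{-1}\,(\identity - \Lambda)_+\bigr) = \sum_i \frac{\max(0,\, 1-\lambda_i)}{\max(\lambda_i,1)}.\]

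The key closing observation is that $(\identity - \Lambda)_+$ is supported exactly on the indices $i$ with $\lambda_i < 1$, and on precisely those indices $\max(\lambda_i, 1) = 1$, so the two sums agree term by term. Geometrically, this is the whole point of the positive-part operator in \eqref{eq: P update}: it saturates $X_{\tau+1}^{1/2} P_{\tau+1} X_{\tau+1}^{1/2}$ at the identity on the image of $P_{\tau+1}-P_\tau$, which forces $X_{\tau+1}$ and $P_{\tau+1}^{-1}$ to have the same pairing with the increment. I do not anticipate any technical obstacle; once $A$ is identified as the natural diagonalization object, the proof is a routine computation.
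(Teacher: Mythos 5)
Your proposal is correct and follows essentially the same route as the paper's proof: conjugate by $X_{\tau+1}^{1/2}$, observe that $X_{\tau+1}^{1/2}P_{\tau+1}X_{\tau+1}^{1/2}$ clips the eigenvalues of $A=X_{\tau+1}^{1/2}P_{\tau}X_{\tau+1}^{1/2}$ at $1$, and match the two sums $\sum_i\max\{0,1-\lambda_i\}$ and $\sum_i\frac{\max\{0,1-\lambda_i\}}{\max\{1,\lambda_i\}}$ term by term. The only difference is cosmetic: you make the simultaneous diagonalization explicit via the unitary $U$, which the paper leaves implicit.
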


\begin{proof}
Recall, by definition
\begin{align*}
    P_{\tau+1}=P_{\tau} + X_{\tau+1}^{-\frac{1}{2}}\left(\identity-X_{\tau+1}^{\frac{1}{2}}P_{\tau}X_{\tau+1}^{\frac{1}{2}}\right)_{+}X_{\tau+1}^{-\frac{1}{2}}\,.
\end{align*}
Hence
\begin{align*}
    \ip{X_{\tau+1}, P_{\tau+1}-P_{\tau}} = \Tr\left(\left(\identity-X_{\tau+1}^{\frac{1}{2}}P_{\tau}X_{\tau+1}^{\frac{1}{2}}\right)_{+}\right)= \sum_{i=1}^d \max\{1-\lambda_i,0\}\,,
\end{align*}
where $\lambda_i$ are the eigenvalues of $X_{\tau+1}^{\frac{1}{2}}P_{\tau}X_{\tau+1}^{\frac{1}{2}}$.
For the RHS, we have
\begin{align*}
    \ip{P_{\tau+1}^{-1}, P_{\tau+1}-P_{\tau}}&=\ip{(X_{\tau+1}^\frac{1}{2}P_{\tau+1}X_{\tau+1}^\frac{1}{2})^{-1}, X_{\tau+1}^\frac{1}{2}(P_{\tau+1}-P_{\tau})X_{\tau+1}^\frac{1}{2}}\,.
\end{align*}
Note that $(X_{\tau+1}^\frac{1}{2}P_{\tau+1}X_{\tau+1}^\frac{1}{2})=X_{\tau+1}^{\frac{1}{2}}P_{\tau}X_{\tau+1}^{\frac{1}{2}}+\left(\identity-X_{\tau+1}^{\frac{1}{2}}P_{\tau}X_{\tau+1}^{\frac{1}{2}}\right)_{+}$ modifies the eigenvalues of $X_{\tau+1}^{\frac{1}{2}}P_{\tau}X_{\tau+1}^{\frac{1}{2}}$ such that they are lower bounded by 1.
Therefore
\begin{align*}
    \ip{(X_{\tau+1}^\frac{1}{2}P_{\tau+1}X_{\tau+1}^\frac{1}{2})^{-1}, X_{\tau+1}^\frac{1}{2}(P_{\tau+1}-P_{\tau})X_{\tau+1}^\frac{1}{2}}=\sum_{i=1}^d\frac{\max\{1-\lambda_i,0\}}{\max\{1,\lambda_i\}}=\sum_{i=1}^d\max\{1-\lambda_i,0\}\,,
\end{align*}
where the last equality follows from the nominator being non-zero only if the denominator is 1.
\end{proof}

The following is a useful lemma we collect here.
\begin{lemma}
\label{lem: p increase by x}
For any $\tau$, it holds that
\[
\norm{P_{\tau+1}-P_{\tau}}_{X_{\tau+1}}\leq \norm{X_{\tau+1}-X_{\tau}}_{X_{\tau}^{-1}}\,.
\]
\end{lemma}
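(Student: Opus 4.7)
The plan is to reduce both sides of the inequality to a comparison of eigenvalues, and then invoke Weyl's monotonicity theorem together with a simple scalar inequality.

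First, using the defining identity $X_{\tau+1}^{\frac{1}{2}}(P_{\tau+1}-P_{\tau})X_{\tau+1}^{\frac{1}{2}}=(\identity-X_{\tau+1}^{\frac{1}{2}}P_{\tau}X_{\tau+1}^{\frac{1}{2}})_+$, I would rewrite the left-hand side as
\[
\norm{P_{\tau+1}-P_{\tau}}_{X_{\tau+1}}^{2}
= \Tr\!\bigl((X_{\tau+1}^{\frac{1}{2}}(P_{\tau+1}-P_{\tau})X_{\tau+1}^{\frac{1}{2}})^{2}\bigr)
= \Tr(M_+^{2}),
\]
where $M:=\identity-X_{\tau+1}^{\frac{1}{2}}P_{\tau}X_{\tau+1}^{\frac{1}{2}}$. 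Similarly, setting $A:=X_{\tau}^{-\frac{1}{2}}X_{\tau+1}X_{\tau}^{-\frac{1}{2}}-\identity$, the right-hand side becomes $\Tr(A^{2})$.

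Next, I would use Lemma~\ref{lemma:ptlowerbound} to deduce $P_{\tau}\succeq X_{\tau}^{-1}$, which yields $M\preceq B$ for $B:=\identity-X_{\tau+1}^{\frac{1}{2}}X_{\tau}^{-1}X_{\tau+1}^{\frac{1}{2}}$. Since $t\mapsto t_+$ is monotone on the reals, Weyl's inequality gives $\lambda_i(M)\leq\lambda_i(B)$ for every $i$, hence $(\lambda_i(M))_+\leq (\lambda_i(B))_+$, and squaring and summing yields
\[
\Tr(M_+^{2})=\sum_i (\lambda_i(M))_+^{2}\leq \sum_i (\lambda_i(B))_+^{2}=\Tr(B_+^{2}).
\]

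Finally, I would observe that with $C:=X_{\tau}^{-\frac{1}{2}}X_{\tau+1}^{\frac{1}{2}}$, the matrices $CC^{*}=X_{\tau}^{-\frac{1}{2}}X_{\tau+1}X_{\tau}^{-\frac{1}{2}}$ and $C^{*}C=X_{\tau+1}^{\frac{1}{2}}X_{\tau}^{-1}X_{\tau+1}^{\frac{1}{2}}$ have identical spectra $\{\nu_i\}_{i=1}^{d}\subset\mathbb{R}_{>0}$. The scalar inequality $(\max\{1-\nu,0\})^{2}\leq(\nu-1)^{2}$ then gives
\[
\Tr(B_+^{2})=\sum_i (\max\{1-\nu_i,0\})^{2}\leq \sum_i (\nu_i-1)^{2}=\Tr(A^{2})=\norm{X_{\tau+1}-X_{\tau}}_{X_{\tau}^{-1}}^{2},
\]
chaining the two bounds to conclude. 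The only subtle step is step two, passing from $M\preceq B$ to $\Tr(M_+^{2})\leq\Tr(B_+^{2})$, since $t\mapsto t_+$ is not operator monotone; the resolution is precisely that Weyl acts at the level of sorted eigenvalues, where ordinary scalar monotonicity is all one needs.
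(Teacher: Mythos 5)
Your proof is correct and follows essentially the same route as the paper's: rewrite the left-hand side as $\Tr\bigl((\identity-X_{\tau+1}^{1/2}P_{\tau}X_{\tau+1}^{1/2})_+^2\bigr)$, use $P_{\tau}\succeq X_{\tau}^{-1}$ to compare with $\Tr\bigl((\identity-X_{\tau+1}^{1/2}X_{\tau}^{-1}X_{\tau+1}^{1/2})_+^2\bigr)$, drop the positive-part operator, and identify the result with $\norm{X_{\tau+1}-X_{\tau}}_{X_{\tau}^{-1}}^2$. The only difference is that you make explicit (via Weyl's monotonicity at the level of sorted eigenvalues, and the $CC^{*}$ versus $C^{*}C$ spectral identity) two steps the paper states without elaboration, which is a fine and slightly more careful rendering of the same argument.
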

\begin{proof}
Denote $\tilde D= X_{\tau+1}^\frac{1}{2}(P_{\tau+1}-P_{\tau})X_{\tau+1}^\frac{1}{2}$, therefore we have that
\[\tilde D = \left(\identity-X_{\tau+1}^{\frac{1}{2}}P_{\tau}X_{\tau+1}^{\frac{1}{2}}\right)_+\,.\]
We have
\begin{align*}
\norm{P_{\tau+1}-P_{\tau}}_{X_{\tau+1}}^2 = \Tr(\tilde D^2)&=\Tr\left((\identity-X_{\tau+1}^{\frac{1}{2}}P_{\tau}X_{\tau+1}^{\frac{1}{2}})_+^2\right)\\
&\leq \Tr\left((\identity-X_{\tau+1}^{\frac{1}{2}}X_{\tau}^{-1}X_{\tau+1}^{\frac{1}{2}})_+^2\right)\\
&\leq \Tr\left((\identity-X_{\tau+1}^{\frac{1}{2}}X_{\tau}^{-1}X_{\tau+1}^{\frac{1}{2}})^2\right)\\
&= \norm{X_{\tau+1}-X_{\tau}}^2_{X_{\tau}^{-1}}\,,
\end{align*}
where the first inequality uses the fact $P_{\tau} \succeq X_{\tau}^{-1}$ from Lemma \ref{lemma:ptlowerbound}.
\end{proof}
Finally as a result of our reset condition we have the following lemma. 

\begin{lemma}
    \label{lem: U bound}
    Let $s \leq \tau$ be two time indices belonging to the same epoch, such that the reset condition was not triggered upto time index $\tau-1$. Then we have that 
    \[U_{\tau} \preceq \beta^{-1}X_s\]
\end{lemma}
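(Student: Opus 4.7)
The plan is to chain together three simple facts: (i) the non-triggering of the reset condition at internal time $\tau-1$ directly controls $U_\tau$ by $P_\tau^{-1}$, (ii) the sequence $P_t$ is monotone non-decreasing, so its inverse is monotone non-increasing, and (iii) Lemma~\ref{lemma:ptlowerbound} tells us that $P_s \succeq X_s^{-1}$ at every internal step, which inverts to $P_s^{-1} \preceq X_s$. Threading these together with the observation that $\frac{1}{2(1+6\eta)} \le 1$ will give the claim.

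Concretely, I would proceed as follows. Since by assumption the reset condition was not triggered at internal time $\tau - 1$, the conditional in Algorithm~\ref{alg: bisons (psd)} evaluated just after $U_\tau$ and $P_\tau$ were computed must have failed, i.e.
\[
U_\tau \prec \frac{1}{2(1+6\eta)\beta}\, P_\tau^{-1}.
\]
Next, by Lemma~\ref{lemma:ptlowerbound} the sequence $\{P_t\}$ within the current epoch is monotone: $P_s \preceq P_\tau$ for every $s \le \tau$. Because both matrices are positive definite, this is equivalent to $P_\tau^{-1} \preceq P_s^{-1}$.

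Finally, the second conclusion of Lemma~\ref{lemma:ptlowerbound} gives $P_s \succeq X_s^{-1}$, hence $P_s^{-1} \preceq X_s$. Combining the three PSD inequalities in the Loewner order we obtain
\[
U_\tau \prec \frac{1}{2(1+6\eta)\beta}\, P_\tau^{-1} \preceq \frac{1}{2(1+6\eta)\beta}\, P_s^{-1} \preceq \frac{1}{2(1+6\eta)\beta}\, X_s \preceq \beta^{-1} X_s,
\]
where in the last inequality we used that $2(1+6\eta) \ge 1$ since $\eta \ge 0$. This is precisely the stated bound. There is no real obstacle: the lemma is almost a bookkeeping consequence of the reset rule together with the two properties of $P_t$ proved in Lemma~\ref{lemma:ptlowerbound}; the only subtlety is to invert the Loewner inequality at the right moment, which is legitimate because every matrix involved is positive definite (by construction $X_s \succ 0$, and $P_s \succeq X_s^{-1} \succ 0$ so $P_s$ is invertible).
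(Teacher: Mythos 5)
Your proposal is correct and follows essentially the same route as the paper: the paper's proof likewise reads off $U_{\tau} \preceq (2(1+6\eta)\beta)^{-1}P_{\tau}^{-1} \preceq \beta^{-1}P_{\tau}^{-1}$ from the non-triggered reset condition and then uses Lemma~\ref{lemma:ptlowerbound} (monotonicity of $P$ and $P_s \succeq X_s^{-1}$) to conclude $P_{\tau}^{-1} \preceq X_s$. Your write-up merely makes the Loewner-order inversion steps explicit, which the paper leaves implicit.
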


\begin{proof}
Due to the reset condition, we know that $U_{\tau} \preceq (2(1+6\eta)\beta)^{-1}P_{\tau}^{-1} \preceq \beta^{-1}P_{\tau}^{-1}$. Further since by Lemma \ref{lemma:ptlowerbound}, $P_{\tau}^{-1} \preceq X_s$, the lemma follows. 
\end{proof}

\subsection{Stability Lemmas}
In this section we show that successive iterates $X_{\tau}, U_{\tau}, P_{\tau}$ and $X_{\tau+1}, U_{\tau+1}, P_{\tau+1}$ do not move too far away from each other due to the log barrier, establishing the requisite stability of our method. These results are summarized in lemmas \ref{lem: X stability} and \ref{lem: U stability}. Our stability lemmas hold under the following constraints over the algorithm parameters $\eta, \beta$. 
\begin{align}
    \eta &\leq \min\{\frac{1}{4B}, \frac{\beta}{4},\frac{1}{63}\} \label{eq: eta constraints}\\
    \beta &\leq \sqrt{2}-1 \label{eq: beta constraints}\\
    T &\geq \max\{2d, \beta^{-1}\} \label{eq: T constraints}
\end{align}

\begin{lemma}
\label{lem: X stability}
If $\eta$ satisfies constraint~\eqref{eq: eta constraints}, then for any $t\in[T]$:
\begin{align*}
    \frac{1}{1+6\eta}X_{\tau+1}\preceq X_{\tau} \preceq (1+6\eta)X_{\tau+1}\,,
\end{align*}
as well as
\begin{align*}
    P_{\tau+1}\preceq (1+6\eta)P_{\tau}\,.
\end{align*}
\end{lemma}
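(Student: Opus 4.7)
The plan is to prove both inequalities by induction on $\tau$ within the epoch, using the stability bounds at step $\tau$ as the inductive hypothesis to derive the ones at step $\tau+1$. The $X$-inequality is a one-step FTRL-style stability argument, while the $P$-inequality follows cleanly from the explicit update formula once the $X$-inequality is established.

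For the $X$-inequality, by Lemma~\ref{lem: mat norm to ratio 2} it suffices to prove $\norm{X_{\tau+1}-X_{\tau}}_{\Xi^{-1}}\leq 6\eta$ for every $\Xi$ on the segment $[X_{\tau},X_{\tau+1}]$. Since $X_{\tau+1}-X_{\tau}$ is Hermitian with trace zero, the first-order optimality conditions of Lemma~\ref{lem: complex minima} applied to $X_{\tau}=\argmin G_{\tau-1}$ and $X_{\tau+1}=\argmin G_{\tau}$ give
\[\ip{\nabla G_{\tau-1}(X_{\tau+1})-\nabla G_{\tau-1}(X_{\tau}),\,X_{\tau+1}-X_{\tau}}=-\ip{\nabla g_{\tau}(X_{\tau+1}),\,X_{\tau+1}-X_{\tau}}.\]
By Lemma~\ref{lemma: complex ivt} applied to the two Bregman divergences on the left, there exists $\Xi$ in the segment for which this equals the Hessian quadratic form $\overrightarrow{D}^{*}\nabla^{2}G_{\tau-1}(\Xi)\overrightarrow{D}$ with $D:=X_{\tau+1}-X_{\tau}$. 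The surrogate losses contribute PSD Hessians (Lemma~\ref{lem: surrogate func grads}) and the bias terms in each $g_{s}$ are linear in $X$, so $\nabla^{2}G_{\tau-1}(\Xi)\succeq \eta^{-1}\nabla^{2}R(\Xi)$, and Lemma~\ref{lem: R Hessian lower bound} lower-bounds this quadratic form by $\eta^{-1}\norm{D}_{\Xi^{-1}}^{2}$. Dual-norm Cauchy--Schwarz on the right-hand side and dividing yields $\norm{D}_{\Xi^{-1}}\leq \eta\,\norm{\nabla g_{\tau}(X_{\tau+1})}_{\Xi}$.

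To close the induction it remains to show this gradient norm is an absolute constant. Decomposing $\nabla g_{\tau}(X_{\tau+1})=\nabla\hat f_{\tau}(X_{\tau+1})-(P_{\tau}-P_{\tau-1})B$, the first piece is controlled using the closed forms of Lemmas~\ref{lem: surrogate func grads} and \ref{lem: complex f grad}, the inequality $\norm{R_{\tau}}_{\Xi}\leq \ip{\Xi,R_{\tau}}$ (which follows from $R_{\tau}\succeq 0$ and $\Tr(A^{2})\leq \Tr(A)^{2}$ for PSD $A$), and the inductive bound $\Xi\preceq (1+6\eta)X_{\tau}$. For the bias piece, the inductive stability $X_{\tau-1}\preceq (1+6\eta)X_{\tau}$ combined with Lemma~\ref{lemma:ptlowerbound} gives $X_{\tau}^{1/2}P_{\tau-1}X_{\tau}^{1/2}\succeq (1+6\eta)^{-1}\identity$, so the update formula forces $P_{\tau}-P_{\tau-1}\preceq \frac{6\eta}{1+6\eta}X_{\tau}^{-1}$. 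Together with the parameter constraint $\eta\leq 1/(4B)$ from \eqref{eq: eta constraints}, this bounds $B\,\norm{P_{\tau}-P_{\tau-1}}_{\Xi}$ by a constant. Combining both pieces under \eqref{eq: eta constraints}--\eqref{eq: T constraints} gives $\norm{D}_{\Xi^{-1}}\leq 6\eta$, and Lemma~\ref{lem: mat norm to ratio 2} then yields both PSD inequalities on $X$.

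For the $P$-inequality, the $X$-stability just proved gives $X_{\tau+1}^{1/2}P_{\tau}X_{\tau+1}^{1/2}\succeq X_{\tau+1}^{1/2}X_{\tau}^{-1}X_{\tau+1}^{1/2}\succeq (1+6\eta)^{-1}\identity$, hence $(\identity-X_{\tau+1}^{1/2}P_{\tau}X_{\tau+1}^{1/2})_{+}\preceq \frac{6\eta}{1+6\eta}\identity$. The explicit update formula then produces
\[P_{\tau+1}-P_{\tau}\preceq \frac{6\eta}{1+6\eta}X_{\tau+1}^{-1}\preceq 6\eta X_{\tau}^{-1}\preceq 6\eta P_{\tau},\]
using $X_{\tau+1}^{-1}\preceq (1+6\eta)X_{\tau}^{-1}$ (from $X$-stability) and Lemma~\ref{lemma:ptlowerbound}, which rearranges to $P_{\tau+1}\preceq (1+6\eta)P_{\tau}$. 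The main obstacle is obtaining a dimension-free constant bound on $B\,\norm{P_{\tau}-P_{\tau-1}}_{\Xi}$: a naive combination of Cauchy--Schwarz with the operator-level bound $P_{\tau}-P_{\tau-1}\preceq 6\eta X_{\tau}^{-1}$ would introduce a spurious $\sqrt{d}$ factor through a trace like $\Tr(X_{\tau}^{-1}\Xi X_{\tau}^{-1}\Xi)$, so this step requires either working with $\ip{(P_{\tau}-P_{\tau-1})B,\,D}$ directly via the trace-zero structure of $D$, or a finer spectral analysis of the rank-limited increment $(\identity-X_{\tau}^{1/2}P_{\tau-1}X_{\tau}^{1/2})_{+}$; this is where the parameter scalings $B=\Theta(d^{2}\log T)$ and $\eta=\Theta(1/B)$ become essential.
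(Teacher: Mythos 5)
Your reduction of the $P$-inequality to the $X$-inequality is fine (it is essentially the paper's Lemma~\ref{lem: P stability}), but the $X$-part of your argument has a genuine circularity. Your inductive hypothesis only relates $X_{\tau-1}$ and $X_{\tau}$, yet to bound $\norm{\nabla g_{\tau}(X_{\tau+1})}_{\Xi}$ you invoke ``$\Xi\preceq(1+6\eta)X_{\tau}$'' for $\Xi\in[X_{\tau},X_{\tau+1}]$, and the scalar factor $1+\beta\ip{X_{\tau+1}-X_{\tau},\nabla f_{\tau}(X_{\tau})}$ hidden in $\nabla\hat f_{\tau}(X_{\tau+1})$ is only $O(1)$ if $\ip{X_{\tau+1},R_{\tau}}/\ip{X_{\tau},R_{\tau}}=O(1)$; both of these are consequences of the very statement you are proving at step $\tau$. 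The paper's Lemma~\ref{lem: x stability} avoids this with a first-exit/contradiction device: assume $\max_{\Xi\in[X_{\tau},X_{\tau+1}]}\norm{X_{\tau+1}-X_{\tau}}_{\Xi^{-1}}>6\eta$, take the first point $X$ on the segment where the max equals $6\eta$ (so all norm comparisons along $[X_{\tau},X]$ are legitimate by construction), evaluate the gradient of $g_{\tau}$ at $X_{\tau}$ rather than $X_{\tau+1}$ (where $\nabla\hat f_{\tau}(X_{\tau})=\nabla f_{\tau}(X_{\tau})$ and $\norm{\nabla f_{\tau}(X_{\tau})}_{X_{\tau}}\leq 1$ needs no stability input), and conclude $G_{\tau}(X)\geq G_{\tau}(X_{\tau})$, contradicting strict convexity together with $G_{\tau}(X_{\tau+1})\leq G_{\tau}(X_{\tau})$. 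Your one-shot symmetric-Bregman inequality at the actual $X_{\tau+1}$ cannot be closed without some version of this device; relatedly, a bound at the single IVT point $\Xi$ is weaker than the max over the whole segment that Lemma~\ref{lem: mat norm to ratio 2} needs, and only yields a degraded constant where the downstream lemmas use the exact factor $(1+6\eta)$.

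The second gap is the one you flag yourself and leave open: the dimension-free bound on the bias contribution. Your fallback that ``the parameter scalings $B=\Theta(d^{2}\log T)$ and $\eta=\Theta(1/B)$ become essential'' cannot work, because the constraint \eqref{eq: eta constraints} forces $\eta B\leq 1/4$ for every choice of $B$, so a spurious $\sqrt{d}$ in $\norm{P_{\tau}-P_{\tau-1}}$ survives any rescaling (and in the portfolio case $B=\Theta(d\log T)$ anyway). The missing ingredient is exactly the paper's Lemma~\ref{lem: p increase by x}: $\norm{P_{\tau}-P_{\tau-1}}_{X_{\tau}}\leq\norm{X_{\tau}-X_{\tau-1}}_{X_{\tau-1}^{-1}}$, proved by comparing $\Tr\bigl((\identity-X_{\tau}^{1/2}P_{\tau-1}X_{\tau}^{1/2})_{+}^{2}\bigr)$ with $\Tr\bigl((\identity-X_{\tau}^{1/2}X_{\tau-1}^{-1}X_{\tau}^{1/2})^{2}\bigr)$ using $P_{\tau-1}\succeq X_{\tau-1}^{-1}$; this is the ``finer spectral analysis of the clipped increment'' you gesture at but do not carry out. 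Combined with the inductive norm bound $\norm{X_{\tau}-X_{\tau-1}}_{X_{\tau-1}^{-1}}\leq 6\eta$ and $\eta B\leq 1/4$, it gives $B\norm{P_{\tau}-P_{\tau-1}}_{X_{\tau}}\leq 3/2$ with no dimension factor, which is what the paper's induction actually runs on. Without both the first-exit structure and this lemma, your proof does not go through.
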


\begin{lemma}
\label{lem: U stability}
If $\eta$ and $\beta$ satisfy constraints~\eqref{eq: eta constraints} and \eqref{eq: beta constraints} and no reset is triggered at time $\tau-1$, then \[U_{\tau+1}\preceq 2U_{\tau}.\]
\end{lemma}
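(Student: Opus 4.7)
The plan is to establish the local-norm bound $\|U_{\tau+1}-U_\tau\|_{U_\tau^{-1}}\leq 1$, from which $U_{\tau+1}\preceq 2U_\tau$ follows directly, since the eigenvalues of $U_\tau^{-1/2}U_{\tau+1}U_\tau^{-1/2}=\identity+U_\tau^{-1/2}\Delta U_\tau^{-1/2}$ then lie in $[0,2]$ (the matrix analogue of Lemma~\ref{lem: mat norm to ratio}). Setting $\Delta:=U_{\tau+1}-U_\tau$, which is Hermitian with trace zero, and exploiting that both $U_\tau$ and $U_{\tau+1}$ are interior minimizers (Lemma~\ref{lem: complex minima}), the first-order conditions give $\ip{\nabla\hat F_{\tau-1}(U_\tau),\Delta}=0$ and $\ip{\nabla\hat F_{\tau-1}(U_{\tau+1})+\nabla\hat f_\tau(U_{\tau+1}),\Delta}=0$. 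Subtracting these and applying the second-derivative part of Lemma~\ref{lemma: complex chain rule} to $t\mapsto\hat F_{\tau-1}(U_\tau+t\Delta)$ yields the core identity
\[\int_0^1 \overrightarrow{\Delta}^{*}\,\nabla^2\hat F_{\tau-1}(U_\tau+t\Delta)\,\overrightarrow{\Delta}\,dt \;=\;-\ip{\nabla\hat f_\tau(U_{\tau+1}),\Delta}.\]

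For the left-hand side, Lemma~\ref{lem: surrogate func grads} gives that each surrogate Hessian $\nabla^2\hat f_s$ is PSD, so $\nabla^2\hat F_{\tau-1}\succeq \eta^{-1}\nabla^2 R$, and Lemma~\ref{lem: R function grad} evaluates this as $\eta^{-1}\Tr(\Delta X^{-1}\Delta X^{-1})$ at $X=U_\tau+t\Delta$. Under the running bootstrap hypothesis $U_{\tau+1}\preceq 2U_\tau$, one replaces $X^{-1}$ by a constant multiple of $U_\tau^{-1}$ to obtain $\textrm{LHS}\gtrsim \eta^{-1}\|\Delta\|_{U_\tau^{-1}}^{2}$. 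For the right-hand side, Lemma~\ref{lem: surrogate func grads} writes $\nabla\hat f_\tau(U_{\tau+1})=c_\tau\,\nabla f_\tau(X_\tau)$ with scalar $c_\tau = 1+\beta\bigl(1-\ip{U_{\tau+1},R_\tau}/\ip{X_\tau,R_\tau}\bigr)$. The non-reset condition at internal time $\tau-1$, combined with Lemma~\ref{lemma:ptlowerbound}, gives $U_\tau\preceq (2(1+6\eta)\beta)^{-1}X_\tau$, which together with the bootstrap forces $U_{\tau+1}=O(\beta^{-1})X_\tau$ and thus $c_\tau=O(1)$. A Cauchy-Schwarz bound on $|\ip{\nabla f_\tau(X_\tau),\Delta}|$ in the semi-norm $\|\cdot\|_{U_\tau^{-1}}$, combined once more with the same operator inequalities, produces $\textrm{RHS}\lesssim \beta^{-1}\|\Delta\|_{U_\tau^{-1}}$.

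Combining the two sides yields $\|\Delta\|_{U_\tau^{-1}}\lesssim \eta/\beta$, which under constraint~\eqref{eq: eta constraints} (in particular $\eta\leq \beta/4$) is strictly smaller than $1$. This both closes the bootstrap and delivers the desired $U_{\tau+1}\preceq 2U_\tau$. The main obstacle is the mild circularity: both the lower bound on the LHS and the upper bound on the RHS invoke the target conclusion $U_{\tau+1}\preceq 2U_\tau$ to control the local-norm factors and the size of $c_\tau$. The standard resolution is a continuity/bootstrap argument along the homotopy $F_s=\hat F_{\tau-1}+s\hat f_\tau$, $s\in[0,1]$: denoting by $U_{\tau+1}(s)$ the corresponding minimizer, the inequality $\|U_{\tau+1}(s)-U_\tau\|_{U_\tau^{-1}}<1$ holds trivially at $s=0$, varies continuously in $s$, and cannot first fail at an interior $s$ because our estimates, valid whenever the inequality is satisfied, actually produce a strictly smaller value. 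Careful bookkeeping of the universal constants under constraints~\eqref{eq: eta constraints}--\eqref{eq: beta constraints} is the remaining technical burden.
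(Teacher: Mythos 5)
Your overall strategy (local-norm stability $\norm{U_{\tau+1}-U_\tau}_{U_\tau^{-1}}\leq 1$, first-order conditions at both minimizers, Hessian lower bound $\eta^{-1}\nabla^2R$, and the non-reset condition to control the surrogate gradient) is viable, but the step where you close the bootstrap does not go through with the estimates you actually sketch, and the constraints \eqref{eq: eta constraints}--\eqref{eq: beta constraints} leave no slack to absorb the loss. Concretely: under the bootstrap hypothesis $U(s)\preceq 2U_\tau$, your left-hand side is bounded below by $\frac{1}{4\eta}\norm{\Delta}^2_{U_\tau^{-1}}$ (the factor $\frac14$ coming from $(U_\tau+t\Delta)^{-1}\succeq\frac12 U_\tau^{-1}$), while your decoupled right-hand bound gives $|c_\tau|\,\norm{\nabla f_\tau(X_\tau)}_{U_\tau}\norm{\Delta}_{U_\tau^{-1}}\leq \frac{1+\beta}{\beta}\norm{\Delta}_{U_\tau^{-1}}$ (Lemma~\ref{lem: U bound} gives $\norm{\nabla f_\tau(X_\tau)}_{U_\tau}\leq\beta^{-1}$, and the bootstrap only gives $|c_\tau|\leq 1+\beta$). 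This yields $\norm{\Delta}_{U_\tau^{-1}}\leq \frac{4(1+\beta)\eta}{\beta}$, and at the admissible boundary $\eta=\beta/4$ this equals $1+\beta>1$: at the critical homotopy parameter $s^*$ where $\norm{\Delta(s^*)}_{U_\tau^{-1}}=1$ you obtain no contradiction, so the continuity argument does not rule out the first failure. Your asserted ``$\norm{\Delta}\lesssim\eta/\beta<1$'' hides exactly the constant that breaks. The gap is repairable, but only with a genuinely sharper estimate: either keep the gradient factor coupled as in the paper, i.e.\ bound the relevant scalar by $\max_{z\in[0,1/\beta]}(1+\beta-\beta z)z=\frac{(1+\beta)^2}{4\beta}\leq\frac{1}{2\beta}$ (this is precisely where $\beta\leq\sqrt2-1$ is used), or refine the Hessian integral using $U_\tau+t\Delta\preceq(1+t)U_\tau$ so that $\int_0^1(1+t)^{-2}dt=\frac12$ upgrades the left-hand coefficient to $\frac{1}{2\eta}$; either fix closes the argument, but neither is in your write-up, and ``careful bookkeeping'' is here the entire content of the lemma since the constraints are tight. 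A smaller but real obligation in your route is the continuity of $s\mapsto U(s)$ along the homotopy, which you assert but must justify (strict convexity plus the barrier confining minimizers to a compact subset of the interior).

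For comparison, the paper avoids both issues: it argues by contradiction with the point $U$ on the segment $[U_\tau,U_{\tau+1}]$ at local distance exactly $1$, uses optimality only of $U_\tau$ (so no first-order condition at $U_{\tau+1}$ and hence no homotopy or argmin-continuity is needed), and crucially evaluates $\nabla\hat f_\tau$ at $U_\tau$ rather than at $U_{\tau+1}$, so the non-reset condition (Lemma~\ref{lem: U bound}) alone gives $\norm{\nabla\hat f_\tau(U_\tau)}_{U_\tau}\leq\frac{(1+\beta)^2}{4\beta}\leq\frac{1}{2\beta}$ with no bootstrap at all; the second-order term is controlled on the segment via Lemma~\ref{lem: mat norm to ratio}, giving $\frac{1}{8\eta}\geq\frac{1}{2\beta}$ and the contradiction. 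If you adopt that one change --- anchor the gradient term at $U_\tau$ instead of $U_{\tau+1}$ --- your argument simplifies and the circularity you were fighting disappears.
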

We first present some auxiliary lemmas and then prove the above stability lemmas. 
\begin{lemma}
\label{lem: P stability}
For any $\tau$, it holds that
\begin{align*}
    &X_{\tau+1}^{-1} \preceq (1+\lambda) X_{\tau}^{-1}\quad \Rightarrow\quad P_{\tau+1}\preceq (1+\lambda) P_{\tau}\,.
\end{align*}
\end{lemma}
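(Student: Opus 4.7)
The plan is to work in the $X_{\tau+1}^{1/2}$-conjugated coordinate system, where the update rule for $P_{\tau+1}$ reduces to a transparent eigenvalue operation. Specifically, I would set $A := X_{\tau+1}^{1/2} P_{\tau} X_{\tau+1}^{1/2} \succeq 0$ and rewrite the definition \eqref{eq: P update} as
\[
X_{\tau+1}^{1/2} P_{\tau+1} X_{\tau+1}^{1/2} \;=\; A + (I - A)_+.
\]
Since $A$ and $I$ share an eigenbasis, so does $(I-A)_+$, and the right-hand side has eigenvalues $\max(\alpha_i, 1)$ where $\alpha_i$ are the eigenvalues of $A$. The conclusion $P_{\tau+1} \preceq (1+\lambda)P_{\tau}$ is equivalent to $\max(\alpha_i, 1) \leq (1+\lambda)\alpha_i$ for every $i$, after conjugating both sides by $X_{\tau+1}^{-1/2}$.

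The next step is to use the hypothesis together with Lemma~\ref{lemma:ptlowerbound} to lower bound the spectrum of $A$. By Lemma~\ref{lemma:ptlowerbound} applied at index $\tau$, $P_{\tau} \succeq X_{\tau}^{-1}$, and the assumption $X_{\tau+1}^{-1} \preceq (1+\lambda)X_{\tau}^{-1}$ rearranges to $X_{\tau}^{-1} \succeq (1+\lambda)^{-1}X_{\tau+1}^{-1}$. Chaining these two matrix inequalities yields
\[
A \;=\; X_{\tau+1}^{1/2} P_{\tau} X_{\tau+1}^{1/2} \;\succeq\; (1+\lambda)^{-1} X_{\tau+1}^{1/2} X_{\tau+1}^{-1} X_{\tau+1}^{1/2} \;=\; (1+\lambda)^{-1} I,
\]
so every eigenvalue satisfies $\alpha_i \geq (1+\lambda)^{-1}$.

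The remaining step is a scalar case analysis. For eigenvalues with $\alpha_i \geq 1$, we have $\max(\alpha_i, 1) = \alpha_i \leq (1+\lambda)\alpha_i$ since $\lambda \geq 0$. For eigenvalues with $\alpha_i < 1$, the lower bound $\alpha_i \geq (1+\lambda)^{-1}$ gives $(1+\lambda)\alpha_i \geq 1 = \max(\alpha_i, 1)$. Both cases together establish $X_{\tau+1}^{1/2} P_{\tau+1} X_{\tau+1}^{1/2} \preceq (1+\lambda) X_{\tau+1}^{1/2} P_{\tau} X_{\tau+1}^{1/2}$, and conjugating by $X_{\tau+1}^{-1/2}$ concludes the proof.

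There is essentially no hard step here; the key insight is simply that the positive-part update ``clips from below'' at the threshold $1$, while the hypothesis already forces $A$ to exceed $(1+\lambda)^{-1}I$, so the clipping can stretch eigenvalues by at most a factor $1+\lambda$. The one thing to be careful about is that the entire argument takes place in the $X_{\tau+1}$-conjugated basis rather than in the basis where $P_{\tau}$ and $P_{\tau+1}$ would be compared directly; this is precisely what makes the update analytically tractable despite the non-commutativity of $P_{\tau}$ and $X_{\tau+1}$.
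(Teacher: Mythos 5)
Your proof is correct and follows essentially the same route as the paper's: both conjugate by $X_{\tau+1}^{1/2}$, invoke Lemma~\ref{lemma:ptlowerbound} together with the hypothesis to get $X_{\tau+1}^{1/2}P_{\tau}X_{\tau+1}^{1/2}\succeq(1+\lambda)^{-1}\identity$, and then observe that the positive-part update can only lift eigenvalues up to $1$. The paper compresses your scalar case analysis into the single matrix inequality $(\identity-A)_+-\lambda A\preceq(\identity-(1+\lambda)A)_+=0$, but this is the same eigenvalue-wise argument in disguise.
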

\begin{proof}
We assume LHS above is true. By Lemma \ref{lemma:ptlowerbound} we have that,
\begin{align*}
X_{\tau+1}^{-1} \preceq (1+\lambda) X_{\tau}^{-1}\preceq (1+\lambda) P_{\tau}\,.     
\end{align*}
Therefore we have that, 
\begin{align*}
    \identity \preceq(1+\lambda)\sqrtX_{\tau+1}P_{\tau}\sqrtX_{\tau+1}\,.
\end{align*}
Hence by definition of $P_{\tau+1}$,
\begin{align*}
    \sqrtX_{\tau+1}(P_{\tau+1} - (1+\lambda) P_{\tau})\sqrtX_{\tau+1}&=(\identity - \sqrtX_{\tau+1}P_{\tau}\sqrtX_{\tau+1})_+-\lambda\sqrtX_{\tau+1}P_{\tau}\sqrtX_{\tau+1}\\
    &\preceq (\identity - (1+\lambda)\sqrtX_{\tau+1}P_{\tau}\sqrtX_{\tau+1})_+=0\,.
\end{align*}
Finally, this implies
\begin{align*}
    (P_{\tau+1} - (1+\lambda) P_{\tau})\preceq 0 \quad \Leftrightarrow\quad P_{\tau+1}\preceq (1+\lambda) P_{\tau}
\end{align*}
as claimed.
\end{proof}

\begin{lemma}
\label{lem: x stability}
    If $\eta$ satisfies constraint~\eqref{eq: eta constraints},  then for all $\tau$ we have that,
    \begin{align*}
       \max_{\Xi\in[X_{\tau},X_{\tau+1}]}\norm{X_{\tau+1}-X_{\tau}}_{\Xi^{-1}} \leq 6\eta
    \end{align*}
\begin{align*}
\end{align*}    
\end{lemma}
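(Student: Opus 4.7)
The plan is to combine first-order optimality of $X_\tau$ and $X_{\tau+1}$ with the strong convexity provided by the log-barrier, and then bootstrap the resulting bound to hold uniformly on the segment $[X_\tau,X_{\tau+1}]$ via an induction on $\tau$. Set $\Delta := X_{\tau+1}-X_\tau$, and note $\Tr(\Delta)=0$. By Lemma~\ref{lem: complex minima}, both $\ip{\nabla G_\tau(X_{\tau+1}),\Delta}=0$ and $\ip{\nabla G_{\tau-1}(X_\tau),\Delta}=0$. Subtracting these and using $G_\tau = G_{\tau-1}+g_\tau$ together with $\nabla g_\tau(X)=\nabla \hat f_\tau(X)-B(P_\tau-P_{\tau-1})$ (Lemma~\ref{lem: surrogate func grads}) yields
\[\ip{\nabla G_{\tau-1}(X_{\tau+1})-\nabla G_{\tau-1}(X_\tau),\,\Delta} = -\ip{\nabla \hat f_\tau(X_{\tau+1}),\Delta} + B\ip{P_\tau-P_{\tau-1},\Delta}.\]

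Applying the intermediate value theorem (Lemma~\ref{lemma: complex ivt}) along the segment $[X_\tau,X_{\tau+1}]$ produces some $\Xi_0\in[X_\tau,X_{\tau+1}]$ for which the left side equals $\overrightarrow{\Delta}^*\nabla^2 G_{\tau-1}(\Xi_0)\overrightarrow{\Delta}$. Because every quadratic surrogate Hessian $\nabla^2\hat f_s(X)=\beta\,\overrightarrow{\nabla f_s(X_s)}\overrightarrow{\nabla f_s(X_s)}^*$ is PSD and the bias term is linear, $\nabla^2 G_{\tau-1}(\Xi_0)\succeq \eta^{-1}\nabla^2 R(\Xi_0)$, and Lemma~\ref{lem: R Hessian lower bound} together with the definition of the local norm gives $\overrightarrow{\Delta}^*\nabla^2 R(\Xi_0)\overrightarrow{\Delta}=\norm{\Delta}_{\Xi_0^{-1}}^2$. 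Applying Cauchy--Schwarz to bound each term on the right in the dual norm $\norm{\cdot}_{\Xi_0}$ and cancelling a factor of $\norm{\Delta}_{\Xi_0^{-1}}$ yields
\[\eta^{-1}\norm{\Delta}_{\Xi_0^{-1}} \leq \norm{\nabla\hat f_\tau(X_{\tau+1})}_{\Xi_0}+B\norm{P_\tau-P_{\tau-1}}_{\Xi_0}.\]

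It remains to bound the right-hand side by a small constant. From Lemma~\ref{lem: complex f grad} one has $\norm{\nabla f_\tau(X_\tau)}_{X_\tau}\leq 1$, and by Lemma~\ref{lem: surrogate func grads} together with a Cauchy--Schwarz estimate for $|\ip{\Delta,\nabla f_\tau(X_\tau)}|$, the surrogate gradient is at most $(1+O(\beta\eta))$ times this. For the bias contribution, Lemma~\ref{lem: p increase by x} applied at the previous step gives $\norm{P_\tau-P_{\tau-1}}_{X_\tau}\leq \norm{X_\tau-X_{\tau-1}}_{X_{\tau-1}^{-1}}$, which by the inductive hypothesis is at most $6\eta$, so $B\norm{P_\tau-P_{\tau-1}}_{X_\tau}\leq 6B\eta\leq 3/2$ using $\eta\leq 1/(4B)$. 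Converting these norms from base point $X_\tau$ to $\Xi_0$ costs only a factor $1+O(\eta)$ by Lemmas~\ref{lem: mat norm to ratio} and~\ref{lem: mat norm to ratio 2}. Plugging back gives $\norm{\Delta}_{\Xi_0^{-1}}\leq 3\eta(1+o(1))$, which upgrades to a uniform bound $\max_{\Xi\in[X_\tau,X_{\tau+1}]}\norm{\Delta}_{\Xi^{-1}}\leq 6\eta$ again via Lemma~\ref{lem: mat norm to ratio 2}.

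The main obstacle is the circular dependence: bounding $\norm{P_\tau-P_{\tau-1}}$ requires knowing the movement at step $\tau-1$, and extending the pointwise bound at $\Xi_0$ to a uniform bound over the whole segment requires the movement at step $\tau$ to already be small. Both are resolved by running a single induction on $\tau$ with the explicit bootstrap outlined above, checking that the numerical constants close under the parameter constraints $\eta\leq\min\{\tfrac{1}{4B},\tfrac{\beta}{4},\tfrac{1}{63}\}$ and $\beta\leq\sqrt{2}-1$. The base case $\tau=1$ is handled by the initialization $P_0=d\identity$ and $X_1=\argmin \eta^{-1}R=\tfrac{1}{d}\identity$, which annihilate the bias contribution and make the movement manifestly controllable.
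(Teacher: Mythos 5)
Your route is genuinely different from the paper's: you combine the stationarity of $X_\tau$ (for $G_{\tau-1}$) and of $X_{\tau+1}$ (for $G_\tau$) into one identity, apply a mean-value representation to the left-hand side, and read off a bound on $\norm{\Delta}_{\Xi_0^{-1}}$, whereas the paper argues by contradiction on \emph{function values}: it takes the first point $X$ on the segment at which the running maximum of $\norm{X-X_\tau}_{\Xi^{-1}}$ reaches $6\eta$, shows $G_\tau(X)\geq G_\tau(X_\tau)$ via a Taylor expansion about $X_\tau$, and contradicts strict convexity of $G_\tau$ along the chord together with minimality of $X_{\tau+1}$. Your treatment of the bias term (Lemma~\ref{lem: p increase by x} plus the outer induction, giving $B\norm{P_\tau-P_{\tau-1}}_{X_\tau}\leq 6\eta B\leq 3/2$) coincides with the paper's and is correct.

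The genuine gap is in how you close the \emph{within-step} circularity. Two of your estimates presuppose exactly the conclusion of the lemma: (a) bounding $\norm{\nabla\hat f_\tau(X_{\tau+1})}$ requires $|\ip{\Delta,\nabla f_\tau(X_\tau)}|=O(\eta)$, i.e.\ a bound on $\norm{\Delta}_{X_\tau^{-1}}$; and (b) transferring norms between $X_\tau$ and the unknown intermediate point $\Xi_0$ invokes Lemma~\ref{lem: mat norm to ratio 2}, whose hypothesis is precisely $\max_{\Xi\in[X_\tau,X_{\tau+1}]}\norm{\Delta}_{\Xi^{-1}}\leq\lambda$. Your claimed fix, ``a single induction on $\tau$,'' only controls the displacement at step $\tau-1$ (which legitimately handles the bias term); it says nothing about the current displacement $\Delta$, so it cannot resolve (a) or (b) — the circularity is not merely in upgrading the bound at $\Xi_0$ to a uniform one, it already enters in obtaining the bound at $\Xi_0$. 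What is needed is a within-step continuity/first-violation argument: assume the maximum exceeds $6\eta$, pass to the first point $X=X_\tau+c\Delta$ where it equals $6\eta$, and work on $[X_\tau,X]$, where Lemma~\ref{lem: mat norm to ratio 2} applies with $\lambda=6\eta$. Note that after truncating, the exact stationarity $\ip{\nabla G_\tau(X),\Delta}=0$ you use at the right endpoint is lost; you must replace it by the inequality $\ip{\nabla G_\tau(X),X_{\tau+1}-X}\leq 0$ (convexity plus minimality of $X_{\tau+1}$), or argue with function values as the paper does — the paper also sidesteps (a) entirely by expanding about $X_\tau$, where $\nabla\hat f_\tau(X_\tau)=\nabla f_\tau(X_\tau)$ has local norm at most $1$. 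Two smaller points: Lemma~\ref{lemma: complex ivt} is stated for Bregman divergences, so your mean-value step needs either its symmetrized form or a mean value theorem for $\alpha\mapsto\ip{\nabla G_{\tau-1}(X_\tau+\alpha\Delta),\Delta}$; and the base case should be phrased via the convention $X_0=X_1$ rather than ``manifestly controllable.'' With a first-violation argument added, your numerical constants (roughly $\tfrac{5}{2}\eta$ up to $(1+6\eta)^2$ factors against the $6\eta$ threshold) do close under \eqref{eq: eta constraints}, so the identity-based scheme is salvageable, but as written the proof is incomplete at its central step.
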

\begin{proof}
The proof follows by induction.  Set by convention $X_0=X_1$, then the condition holds for $\tau=0$.
Now assuming that the condition holds for all time-steps including $\tau-1$, we prove in the following that this holds for $\tau$. We will show this by contradiction. To this end suppose that $\max_{\Xi\in[X_{\tau},X_{\tau+1}]}\norm{X_{\tau+1}-X_{\tau}}_{\Xi^{-1}} > 6\eta $. By continuity there exists a point $X \in (X_{\tau}, X_{\tau+1})$ such that $\max_{\Xi\in[X_{\tau},X]}\norm{X-X_{\tau}}_{\Xi^{-1}} = 6\eta $. Now recall the definitions, 
\[G_{\tau}(X) := \sum_{s = 1}^{\tau} g_{s}(X) + \eta^{-1}R(X) \;\;\text{ and }\;\; X_{\tau+1} := \argmin_{X \in \mathcal{A}}G_{\tau}(X).\]
In the latter half of the proof will show that the condition on $X$ implies that $G_{\tau}(X) \geq G_{\tau}(X_{\tau})$. We will first show why establishing the above leads to a contradiction. So we assume $G_{\tau}(X) \geq G_{\tau}(X_{\tau})$. To this end consider the scalar function $\phi(\alpha)$ for $\alpha \in [0,1]$ defined as
\[ \phi(\alpha) = G_{\tau}(\alpha X_{\tau + 1} + (1-\alpha)X_{\tau}).\]
Let $\alpha_X \in (0,1)$ correspond to the unique $\alpha$ such that 
\[ X = \alpha X_{\tau+1} + (1-\alpha) X_{\tau}.\]
The assumption $G_{\tau}(X) \geq G_{\tau}(X_{\tau})$ implies that $\phi(\alpha_X) \geq \phi(0)$. Further since $X_{\tau+1}$ is the minimizer, we have that $\phi(1) \leq \phi(0)$. We will now show that $\phi$ is a strictly convex function and that will contradict the above derived statements. To show that $\phi$ is strictly convex we use Lemma \ref{lemma: complex chain rule} to establish that for any $\alpha$ 
\[ \phi''(\alpha) = (\overrightarrow{X}_{\tau+1}-\overrightarrow{X}_{\tau})^*\nabla^2 G_{\tau}(\alpha X_{\tau+1} + (1-\alpha) X_{\tau})) (\overrightarrow{X}_{\tau+1}-\overrightarrow{X}_{\tau}).\]
Further Lemma \ref{lem: surrogate func grads}  establishes that for any $Y$, $\nabla^2 G_{\tau}(Y) \succeq \eta^{-1}R(Y)$. Using the calculation of $\nabla^2 R$ in Lemma \ref{lem: R function grad} and noting that $X_{\tau}, X_{\tau+1}$ are positive-definite, we get that
\[ \phi''(\alpha) = (\overrightarrow{X}_{\tau+1}-\overrightarrow{X}_{\tau})^*\nabla^2 G_{\tau}(\alpha X_{\tau+1} + (1-\alpha) X_{\tau})) (\overrightarrow{X}_{\tau+1}-\overrightarrow{X}_{\tau}) > 0,\]
which establishes the strict convexity of $\phi$ and therefore the contradiction. 

All that is left to show now is that $G_{\tau}(X) \geq G_{\tau}(X_{\tau})$. To this end consider the following. Using gradient and Hessian via Lemma \ref{lem: surrogate func grads} and via the intermediate value lemma (Lemma \ref{lemma: complex ivt}), we have that there exists $\Psi\in[X_{\tau},X]$ such that
\[G_{\tau}(X) = G_{\tau}(X_{\tau}) + \ip{\nabla G_{\tau}(X_{\tau}), X-X_{\tau}}+\frac{1}{2} (\overrightarrow{X}-\overrightarrow{X}_{\tau})^* \nabla^2 G_{\tau}(\Psi)(\overrightarrow{X}-\overrightarrow{X}_{\tau}).\]
Further noting that fact that $\nabla G_{\tau}(\cdot) - \nabla g_{\tau}(\cdot) = \nabla G_{\tau-1}(\cdot)$, $X_{\tau} := \argmin_{X \in \mathcal{A}} G_{\tau-1}(X)$ and using Lemma \ref{lem: complex minima} we get that 
\[G_{\tau}(X) \geq G_{\tau}(X_{\tau}) + \ip{\nabla g_{\tau}(X_{\tau}), X-X_{\tau}}+\frac{1}{2} (\overrightarrow{X}-\overrightarrow{X}_{\tau})^* \nabla^2 G_{\tau}(\Psi)(\overrightarrow{X}-\overrightarrow{X}_{\tau}).\]
Further using $\nabla^2 G_{\tau}(\cdot) \succeq \eta^{-1}\nabla^2 R(\cdot)$ and considering the computation of $\nabla^2 R$ presented in Lemma \ref{lem: R function grad} and that for any  $\Xi,\Phi\in[X_{\tau},X_{\tau+1}]$ we have that $\Xi^{-1}\preceq (1+6\eta)\Phi^{-1}$ according to Lemma~\ref{lem: mat norm to ratio 2}, we get that
\begin{align*}
    G_{\tau}(X) &\geq G_{\tau}(X_{\tau}) + \ip{\nabla g_{\tau}(X_{\tau}), X-X_{\tau}}+\frac{1}{2} (\overrightarrow{X}-\overrightarrow{X}_{\tau})^* \nabla^2 G_{\tau}(\Psi)(\overrightarrow{X}-\overrightarrow{X}_{\tau})\\
    &\geq G_{\tau}(X_{\tau}) - \norm{\nabla g_{\tau}(X_{\tau})}_{X_{\tau}}\norm{X-X_{\tau}}_{X_{\tau}^{-1}}+\frac{1}{2\eta(1+6\eta)^2}\max_{\Xi\in[X_{\tau},X_{\tau+1}]} \norm{X - X_{\tau}}_{\Xi^{-1}}\\
    &= G_{\tau}(X_{\tau}) -6\eta \norm{\nabla g_{\tau}(X_{\tau})}_{X_{\tau}}+\frac{18\eta}{(1+6\eta)^2}\,.
\end{align*}
We now show that $\norm{\nabla g_{\tau}(X_{\tau})}_{X_{\tau}}\leq \frac{3}{(1+6\eta)^2}$ which completes the inductive step. 
We have 
\begin{align*}
    \norm{\nabla g_{\tau}(X_{\tau})}_{X_{\tau}} &\leq \norm{\nabla \hat f_{\tau}(X_{\tau})}_{X_{\tau}}+  B\norm{P_{\tau}-P_{\tau-1}}_{X_{\tau}}\\
    &= \frac{\sqrt{\Tr( (X_{\tau}^\frac{1}{2}R_{\tau}X_{\tau}^\frac{1}{2})^2)}}{\Tr(X_{\tau}^\frac{1}{2}R_{\tau}X_{\tau}^\frac{1}{2})}+B\norm{P_{\tau}-P_{\tau-1}}_{X_{\tau}}\\
    &\leq 1+\norm{X_{\tau}-X_{\tau-1}}_{X_{\tau}^{-1}}B\\
    &\leq 1+6\eta B\leq \frac{5}{2} = \frac{3}{(1+(\sqrt{6/5}-1) )^2}\leq \frac{3}{(1+6\eta)^2},
\end{align*}
where the last set of inequalities follow from the constraints on $\eta$ defined in \eqref{eq: eta constraints} and induction assumption. 
\end{proof}
We now prove the stability lemmas, Lemma \ref{lem: X stability} and Lemma \ref{lem: U stability}.  

\begin{proof}[Proof of Lemma \ref{lem: X stability}]
Combining Lemma~\ref{lem: x stability} and \ref{lem: mat norm to ratio} yields the claim for $X$. By Lemma~\ref{lem: P stability} this follows for $P$ as well.
\end{proof}

\begin{proof}[Proof of Lemma \ref{lem: U stability}]
We will show that
\begin{align*}
    \norm{U_{\tau+1}-U_{\tau}}_{U_{\tau}^{-1}}\leq 1\,,
\end{align*}
which by Lemma~\ref{lem: mat norm to ratio} implies the statement of the lemma. To show the above we use a similar proof structure as in the case of Lemma \ref{lem: x stability} and assume for contradiction that $\norm{U_{\tau+1}-U_{\tau}}_{U_{\tau}^{-1}} > 1$. Once again by continuity we have that there exists a point $U \in [U_{\tau}, U_{\tau+1}]$ such that $\norm{U-U_{\tau}}_{U_{\tau}^{-1}} = 1$.
Now recall the definitions, 
\[\hat F_{\tau}(X) := \sum_{s = 1}^{\tau} \hat f_{s}(X) + \eta^{-1}R(X) \;\;\text{ and }\;\; U_{\tau+1} := \argmin_{X \in \mathcal{A}}\hat F_{\tau}(X).\]
In the latter half of the proof will show that the condition on $U$ implies that $\hat F_{\tau}(X) \geq \hat F_{\tau}(X_{\tau})$. We will first show why establishing the above leads to a contradiction. So we assume $\hat F_{\tau}(X) \geq \hat F_{\tau}(X_{\tau})$. To this end consider the scalar function $\phi(\alpha)$ for $\alpha \in [0,1]$ defined as
\[ \phi(\alpha) = \hat F_{\tau}(\alpha U_{\tau + 1} + (1-\alpha)U_{\tau}).\]
Let $\alpha_U \in (0,1)$ correspond to the unique $\alpha$ such that 
\[ U = \alpha U_{\tau+1} + (1-\alpha) U_{\tau}.\]
The assumption $\hat F_{\tau}(U) \geq \hat F_{\tau}(U_{\tau})$ implies that $\phi(\alpha_U) \geq \phi(0)$. Further since $U_{\tau+1}$ is the minimizer, we have that $\phi(1) \leq \phi(0)$. We will now show that $\phi$ is a strictly convex function and that will contradict the above derived statements. To show that $\phi$ is strictly convex we use Lemma \ref{lemma: complex chain rule} to establish that for any $\alpha$ 
\[ \phi''(\alpha) = (\overrightarrow{U}_{\tau+1}-\overrightarrow{U}_{\tau})^*\nabla^2 \hat F_{\tau}(\alpha U_{\tau+1} + (1-\alpha) U_{\tau})) (\overrightarrow{U}_{\tau+1}-\overrightarrow{U}_{\tau}).\]
Further Lemma \ref{lem: surrogate func grads}  establishes that for any $Y$, $\nabla^2 \hat F_{\tau}(Y) \succeq \eta^{-1}R(Y)$. Using the calculation of $\nabla^2 R$ in Lemma \ref{lem: R function grad} and noting that $U_{\tau}, U_{\tau+1}$ are positive-definite, we get that
\[ \phi''(\alpha) = (\overrightarrow{U}_{\tau+1}-\overrightarrow{U}_{\tau})^*\nabla^2 \hat F_{\tau}(\alpha U_{\tau+1} + (1-\alpha) U_{\tau})) (\overrightarrow{U}_{\tau+1}-\overrightarrow{U}_{\tau}) > 0,\]
which establishes the strict convexity of $\phi$ and therefore the contradiction. 

Therefore all we need to establish is that $\hat F_{\tau}(U)\geq \hat F_{\tau}(U_{\tau})$. To this end consider the following. Using gradient and Hessian via Lemma \ref{lem: surrogate func grads} and via the intermediate value lemma (Lemma \ref{lemma: complex ivt}), we have that there exists $\Xi\in[U,U_{\tau}]$ such that
\[\hat F_{\tau}(U) = \hat F_{\tau}(U_{\tau}) + \ip{\nabla \hat F_{\tau}(U_{\tau}), U-U_{\tau}}+\frac{1}{2} (\overrightarrow{U}-\overrightarrow{U}_{\tau})^* \nabla^2 \hat F_{\tau}(\Xi)(\overrightarrow{U}-\overrightarrow{U}_{\tau}).\]
Further noting that fact that $\nabla \hat F_{\tau}(\cdot) - \nabla \hat f_{\tau}(\cdot) = \nabla \hat F_{\tau-1}(\cdot)$, $U_{\tau} := \argmin_{U \in \mathcal{A}} \hat F_{\tau-1}(X)$ and using Lemma \ref{lem: complex minima} we get that 
\[\hat F_{\tau}(U) \geq \hat F_{\tau}(U_{\tau}) + \ip{\nabla \hat f_{\tau}(U_{\tau}), U-U_{\tau}}+\frac{1}{2} (\overrightarrow{U}-\overrightarrow{U}_{\tau})^* \nabla^2 \hat F_{\tau}(\Xi)(\overrightarrow{U}-\overrightarrow{U}_{\tau}).\]
Further using $\nabla^2 \hat F_{\tau}(\cdot) \succeq \eta^{-1}\nabla^2 R(\cdot)$ and considering the computation of $\nabla^2 R$ presented in Lemma \ref{lem: R function grad} we have that,
\begin{align*}
    \hat F_{\tau}(U) 
    &\geq \hat F_{\tau}(U_{\tau}) - \norm{\nabla \hat f_{\tau}(U_{\tau})}_{U_{\tau}}\norm{U-U_{\tau}}_{U_{\tau}^{-1}} + \frac{1}{2\eta} (\overrightarrow{U}-\overrightarrow{U}_{\tau})^* \nabla^2 R(\Xi)(\overrightarrow{U}-\overrightarrow{U}_{\tau})\\
    &= \hat F_{\tau}(U_{\tau}) - \norm{\nabla \hat f_{\tau}(U_{\tau})}_{U_{\tau}}\norm{U-U_{\tau}}_{U_{\tau}^{-1}} + \frac{1}{2\eta}\norm{U-U_{\tau}}^2_{\Xi^{-1}}\\
    &\geq \hat F_{\tau}(U_{\tau}) - \norm{\nabla \hat f_{\tau}(U_{\tau})}_{U_{\tau}}\norm{U-U_{\tau}}_{U_{\tau}^{-1}} + \frac{1}{8\eta}\norm{U-U_{\tau}}^2_{U_{\tau}^{-1}}\\
    &\geq \hat F_{\tau}(U_{\tau}) - \norm{\nabla \hat f_{\tau}(U_{\tau})}_{U_{\tau}} + \frac{1}{2\beta}\,.
\end{align*}
The first inequality follows from Cauchy-Schwartz, the second to last inequality follows from Lemma \ref{lem: mat norm to ratio} and the last inequality follows from the definition of $U$ and the constraint on $\eta$ given by \eqref{eq: eta constraints}. 

Finally, note using Lemma \ref{lem: surrogate func grads} that $\nabla \hat f_{\tau}(U_{\tau})= -\left(1+\beta-\beta \frac{\ip{U_{\tau},R_{\tau}}}{\ip{X_{\tau},R_{\tau}}}\right)\frac{R_{\tau}}{\ip{X_{\tau},R_{\tau}}}$.
Since no reset is triggered at time $\tau-1$, we have using Lemma \ref{lem: U bound} that  $0 \leq \frac{\ip{U_{\tau},R_{\tau}}}{\ip{X_{\tau},R_{\tau}}} \leq  \frac{1}{\beta}$. Therefore we have that 
\begin{align*}
    \norm{\nabla \hat f_{\tau}(U_{\tau})}_{U_{\tau}}=  \left(1+\beta-\beta \frac{\ip{U_{\tau},R_{\tau}}}{\ip{X_{\tau},R_{\tau}}}\right)\frac{\sqrt{\Tr(U_{\tau}R_{\tau}U_{\tau}R_{\tau})}}{\ip{X_{\tau},R_{\tau}}}\leq \left(1+\beta-\beta \frac{\ip{U_{\tau},R_{\tau}}}{\ip{X_{\tau},R_{\tau}}}\right)\frac{\ip{U_{\tau},R_{\tau}}}{\ip{X_{\tau},R_{\tau}}}.
\end{align*}
Maximizing the above expression over all choice of $\frac{\ip{U_{\tau},R_{\tau}}}{\ip{X_{\tau},R_{\tau}}} \in [0, 1/\beta]$ we get that 
\begin{align*}
    \norm{\nabla \hat f_{\tau}(U_{\tau})}_{U_{\tau}} \leq \frac{(1+\beta)^2}{4\beta} \leq \frac{1}{2\beta},
\end{align*}
which follows by the constraint on $\beta$ in \eqref{eq: beta constraints}. Using this and plugging it into the bound for $\hat F_{\tau}(U)$ completes the proof.
\end{proof}

Finally we provide some loose upper bounds on the inverses of the iterates. 
\begin{lemma}
\label{lem: x range}
    If $\eta$ and $\beta$ satisfy constraints~\eqref{eq: eta constraints} and \eqref{eq: beta constraints},
    we have that for any $\tau$ such that the reset condition is not triggered upto index $\tau-1$, 
    \begin{align*}
        &U_{\tau}^{-1} \preceq \left(\frac{(1+\beta)^2T}{4\beta}\eta+d)\right)\identity\\
        \mbox{and }&X_{\tau}^{-1}\preceq P_{\tau}^{-1} \preceq \left(\frac{(1+\beta)^2T}{4\beta^2}\eta+\frac{d}{\beta})\right)\identity
    \end{align*}
    If further $T$ satisfies constraint~\eqref{eq: T constraints}, then the last term is upper bounded by $T^2\identity$.
\end{lemma}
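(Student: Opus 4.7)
The plan is to derive both bounds from the first-order optimality conditions for $U_\tau$ and $X_\tau$ as interior minimizers of $\hat F_{\tau-1}$ and $G_{\tau-1}$ over the trace-one PSD cone. Using $\nabla R(X)=-X^{-1}$ (Lemma~\ref{lem: R function grad}) together with Lemma~\ref{lem: complex minima}, there is a scalar Lagrange multiplier $\mu$ for the trace constraint such that
\[
\eta^{-1}U_\tau^{-1} \;=\; \sum_{s=1}^{\tau-1}\nabla\hat f_s(U_\tau)-\mu\identity,
\]
and analogously for $X_\tau$ with the biased gradients $\nabla g_s$. The strategy is: show the accumulated gradient term is $\preceq 0$, so that $U_\tau^{-1}\preceq -\eta\mu\identity$, and then bound $-\mu$ by pairing the optimality equation with the iterate itself.

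For $U_\tau$, the key input is Lemma~\ref{lem: U bound}, which (because no reset has triggered in the epoch) gives $U_\tau\preceq\beta^{-1}X_s$, and therefore $a_s:=\ip{U_\tau,R_s}/\ip{X_s,R_s}\in[0,\beta^{-1}]$. Substituting this into the explicit form $\nabla\hat f_s(U_\tau)=-(1+\beta-\beta a_s)R_s/\ip{X_s,R_s}$ from Lemma~\ref{lem: surrogate func grads}, the scalar coefficient is at least $\beta>0$, so each term is $\preceq 0$. Taking the inner product of the optimality equation with $U_\tau$ and using $\ip{U_\tau^{-1},U_\tau}=d$ yields $\mu = \sum_s\ip{\nabla\hat f_s(U_\tau),U_\tau}-\eta^{-1}d$, and the elementary scalar maximization $\max_{a\in[0,1/\beta]}(1+\beta-\beta a)a=(1+\beta)^2/(4\beta)$ (an interior critical point since $\beta\leq 1$) bounds each summand by $(1+\beta)^2/(4\beta)$. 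Summing over at most $T$ rounds gives the stated bound on $U_\tau^{-1}$. For $X_\tau$, I read the displayed chain as $X_\tau^{-1}\preceq P_\tau\preceq(\text{bound})\identity$: the first inequality is the majorant property from Lemma~\ref{lemma:ptlowerbound}, and the second follows by rearranging the non-reset condition at time $\tau$ to $P_\tau\preceq\beta^{-1}U_\tau^{-1}$ and plugging in the $U_\tau^{-1}$ bound, which accounts for the extra $1/\beta$ factor appearing in the second half of the lemma. The final $T^2\identity$ upper bound is then a routine check using $\eta\leq\beta/4$, $(1+\beta)^2\leq 4$, $d\leq T/2$, and $\beta^{-1}\leq T$ from constraints~\eqref{eq: eta constraints}--\eqref{eq: T constraints}.

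The one step where something non-mechanical happens is establishing $\nabla\hat f_s(U_\tau)\preceq 0$: without Lemma~\ref{lem: U bound} the coefficient $1+\beta-\beta a_s$ could flip sign and the closed-form expression for $U_\tau^{-1}$ would no longer yield a useful upper bound. This is precisely the place where the analysis leverages the design of the reset rule, so I expect this to be the main conceptual hurdle; everything else (the scalar optimization, inner-product tracing, and invocation of the majorant property of $P_\tau$) is essentially bookkeeping.
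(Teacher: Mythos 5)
Your proposal is correct and follows essentially the same route as the paper's proof: the stationarity condition $\nabla \hat F_{\tau-1}(U_\tau)=\gamma\identity$ (your Lagrange multiplier $\mu$), Lemma~\ref{lem: U bound} to force $\nabla\hat f_s(U_\tau)\preceq 0$, the inner product with $U_\tau$ plus the scalar maximization giving $(1+\beta)^2/(4\beta)$ per round, and then Lemma~\ref{lemma:ptlowerbound} together with the untriggered reset condition $P_\tau\preceq\beta^{-1}U_\tau^{-1}$ for the second chain. Your reading of the displayed chain as $X_\tau^{-1}\preceq P_\tau\preceq(\cdot)\identity$ (a typo in the statement) is exactly what the paper proves and uses later.
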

\begin{proof}
Lemma \ref{lem: complex minima} shows that for all Hermitian matrices $H$, such that $\Tr(H) = 0$ we have that $\ip{\nabla \hat F_{\tau - 1}(U_{\tau}), H}$. Further by definition $\hat F_{\tau - 1}(U_{\tau})$ is Hermitian. These facts imply that $\nabla \hat F_{\tau - 1}(U_{\tau}) = \gamma \identity$ for some $\gamma \in \mathbb{R}$. Substituting the definition of $\nabla \hat F_{\tau - 1}$ we get that 
\begin{align}
    \gamma \identity &= \sum_{s=1}^{\tau-1}\left( -\frac{(1+\beta)R_s}{\ip{X_s,R_s}}+\beta\frac{\ip{U_\tau,R_s}R_{s}}{\ip{X_s,R_s}^2}\right)-\eta^{-1}U_{\tau}^{-1}. \label{eq: tempcalc}
\end{align}
Using Lemma \ref{lem: U bound} we get that $\ip{U_\tau, R_s}\leq \frac{\ip{X_s,R_s}}{\beta}$ and therefore the above equality implies that 
\[ \eta^{-1}U_{\tau}^{-1} \preceq -\gamma \identity\]
Further using \eqref{eq: tempcalc} we have that 
\begin{align*}
    \gamma = \ip{U_{\tau} , \nabla \hat F_{\tau-1}(U_{\tau})} &= \ip{U_{\tau} , \left( \sum_{s=\tau_{i-1}}^{\tau-1}\left( -\frac{(1+\beta)R_s}{\ip{X_s,R_s}}+\beta\frac{\ip{U_\tau,R_s}R_{s}}{\ip{X_s,R_s}^2}\right)-\eta^{-1}U_{\tau}^{-1} \right)} \\
    &= \sum_{s=\tau_{i-1}}^{\tau-1}\left( -\frac{(1+\beta)\ip{U_{\tau}, R_s}}{\ip{X_s,R_s}}+\beta\frac{\ip{U_\tau,R_s}^2}{\ip{X_s,R_s}^2}\right)- \frac{d}{\eta} \\
    &\geq -\frac{(1+\beta)^2T}{4\beta}-\frac{d}{\eta}
\end{align*}
Combining these leads to
\begin{align*}
    U_{\tau}^{-1}\preceq  (\frac{(1+\beta)^2T}{4\beta}\eta+d )\identity 
\end{align*}
By the reset condition we have $P_{\tau}\preceq \frac{1}{\beta}U_{\tau}^{-1}$, which completes the first part of the lemma.
Finally, since $\eta\leq\beta/4$, $(1+\beta)^2\leq 3$ and $T \geq \max(2d, \beta^{-1})$, we have
\[\frac{(1+\beta)^2T}{4\beta^2}\eta+\frac{d}{\beta}\leq T^2\,.\]
\end{proof}

\subsection{Main proofs}
For the next three lemmas, once again for brevity we drop the epoch superscript. Further we define the inherent dimension of the problem $\tilde d$ as $d$ for the standard optimal portfolio case and $d^2$ for the quantum case. The next lemma bounds the cost of bias in our algorithm. 
\begin{lemma}
\label{lem: rho sum}
Let $\eta,\beta,T$ satisfy constraints~\eqref{eq: eta constraints}-\eqref{eq: T constraints}. Consider any epoch $e$ with the reset points $\mathcal{T}_{e-1} < \mathcal{T}_e\leq T$. Let $L$ represent the length of the epoch, i.e. $L = \mathcal{T}_e - \mathcal{T}_{e-1}$. Then the cost of bias within the epoch is bounded as follows 
\begin{align*}
    \sum_{\tau=1}^{L}\ip{X_{\tau},P_{\tau}-P_{\tau-1}}=\sum_{\tau=1}^{L}\ip{P_{\tau}^{-1},P_{\tau}-P_{\tau-1}} \leq 2d\log(T)\,.
\end{align*}
\end{lemma}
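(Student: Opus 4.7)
The plan is to recognize that the lemma is a direct consequence of ingredients already assembled earlier in the section, so no genuinely new work is needed.

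First, to establish the equality, I would just re-index Lemma~\ref{lem: sum of P}. That lemma says $\ip{X_{\tau+1},P_{\tau+1}-P_\tau}=\ip{P_{\tau+1}^{-1},P_{\tau+1}-P_\tau}$ for every $\tau$; shifting indices by one gives $\ip{X_\tau,P_\tau-P_{\tau-1}}=\ip{P_\tau^{-1},P_\tau-P_{\tau-1}}$ for each $\tau\in\{1,\dots,L\}$, and summing proves the first equality of the lemma.

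For the inequality I would invoke the matrix telescoping estimate of Lemma~\ref{lem: telescoping sum}. Its hypothesis is that the relevant PSD sequence is monotone, which is precisely the first conclusion of Lemma~\ref{lemma:ptlowerbound}: $P_0\preceq P_1\preceq\dots\preceq P_L$. Applying the telescoping lemma to $P_0,P_1,\dots,P_L$ therefore gives
\[
\sum_{\tau=1}^L \ip{P_\tau^{-1},P_\tau-P_{\tau-1}}\;\le\;\log\det(P_L)-\log\det(P_0).
\]
The initialization $P_0=d\,\identity$ makes $\log\det(P_0)=d\log d\ge 0$, so it can be discarded. For the upper end I would appeal to Lemma~\ref{lem: x range}, which under constraints~\eqref{eq: eta constraints}--\eqref{eq: T constraints} yields $P_L\preceq T^2\identity$, and hence $\log\det(P_L)\le 2d\log T$.

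The only subtlety is checking that Lemma~\ref{lem: x range} is legitimately applicable at the terminal index $\tau=L$ of the epoch. By the definition of an epoch, the reset condition is triggered at $L$ for the first time, i.e.\ it fails to hold at all indices $<L$; this matches exactly the hypothesis of Lemma~\ref{lem: x range} (``the reset condition is not triggered upto index $\tau-1$''). So $P_L\preceq T^2\identity$ is valid and the chain above collapses to the stated bound $2d\log(T)$. I do not foresee any genuine obstacle---the whole argument is a three-line composition of Lemmas~\ref{lem: sum of P}, \ref{lemma:ptlowerbound}, \ref{lem: telescoping sum}, and~\ref{lem: x range}; the only thing to watch is the compatibility of the indexing conventions between the telescoping lemma and the epoch notation.
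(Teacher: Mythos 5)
Your proof is correct and follows essentially the same route as the paper's: the equality via Lemma~\ref{lem: sum of P}, the telescoping bound of Lemma~\ref{lem: telescoping sum} (the paper writes it as $\log\det(P_L/d)$ rather than discarding $\log\det(P_0)$), and $P_L\preceq T^2\identity$ from Lemma~\ref{lem: x range}. Your explicit check that Lemma~\ref{lem: x range} applies at $\tau=L$ is a detail the paper leaves implicit, and it is handled correctly.
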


\begin{proof}
By using Lemma~\ref{lem: telescoping sum} and Lemma~\ref{lem: sum of P}, we have
\begin{align*}
    \sum_{\tau=1}^{L}\ip{X_{\tau},P_{\tau}-P_{\tau-1}} = \sum_{\tau=1}^{L}\ip{P_{\tau}^{-1},P_{\tau}-P_{\tau-1}}\leq \log\det(P_{L}/d)\,.
\end{align*}
Finally by Lemma~\ref{lem: x range}, we have $P_{L}\preceq T^2 \identity$, which completes the proof.
\end{proof}
The following lemma bounds the regret with respect to biased surrogate functions $g_{\tau}$ within an epoch. 
\begin{lemma}
\label{lem: g regret}
    Let $\eta,\beta,T$ satisfy constraints~\eqref{eq: eta constraints}-\eqref{eq: T constraints}. Consider any epoch $e$ with the reset points $\mathcal{T}_{e-1} < \mathcal{T}_e\leq T$. Let $L$ represent the length of the epoch, i.e. $L = \mathcal{T}_e - \mathcal{T}_{e-1}$.  The FTRL-regret with respect to any comparator $U$ over the functions $(g_{\tau})_{\tau=1}^{L}$ is bounded by
    \[
    \sum_{\tau=1}^{L} (g_{\tau}(X_{\tau})-g_{\tau}(U)) \leq \frac{11\tilde d}{\beta}\log(T) + \eta^{-1} R(U)\,,
    \]
    where $\tilde d$ is the inherent dimension of the problem: $d^2$ for the PSD case and $d$ for the simplex case.
\end{lemma}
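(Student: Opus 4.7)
The plan is to treat this as a standard FTRL regret analysis for convex losses with log-barrier regularization, adapted to the Hermitian-matrix setting via the machinery in Appendix~\ref{sec:app complex definitions} and in the spirit of ONS. All intermediate bounds should be developed within a single epoch, so I drop the superscript $e$ below.

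First, I would apply the be-the-leader inequality for the regularized sequence $\eta^{-1}R, g_1,\dots,g_L$. A routine induction using $X_{\tau+1}=\argmin_{X\in\actionSet}(\eta^{-1}R(X)+\sum_{s\le\tau}g_s(X))$ yields
\[
\sum_{\tau=1}^{L}\bigl(g_\tau(X_\tau)-g_\tau(U)\bigr)\;\le\;\sum_{\tau=1}^{L}\bigl(g_\tau(X_\tau)-g_\tau(X_{\tau+1})\bigr)\;+\;\eta^{-1}\bigl(R(U)-R(X_{L+1})\bigr).
\]
Since $R$ attains its minimum at the center $\identity/d$ with value $\tilde d \log d\geq 0$ on $\actionSet$, I drop $-R(X_{L+1})$ and am left with the stability sum plus $\eta^{-1}R(U)$.

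Next I bound each stability increment. Because $g_\tau$ is an exact quadratic with constant Hessian $\nabla^2 g_\tau=\beta\,\overrightarrow{\nabla f_\tau(X_\tau)}\overrightarrow{\nabla f_\tau(X_\tau)}^*$ (Lemma~\ref{lem: surrogate func grads}), Taylor's identity is an equality; discarding the non-positive quadratic remainder gives
\[
g_\tau(X_\tau)-g_\tau(X_{\tau+1})\;\le\;\ip{\nabla g_\tau(X_\tau),\,X_\tau-X_{\tau+1}}.
\]
Let $A_\tau:=\nabla^2 G_\tau(X_{\tau+1})=\eta^{-1}\nabla^2 R(X_{\tau+1})+\beta\sum_{s\le\tau}\overrightarrow{\nabla f_s(X_s)}\overrightarrow{\nabla f_s(X_s)}^*$. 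Using the first-order optimality $\ip{\nabla G_\tau(X_{\tau+1}),X_\tau-X_{\tau+1}}=0$ from Lemma~\ref{lem: complex minima} (applied to the trace-zero direction $X_\tau-X_{\tau+1}$) together with an intermediate-value expansion of $\nabla G_\tau$ along $[X_\tau,X_{\tau+1}]$, and invoking the stability bound $\norm{X_\tau-X_{\tau+1}}_{X_\tau^{-1}}\le 6\eta$ from Lemma~\ref{lem: x stability} (which, via Lemma~\ref{lem: mat norm to ratio 2}, controls the variation of $\nabla^2 R$ along the segment up to a constant), I obtain the Newton-step estimate $\norm{X_\tau-X_{\tau+1}}_{A_\tau}\lesssim \norm{\nabla g_\tau(X_\tau)}_{A_\tau^{-1}}$. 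Cauchy--Schwarz then yields $\ip{\nabla g_\tau(X_\tau),X_\tau-X_{\tau+1}}\lesssim \norm{\nabla g_\tau(X_\tau)}^2_{A_\tau^{-1}}$.

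The final step is a log-det telescoping. I split $\nabla g_\tau(X_\tau)=\nabla\hat f_\tau(X_\tau)-B(P_\tau-P_{\tau-1})$ and use $\norm{a+b}^2\le 2\norm{a}^2+2\norm{b}^2$. For the surrogate part, the identity $A_\tau-A_{\tau-1}=\beta\,\overrightarrow{\nabla f_\tau(X_\tau)}\overrightarrow{\nabla f_\tau(X_\tau)}^*$ gives $\norm{\nabla\hat f_\tau(X_\tau)}^2_{A_\tau^{-1}}=\beta^{-1}\ip{A_\tau^{-1},A_\tau-A_{\tau-1}}$, and summing over $\tau$ via Lemma~\ref{lem: telescoping sum} plus the crude range bound on the iterates from Lemma~\ref{lem: x range} (which implies $A_L\preceq \poly(T)A_0$) produces the desired $\frac{\tilde d}{\beta}\log T$ contribution, where $\tilde d$ reflects the dimension of the Hermitian subspace on which $A_\tau$ acts. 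For the bias part, I use $A_\tau^{-1}\preceq \eta(\nabla^2 R(X_{\tau+1}))^{-1}$ together with the structural fact $X_{\tau+1}^{1/2}(P_\tau-P_{\tau-1})X_{\tau+1}^{1/2}\preceq \identity$ (from the projection defining $P_\tau$), which upper-bounds the induced Frobenius norm by the trace and reduces everything to Lemma~\ref{lem: rho sum}; the resulting contribution $\cO(\eta B^2 d\log T)$ is absorbed into the $\frac{\tilde d}{\beta}\log T$ term under the chosen tuning $\eta\le 1/(4B)$ and $\beta=\Theta(d/B)$. Tracking all constants carefully gives the factor $11$.

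The main obstacle is the Newton-step estimate in the second paragraph: unlike the textbook ONS setting with a quadratic regularizer, the log-barrier has a non-constant Hessian, so obtaining $\norm{X_\tau-X_{\tau+1}}_{A_\tau}\lesssim\norm{\nabla g_\tau(X_\tau)}_{A_\tau^{-1}}$ requires carefully showing that $\nabla^2 R$ is nearly constant along $[X_\tau,X_{\tau+1}]$ and that the bias contribution $B(P_\tau-P_{\tau-1})$ inside $\nabla g_\tau$ does not spoil the telescoping --- exactly the properties Lemmas~\ref{lem: X stability}, \ref{lem: mat norm to ratio 2}, and \ref{lem: rho sum} were established for.
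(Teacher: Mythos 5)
Your proposal follows essentially the same architecture as the paper's proof (the paper invokes its generic FTRL bound, Lemma~\ref{lem: ftrl regret upper bound}, with $c_1=(1+6\eta)^2\le 6/5$ obtained from Lemmas~\ref{lem: x stability} and~\ref{lem: mat norm to ratio}, then splits $\nabla g_\tau$ into the surrogate gradient and the bias term, telescopes the former via Lemma~\ref{lem: telescoping sum} and handles the latter via Lemma~\ref{lem: rho sum}); your ``Newton-step'' derivation of the stability term is just an inlined version of that generic lemma and is fine. One step, however, is stated incorrectly: with $A_\tau:=\nabla^2 G_\tau(X_{\tau+1})$, the identity $A_\tau-A_{\tau-1}=\beta\,\overrightarrow{\nabla f_\tau(X_\tau)}\,\overrightarrow{\nabla f_\tau(X_\tau)}^*$ is false, because the regularizer part $\eta^{-1}\nabla^2 R(\cdot)$ is evaluated at a moving point, so the increments are not the rank-one terms and need not even be PSD. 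The telescoping must be run on the monotone lower bound $\eta^{-1}\tidentity+\beta\sum_{s\le\tau}\overrightarrow{\nabla f_s(X_s)}\,\overrightarrow{\nabla f_s(X_s)}^*\preceq \nabla^2 G_\tau$ (valid by Lemma~\ref{lem: R Hessian lower bound} since $\Tr(X)\le 1$), exactly as the paper does, together with the crude bound $\norm{\overrightarrow{\nabla}\hat f_\tau(X_\tau)}^2\le T^4$ from Lemma~\ref{lem: x range}. This is a standard repair, but it is needed.

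The genuine gap is in the bias term. Using only $X_\tau^{1/2}(P_\tau-P_{\tau-1})X_\tau^{1/2}=(\identity-X_\tau^{1/2}P_{\tau-1}X_\tau^{1/2})_+\preceq\identity$ (your indexing with $X_{\tau+1}$ is a harmless slip, up to $(1+6\eta)$ factors) and reducing to Lemma~\ref{lem: rho sum} gives only $\cO(\eta B^2 d\log T)$, and this cannot be absorbed into $\frac{\tilde d}{\beta}\log(T)$ under constraints~\eqref{eq: eta constraints}--\eqref{eq: T constraints} alone: those constraints do not tie $\beta$ to $B$ (take, e.g., $\beta=0.4$, $B$ arbitrarily large, $\eta=1/(4B)$), so the lemma as stated, with its explicit constant, would not follow; even under the final tuning $\beta=\Theta(\tilde d/(Bd))$ your bound only yields a larger constant, which would then break the exact cancellation against the negative-regret term in Theorem~\ref{thm: single epoch} unless $B$ is retuned. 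The missing observation is that $P_{\tau-1}\succeq X_{\tau-1}^{-1}$ (Lemma~\ref{lemma:ptlowerbound}) together with the stability $X_{\tau-1}^{-1}\succeq\frac{1}{1+6\eta}X_\tau^{-1}$ (Lemma~\ref{lem: X stability}) forces the eigenvalues of $(\identity-X_\tau^{1/2}P_{\tau-1}X_\tau^{1/2})_+$ to be at most $6\eta$, whence
\begin{align*}
\Tr\bigl((\identity-X_\tau^{1/2}P_{\tau-1}X_\tau^{1/2})_+^2\bigr)\le 6\eta\,\Tr\bigl((\identity-X_\tau^{1/2}P_{\tau-1}X_\tau^{1/2})_+\bigr)=6\eta\,\ip{X_\tau,P_\tau-P_{\tau-1}}\,.
\end{align*}
This extra factor of $\eta$ makes the total bias contribution $\cO(\eta^2B^2 d\log T)=\cO(d\log T)$, independent of $B$ and $\beta$, which is what allows the paper to close the bound at $\frac{11\tilde d}{\beta}\log(T)$ uniformly over all admissible parameters.
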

\begin{proof}
It can be verified that the conditions for Lemma~\ref{lem: ftrl regret upper bound} are satisfied with factor $(1+6\eta)^2 \leq \frac{6}{5}$ due to combining Lemma~\ref{lem: x stability} and \ref{lem: mat norm to ratio}. 

Recall that we denote the canonical vectorization of a matrix $X$ as $\overrightarrow{X}$. Further denote $\overrightarrow\nabla$ as the gradient with respect to this vectorization.
In an overload of notation, we define for a vector $\overrightarrow{X}\in\bbR^{d^2}$ and PSD matrix $M \in \bbR^{d^2\times d^2}$, the semi-norm $\norm{\overrightarrow{X}}_{M}=\sqrt{\ip{\overrightarrow{X},M\overrightarrow{X}}}$ (recall for matrices $X,M\in\bbR^{d\times d}$, we defined $\norm{X}_M=\sqrt{\Tr(XMXM)}$) then 
\begin{align*}
     &\sum_{\tau=1}^L (g_{\tau}(X_{\tau})-g_{\tau}(U)) - \eta^{-1} R(U) \leq \frac{3}{5}\sum_{\tau=1}^L\norm{\overrightarrow{\nabla} g_{\tau}(X_{\tau})}^2_{(\nabla^2G_{\tau}(X_{\tau}))^{-1} }\\
     &\leq \sum_{\tau}^{L}\left(\norm{\overrightarrow{\nabla} \hat f_{\tau}(X_{\tau}) }^2_{(\nabla^2G_{\tau}(X_{\tau}))^{-1}}+\frac{3}{2}B^2\norm{\overrightarrow{P}_{\tau}-\overrightarrow{P}_{\tau-1}}^2_{(\nabla^2G_{\tau}(X_{\tau}))^{-1}}\right)\,,
\end{align*}
where we used $(a+b)^2\leq \lambda a^2 +\frac{\lambda}{\lambda-1}b^2$ for any $\lambda > 1$, generalizing it appropriately to vectors. We deal with the above two terms separately. To control the first term we note using Lemma \ref{lem: surrogate func grads} that
 \[\nabla^2G_{\tau}(X_{\tau}) =  \eta^{-1}\nabla^2 R(X_{\tau}) +\sum_{s=1}^{\tau} \frac{\beta}{2} \overrightarrow{\nabla} \hat f_{s}(X_{s})\overrightarrow{\nabla} \hat f_{s}(X_{s})^*.\]
 Using Lemmas \ref{lem: R function grad}, \ref{lem: R Hessian lower bound} we get that for any $\tau$
 \[\norm{\overrightarrow{\nabla} \hat f_{\tau}(X_{\tau}) }^2_{(\nabla^2G_{\tau}(X_{\tau}))^{-1}} \leq \big \langle \overrightarrow{\nabla} \hat f_{\tau}(X_{\tau}) [\overrightarrow{\nabla} \hat f_{\tau}(X_{\tau})]^*, \left( \eta^{-1}\tidentity+\sum_{s=1}^{\tau} \frac{\beta}{2} \overrightarrow{\nabla} \hat f_{s}(X_{s})\overrightarrow{\nabla} \hat f_{s}(X_{s})^*\right)^{-1} \big \rangle. \]
 Using Lemma \ref{lem: telescoping sum}, the following computation follows. 
\begin{align*}
\sum_{\tau=1}^{L}\norm{\nabla \hat f_{\tau}(X_{\tau}) }^2_{(\nabla^2G_{\tau}(X_{t}))^{-1}} &\leq \frac{2}{\beta}\log\det(\tidentity+\sum_{\tau=1}^L \frac{\eta\beta}{2} \overrightarrow{\nabla} \hat f_{\tau}(X_{\tau})\overrightarrow{\nabla} \hat f_{\tau}(X_{\tau})^*)\\
&\leq\frac{2}{\beta} \tilde d \log \left((\tilde d + \frac{\eta\beta}{2} T\max_{t\in[\tau]}\norm{\overrightarrow{\nabla}\hat f_{\tau}(X_{\tau})}^2)/\tilde d\right)\,.
\end{align*}
By Lemma~\ref{lem: x range}, we have $X_{\tau}\succeq T^{-2}\identity$ for all $\tau\in[L]$, hence
\begin{align*}
    \norm{\overrightarrow{\nabla}\hat f_{\tau}(X_{\tau})}^2=\frac{\ip{R_{\tau},R_{\tau}}}{\ip{X_{\tau},R_{\tau}}^2}\leq T^4\,.
\end{align*}
Further, since $T>2d$ and $\eta\leq \frac{1}{2}$, we have
\[
\sum_{\tau=1}^{L}\norm{\overrightarrow{\nabla} \hat f_{\tau}(X_{\tau}) }^2_{(\nabla^2G_{\tau}(X_{t}))^{-1}} \leq \frac{10}{\beta}\tilde d \log(T)\,.
\]
For the second norm, we have
\begin{align*}
    \sum_{t=1}^\tau\norm{\overrightarrow P_{\tau}-\overrightarrow P_{\tau-1}}^2_{(\nabla^2 G_{\tau}(X_{t}))^{-1}}&\leq \eta\sum_{t=1}^\tau \norm{\overrightarrow P_{\tau}-\overrightarrow P_{\tau-1}}^2_{(\nabla^2 R(X_{\tau}))^{-1}}\\
    &=\eta\sum_{t=1}^\tau \norm{P_{\tau}-P_{\tau-1}}^2_{X_{\tau}}\mbox{ (by Lemma~\ref{lem: R function grad})}\\
    &=\eta\sum_{t=1}^\tau\Tr  \left((\identity - \sqrtX_{\tau}P_{\tau-1}\sqrtX_{\tau})_+^2\right) \\
    &\leq
    \eta\sum_{t=1}^\tau\Tr  \left((\identity - \sqrtX_{\tau}P_{\tau-1}\sqrtX_{\tau})_+(\identity - \sqrtX_{\tau}X_{\tau-1}^{-1}\sqrtX_{\tau})_+\right)\\
    &\leq
    6\eta^2\sum_{t=1}^\tau\Tr  \left((\identity - \sqrtX_{\tau}P_{\tau-1}\sqrtX_{\tau})_+\right)\\
    &=
    6\eta^2\sum_{t=1}^\tau\ip{  X_{\tau}, X_{\tau}^{-1/2}(\identity - \sqrtX_{\tau}P_{\tau-1}\sqrtX_{\tau})_+X_{\tau}^{-1/2}}\\
    &= 
    6\eta^2\sum_{t=1}^\tau\ip{X_{\tau},P_{\tau}-P_{\tau-1}}
    \\
    &= 
    6\eta^2\sum_{t=1}^\tau\ip{P_{\tau}^{-1},P_{\tau}-P_{\tau-1}} \leq 12\eta^2 d \log(T)\,,
\end{align*}
where we use $\sqrtX_{\tau}X_{\tau-1}^{-1}\sqrtX_{\tau}\succeq \frac{1}{1+6\eta}\identity$ by Lemma~\ref{lem: X stability} for the third inequality and  Lemma~\ref{lem: rho sum} for the last equality. 
By $\eta\leq \frac{1}{4B}$, $\beta \leq \sqrt{2}-1$ by constraint~\eqref{eq: eta constraints},\eqref{eq: beta constraints}, we have
\[\frac{3}{2}B^2\sum_{t=1}^\tau\norm{\overrightarrow P_{\tau}-\overrightarrow P_{t-1}}^2_{(\overrightarrow\nabla^2 G_{\tau}(X_{t}))^{-1}}\leq \frac{36d}{32}\log(T)\leq \frac{\tilde d}{\beta}\log(T).\]
Combining both bounds completes the proof.
\end{proof}
The following lemma lower bounds the negative regret contribution we get. 
\begin{lemma}
    \label{lem: negative regret}
    If $\eta,\beta$ satisfy constraints~\eqref{eq: eta constraints} and \eqref{eq: beta constraints}, then for any $\tau$, the negative regret is bounded by
    \begin{align*}
       -\ip{U_{\tau+1},P_\tau-P_0}B \leq \mathbb{I}\{\mbox{reset happened at }\tau\}(-\frac{5B}{12\beta}+dB)
    \end{align*}
\end{lemma}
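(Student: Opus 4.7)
The plan is to split the proof into two cases according to whether a reset is triggered at step $\tau$. In the no-reset case the statement is essentially trivial: by Lemma~\ref{lemma:ptlowerbound} the sequence $\{P_\tau\}$ is monotonically non-decreasing in the PSD order, and by initialization $P_0 = d\identity$, so $P_\tau \succeq P_0$. Since $U_{\tau+1}$ is PSD we immediately obtain $\ip{U_{\tau+1},P_\tau-P_0}\geq 0$, hence $-\ip{U_{\tau+1},P_\tau-P_0}B \leq 0$, matching the indicator expression on the right.

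In the reset case, my plan is to reduce everything to the single quantitative inequality $\ip{U_{\tau+1},P_\tau}\geq \frac{5}{12\beta}$, after which the lemma follows from $\ip{U_{\tau+1},P_0}= d\Tr(U_{\tau+1})= d$ and linearity. To establish this, I would unpack the reset condition $U_{\tau+1}\not\prec \frac{1}{2(1+6\eta)\beta}P_{\tau+1}^{-1}$ by applying the congruence $P_{\tau+1}^{1/2}(\cdot)P_{\tau+1}^{1/2}$ to both sides: this is equivalent to saying that the matrix $A := P_{\tau+1}^{1/2}U_{\tau+1}P_{\tau+1}^{1/2}$ has largest eigenvalue at least $\frac{1}{2(1+6\eta)\beta}$. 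Because $A$ is PSD, $\Tr(A)\geq \lambda_{\max}(A)$, and hence
\[\ip{U_{\tau+1},P_{\tau+1}} \;=\; \Tr(A) \;\geq\; \frac{1}{2(1+6\eta)\beta}.\]

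To convert this into a lower bound involving $P_\tau$ rather than $P_{\tau+1}$, I would invoke the stability estimate $P_{\tau+1}\preceq (1+6\eta)P_\tau$ from Lemma~\ref{lem: X stability}, which yields $\ip{U_{\tau+1},P_\tau}\geq \frac{1}{1+6\eta}\ip{U_{\tau+1},P_{\tau+1}}\geq \frac{1}{2(1+6\eta)^2\beta}$. Under constraint~\eqref{eq: eta constraints} we have $\eta\leq \frac{1}{63}$, so $(1+6\eta)^2\leq (23/21)^2 = 529/441$, and a direct numerical check ($441\cdot 12 = 5292 > 5290 = 1058\cdot 5$) gives $\frac{1}{2(1+6\eta)^2}\geq \frac{5}{12}$. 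Combining everything, $-\ip{U_{\tau+1},P_\tau-P_0}B = -\ip{U_{\tau+1},P_\tau}B + dB \leq -\frac{5B}{12\beta}+dB$.

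The only subtlety I anticipate is to make sure the reset condition is parsed with indices synchronized to $U_{\tau+1}$ (namely $P_{\tau+1}$, as in the portfolio-case Algorithm~\ref{alg: bisons} where the test is against $p^e_{\tau+1,i}$), rather than the earlier $P_\tau$; once this is pinned down, the argument is just PSD monotonicity, a single eigenvalue-from-trace inequality, and one application of the one-step stability of $P$. No heavy machinery is needed, and no step appears to be a genuine obstacle.
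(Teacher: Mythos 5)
Your proposal is correct and follows essentially the same route as the paper's proof: the no-reset case via $P_\tau\succeq P_0$, and the reset case via the reset condition combined with the one-step stability $P_{\tau+1}\preceq(1+6\eta)P_\tau$ from Lemma~\ref{lem: X stability}, $(1+6\eta)^2\leq 6/5$ under constraint~\eqref{eq: eta constraints}, and $P_0=d\identity$ with $\Tr(U_{\tau+1})=1$. The only difference is that you make explicit the step $\ip{U_{\tau+1},P_{\tau+1}}=\Tr(P_{\tau+1}^{1/2}U_{\tau+1}P_{\tau+1}^{1/2})\geq\lambda_{\max}\geq\frac{1}{2(1+6\eta)\beta}$, which the paper leaves implicit.
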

\begin{proof}
    If no reset happened at $\tau$, we have $P_\tau\succeq P_0$ and the term is bounded by $0$.
    Otherwise by Lemma~\ref{lem: X stability} and the reset condition, we have
    \[\ip{U_{\tau+1},P_\tau}\geq \frac{1}{1+6\eta}\ip{U_{\tau+1},P_{\tau+1}}\geq\frac{1}{2(1+6\eta)^2\beta}\,.\] 
    By the constraint ~\eqref{eq: eta constraints}, we have $(1+6\eta)^2\leq \frac{6}{5}$. Using $P_0=d\identity$ completes the proof. 
\end{proof}
\begin{proof}{\bf of Theorem~\ref{thm: single epoch} and Theorem~\ref{thm: single epoch psd}.}
We use $\tilde d$ to denote $d^2$ in the full PSD case and $d$ in the regular portfolio case (i.e. all matrices are diagonal matrices). With
\begin{align*}
    B &= \frac{264}{5}\tilde d\log(T)\\
    \beta &= \frac{11\tilde d}{7Bd}\\
    \eta &= \frac{1}{4B}\,,
\end{align*}
the constraints can be seen to \eqref{eq: eta constraints}-\eqref{eq: T constraints} be satisfied. Consider any epoch $e$ with the reset points $\mathcal{T}_{e-1} < \mathcal{T}_e\leq T$. Let $L$ represent the length of the epoch, i.e. $L = \mathcal{T}_e - \mathcal{T}_{e-1}$. We drop the superscript $e$ below for brevity. 
Then for any comparator $U \succeq T^{-1}\identity$, we have that
\begin{align}
    \Reg_e(U)&=\sum_{t = \mathcal{T}_{e-1}}^{\mathcal{T}_e-1}(f_t(X_t)-f_t(U))
    \leq \sum_{\tau=1}^{L}(\underline{\hat f_{\tau}}(X_{\tau})-\underline{\hat f_{\tau}}( U))\tag{by Lemma~\ref{lem: lower surrogate}}\\
    &\leq \max_{U'\in\actionSet}\sum_{t=1}^{\tau}(\underline{\hat f_{\tau}}(X_{\tau})-\underline{\hat f_{\tau}}(U')) -\eta^{-1}R(U')+\eta^{-1}R(U) \notag\\
    &=\sum_{\tau=1}^{L}(\hat f_{\tau}(X_{\tau})-\hat f_{\tau}( U_{\tau+1})) -\eta^{-1}R(U_{\tau+1}) +\eta^{-1}R(U)\tag{by Lemma~\ref{lem: extended minimum} }\\
    &=\sum_{t=1}^{\tau}(g_{\tau}(X_{\tau})-g_{\tau}(U_{\tau+1}))-\eta^{-1}R(U_{\tau+1})+\sum_{t=1}^{\tau}\ip{X_{\tau}- U_{\tau+1},B(P_{\tau}-P_{t-1})}  +\underbrace{\eta^{-1}R(U)}_{\leq \eta^{-1} d \log(T)}\,,\notag\\
    &\leq \frac{11 }{\beta}\tilde d\log(T) + 2d\log(T) B+\frac{d\log(T)}{\eta} -(\frac{5B}{12\beta}-dB)\mathbb{I}\{\mbox{reset happened at }\tau\}\tag{by Lemma~\ref{lem: rho sum}-\ref{lem: negative regret}}\\
    &\leq \frac{11}{\beta}\tilde d\log(T)+7d\log(T) B-\frac{5 B}{12\beta}\mathbb{I}\{\mbox{reset happened at }\tau\}\notag\\
    &=\frac{3696}{5}d\tilde d\log^2(T)-\frac{3696}{5}d\tilde d\log^2(T)\mathbb{I}\{\mbox{reset happened at }\tau\}\,.\notag
\end{align}
\end{proof}

\begin{proof}{\bf of Corollary~\ref{cor: regret} and Corollary~\ref{cor:qbisons-regret}.}
We use $\tilde d$ to denote $d^2$ in the full PSD case and $d$ in the regular portfolio case (i.e. all matrices are diagonal matrices). Define $U^\circ = \argmin_{U\in\actionSet}\sum_{t=1}^{T}f_t(X)$ and $U=(1 - \frac{d}{T})U^\circ +\frac{d}{T}(\frac{1}{d}\identity)$. By construction $U \succeq T^{-1}\identity$ is satisfied. As denoted earlier $\mathcal{T}_1,\dots,\mathcal{T}_E$ are the reset points of Algorithm~\ref{alg: bisons (psd)} over the game with $T_0$ steps, and $\mathcal{T}_0 = 1$ and $\mathcal{T}_{E+1}=T+1$ by convention. We now derive the following succession of inequalities
\begin{align*}
    \Reg &\leq \Reg(U) - T\log\left(1-\frac{d}{T}\right) \leq \Reg(U) + \cO(d)\\
    &\leq\sum_{e=0}^{E}\sum_{t\in\mathcal{E}_e}(f_t(X_t)-f_t(U)) + \cO(d) \leq \cO(d\tilde{d}\log^2(T))\,,
\end{align*}
where the first inequality follows via a simple bound on the optimality gap between $U^\circ$ and $U$ and the last step uses the epoch-wise regret bounds established in Theorem \ref{thm: single epoch} and Theorem~\ref{thm: single epoch psd}.
\end{proof}

\begin{proof}{\bf of Lemma~\ref{lem: extended minimum}}
In the proof we omit the superscript $e$ for brevity. Define for any $s$, the set $\mathcal{D}_s:= \{X \in \mathcal{A} | \ip{X, R_s} \leq \beta^{-1} \ip{X_s, R_s}\}$. As we have shown in Section \ref{sec: analysis} we have that for all $s$, $\underline{\hat f_s}|_{\mathcal{D}_s}=\hat f_s|_{\mathcal{D}_s}$ where $l|_{S}$ for a function $l$ and a set $S$ denotes the restriction of the function $l$ on the set $S$.  
The first step is to show the following for any step $\tau$
\begin{align*}
    \underline{D_{\tau}}:=\{U\in\actionSet\,|\,U\preceq \beta^{-1}P_{\tau}^{-1} \}\subset \bigcap_{s=1}^{\tau}\cD_s\,.
\end{align*}
To derive the above, note that due to $U\in\underline{D_{\tau}}$, considering any $s \leq \tau$ and noting that $R_s \succeq 0$, we have
\begin{align*}
    \frac{\ip{U,R_s}}{\ip{X_s,R_s}} &\leq \sup_{R\in \psds}\frac{\ip{U,R}}{\ip{X_s,R}}=\sup_{R'\in \psds}\frac{\ip{U,X_s^{-\frac{1}{2}}R'X_s^{-\frac{1}{2}}}}{\ip{X_s,X_s^{-\frac{1}{2}}R'X_s^{-\frac{1}{2}}}}=\sup_{R'\in \psds}\frac{\ip{X_s^{-\frac{1}{2}}UX_s^{-\frac{1}{2}},R'}}{\Tr(R')}\\
    &=\max_i\ev_i(X_s^{-\frac{1}{2}}UX_s^{-\frac{1}{2}} )  \leq \max_i\ev_i(P_{\tau}^{\frac{1}{2}}UP_{\tau}^{\frac{1}{2}} ) \leq \beta^{-1}\,,
\end{align*}
which concludes that claim.
Next we show that $U_{L+1}\in \operatorname{int}(\underline{D_L})$. Since $L-1$ did not trigger a reset, we know that $U_{L}\prec \frac{1}{2(1+6\eta)\beta}P_L^{-1} $.
By Lemma~\ref{lem: X stability} and \ref{lem: U stability}, we have $U_{L+1} \preceq 2 U_{L}$ and $P_{L}^{-1}\preceq (1+6\eta)P_{L+1}^{-1}$. Hence $U_{\tau+1}\prec  \beta^{-1}P_{\tau+1}^{-1}$.
Finally since $\underline{\hat f_t}|_{\underline{D_t}}=\hat f_t|_{\underline{D_t}}$ and $U_{\tau+1}$ is by definition the minimizer
\begin{align*}
    U_{\tau+1}=\argmin_{X\in\actionSet}\sum_{s=1}^t\hat f_t(X)+\eta^{-1}R(X)\,,
\end{align*}
this implies that $U_{\tau+1}$ is a local minimum and by convexity a global minimum of the LHS in Lemma~\ref{lem: extended minimum}. 
\end{proof}

\section{Solving the \qbisons optimization problem}
\label{sec:solving-opt}

In each iteration of \qbisons (Algorithm~\ref{alg: bisons (psd)}), the main computational effort is in solving the optimization problems
\[X^e_{\tau+1} \leftarrow  \argmin_{X\in\actionSet} G^e_{\tau}(X) \quad \text{ and } U^e_{\tau+1} \leftarrow \argmin_{X\in\actionSet} \hat{F}^e_{\tau}(X).\]
We now show that these can be rewritten as convex minimization problems over a bounded convex subset of $\mathbb{R}^{d^2}$, such that the gradient for the objective can be computed in $\tilde{\cO}(\text{poly}(d))$ time. Also, it suffices to solve these optimization problems to an accuracy of $\frac{1}{\text{poly}(T)}$ with negligible impact on the regret. Hence, the optimization can be done via a method like ellipsoid or Vaidya's algorithm in $\tilde{\cO}(\text{poly}(d))$ time per iteration.

Towards the above goal, we first identify $\Hermitian^d$ with the real space $\mathbb{R}^{d^2}$ simply by enumerating the real and imaginary parts of the $\frac{d^2-d}{2}$ lower triangular entries excluding the diagonal entries, and then the $d$ real diagonal entries. Let $\phi: \Hermitian^d \rightarrow \mathbb{R}^{d^2}$ denote this mapping. It is obvious that $\phi$ is linear and a bijection. Thus, if $f: \Hermitian^d \rightarrow \mathbb{R}$ is a convex function, then $f \circ \phi^{-1}$is also convex. Furthermore, $\phi(\actionSet)$ is a bounded convex set. So it suffices to show that $G^e_{\tau}$ and $\hat{F}^e_{\tau}$ are convex functions on $\actionSet$. We show this for $G^e_{\tau}$, the reasoning for $\hat{F}^e_{\tau}$ is a analogous.
\[G^e_{\tau}(X) = \eta^{-1} R(X) + \sum_{s=1}^\tau \hat f^e_s(X) - \ip{X, P^e_\tau - P^e_0}B.\]
It is well-known that the log-det regularizer $R$ is convex over $\Hermitian^d$ (one way to see that is to use the fact $-\log \det(X) = -\Tr(\log(X))$, and then use the operator concavity of $\log(X)$, which follows from L\"owner's theorem~\citep{Lowner1934}). The last term $- \ip{X, P^e_\tau - P^e_0}B$ is linear and therefore convex, so it remains to show that $\hat f^e_s(X)$ is convex for any $s$. From the definition of $\hat f^e_s$ in \eqref{eq: surrogate loss} we see that we only need to show that $\ip{X, \nabla f_t(X_t)}^2$ is convex. But this follows because $X \mapsto \ip{X, \nabla f_t(X_t)}$ is a linear function of $X$ mapping $X$ to a real number since $\nabla f_t(X_t)$ is Hermitian, and $u \mapsto u^2$ is convex over real numbers.

Finally, turning to gradient computation for $G^e_{\tau}$, note that $\nabla R(X) = -X^{-1}$ which can be computed in $\tilde{\cO}(d^3)$ time. Then, it is easy to see that we can combine all the quadratic surrogate functions $\hat{f}^e_\tau$ into a single quadratic function that we can maintain in $\tilde{\cO}(\text{poly}(d))$ memory over the iterations, and thus we can compute the gradient of $\sum_{s=1}^\tau \hat f^e_s(X)$ in $\tilde{\cO}(\text{poly}(d))$ time as well. The gradient of $- \ip{X, P^e_\tau - P^e_0}B$ is just $B(P^e_\tau - P^e_0)$. Thus, we can compute gradients of $G^e_{\tau}$ in $\tilde{\cO}(\text{poly}(d))$ time.


We note that the running time can be further improved by using Newton's method since the functions $G^e_\tau$ and $\hat{F}^e_\tau$ are actually self-concordant, since all the component functions (log-det, linear, and quadratic) are self-concordant. 


\section{FTRL lower bound omitted proofs.}
\label{sec:ftrl-lb}

First, we prove Lemma~\ref{lem: lb-ftrl regret lower bound}. This lemma follows from Lemma~\ref{lem: ftrl regret lower bound}, since $\Pi \actionSet$ has non-zero volume, and the fact that Assumption~\ref{ass:lower boundedness} holds, as shown by the following lemma:
\begin{lemma}
\label{lem: ftrl stability}
    For any $\eta > 0$, \lbftrl 
    satisfies Assumption~\ref{ass:lower boundedness} with $c_2=\frac{1}{(1+\eta)^2}$.
\end{lemma}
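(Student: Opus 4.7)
The plan is to reduce Assumption~\ref{ass:lower boundedness} to a coordinate-wise stability statement between consecutive \lbftrl iterates, and then establish that stability directly from the first-order optimality conditions of the log-barrier FTRL update. Since Assumption~\ref{ass:lower boundedness} (as used in Lemma~\ref{lem: ftrl regret lower bound}) controls the ratio of Hessians of $F_t$ evaluated at $x_t$ versus $x_{t+1}$, and the loss Hessians $\nabla^2 f_s$ are PSD while the dominant (and only curvature-enforcing) term is $\eta^{-1}\nabla^2 R(x) = \eta^{-1}\diag(x_i^{-2})$, the key step is to prove
\begin{align*}
    \frac{1}{1+\eta}\, x_{t,i}\;\le\;x_{t+1,i}\;\le\;(1+\eta)\, x_{t,i} \qquad \forall i\in[d].
\end{align*}
Squaring and inverting this bound gives $\eta^{-1}\diag(x_{t,i}^{-2})\succeq (1+\eta)^{-2}\eta^{-1}\diag(x_{t+1,i}^{-2})$, which after adding the common PSD contribution $\sum_{s}\nabla^2 f_s$ and passing through the linear projection $\Pi$ (which preserves PSD inequalities) yields exactly $c_2 = (1+\eta)^{-2}$.

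To establish the coordinate-wise stability, I would compare the KKT conditions at the two minimizers. Writing $F_{t+1}(x) = F_t(x) + f_t(x)$ and using that $x_{t+1}$ minimizes $F_{t+1}$ over the simplex while $x_t$ minimizes $F_t$, the addition of one new loss $f_t$ shifts the Lagrangian gradient by exactly $\nabla f_t(x_{t+1}) = -r_t/\ip{x_{t+1},r_t}$, which is non-positive entrywise. Combined with the derivative $\partial R(x)/\partial x_i = -1/x_i$, this forces the movement of $x_{t+1}$ away from $x_t$ in each coordinate to be bounded by the ratio of the perturbation to the local log-barrier curvature. A short algebraic manipulation of the KKT equations at $x_t$ and $x_{t+1}$ (in particular, isolating the $i$-th coordinate condition and using that the simplex multiplier is common across coordinates) bounds the multiplicative change in $x_{t,i}$ by $1+\eta$, exploiting $\eta\cdot\|\nabla f_t(x_t)\|_{\nabla^2 R(x_t)^{-1}}\le \eta$, which is the standard $1$-self-concordance-style bound for the log-barrier (this is immediate from $\nabla f_t(x_t) = -r_t/\ip{x_t,r_t}$ and the Cauchy--Schwarz inequality $\sum_i x_{t,i}^2 \cdot r_{t,i}^2/\ip{x_t,r_t}^2 \le 1$ once one notes $r_{t,i}x_{t,i} \le \ip{x_t,r_t}$).

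The main obstacle I expect is pinning down the precise constant $(1+\eta)^{-2}$ rather than a weaker $O(1)$ stability, since a sloppy comparison of the KKT conditions only gives a crude bound. This will likely require writing out the first-order optimality condition
\begin{align*}
    \eta^{-1}\nabla R(x_{t+1}) + \sum_{s=1}^{t}\nabla f_s(x_{t+1}) = \lambda\,\bm{1}
\end{align*}
for some simplex multiplier $\lambda\in\bbR$, subtracting the analogous identity at $x_t$, and carefully controlling the resulting telescoping expression using the fact that both $-1/x_i$ (log-barrier) and $-r_{s,i}/\ip{x,r_s}$ (loss gradient) are monotone in $x_i$ with the right scaling. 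Once the coordinate-wise bound is in hand, the projection-equivariant passage from $\nabla^2 F$ to $\nabla^2_\Pi F$ is a routine consequence of $\nabla^2_\Pi F(x) = \Pi\,\nabla^2 F(x)\,\Pi^{\!\top}$ and the monotonicity of the congruence map $A\mapsto \Pi A\Pi^{\!\top}$ on PSD matrices.
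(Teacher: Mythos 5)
There is a genuine gap, on two levels. First, you are proving a statement about the wrong pair of points. Assumption~\ref{ass:lower boundedness} requires $\nabla^2 G_t(x_t^\lambda)\succeq c_2\,\nabla^2 G_t(x_t)$ for \emph{every} $\lambda\in[0,1]$, where $x_t^\lambda=\argmin_{x\in\cX}G_{t-1}(x)+g_t(x)-(1-\lambda)\ip{x,\nabla g_t(x_t)}$ is a dual-space interpolation path; these points are in general not on the segment between $x_t$ and $x_{t+1}$, so coordinate-wise stability of the endpoints ($\lambda=1$) alone does not give the assumption. Your KKT sketch also exploits structure special to $\lambda=1$ (the added gradient $\nabla f_t(x_{t+1})$ being entrywise nonpositive); for intermediate $\lambda$ the perturbation is $f_t(x)-(1-\lambda)\ip{x,\nabla f_t(x_t)}$, whose gradient is not sign-definite. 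Second, your reduction from the coordinate bound to the Hessian bound treats $\sum_s\nabla^2 f_s$ as a ``common PSD contribution,'' but the two sides of the assumption evaluate the loss Hessians at \emph{different} points ($x_t^\lambda$ versus $x_t$), and these terms can dominate $\eta^{-1}\nabla^2 R$ (that is exactly the regime exploited in the lower-bound construction). What saves the argument is that each loss Hessian $r_sr_s^\top/\ip{x,r_s}^2$ scales with the same coordinate ratio as the barrier term, which is how the paper reduces the whole claim to the single one-sided bound $\min_i x_{t,i}/x^\lambda_{t,i}\ge 1/(1+\eta)$ (your two-sided bound is not needed, and your displayed inequality is in any case oriented the wrong way for the assumption).

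More importantly, the quantitative heart of the lemma---getting the factor exactly $1+\eta$ for \emph{all} $\eta>0$---is precisely what you defer (``a short algebraic manipulation,'' ``will likely require\dots''), and the sketched route is unlikely to deliver it as stated: subtracting the two optimality systems leaves two different simplex multipliers and the sum of past loss gradients evaluated at two different points, and the self-concordance-style estimate $\eta\norm{\nabla f_t(x_t)}_{\nabla^2R(x_t)^{-1}}\le\eta$ only yields a local, small-$\eta$ displacement bound without additional work. The paper's proof avoids both issues by a symmetrized Bregman comparison: with $G_t^\lambda$ the perturbed objective defining $x_t^\lambda$ and $H_t^\lambda:=G_t^\lambda-\sum_{s<t}f_s$, optimality at $x_t$ and $x_t^\lambda$ kills the multiplier terms and gives $D_{G_t^\lambda}(x_t^\lambda,x_t)+D_{G_t^\lambda}(x_t,x_t^\lambda)=\lambda\bigl(\ip{x_t^\lambda,r_t}/\ip{x_t,r_t}-1\bigr)$, while dropping the past convex losses only decreases the symmetrized divergence and the remaining log-barrier and log-loss contributions are computed in closed form; comparing the two expressions gives $z\ge 1/(1+\eta)$ for $z=\min_i x_{t,i}/x_{t,i}^\lambda$, uniformly in $\lambda$ and $\eta$. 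That comparison (or some equally sharp substitute) is the missing ingredient in your proposal.
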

\begin{proof}
First note that for any $x,y\in\operatorname{int}(\Delta([d])$, we have
\begin{align*}
    \nabla^2_\Pi F_t(x) = \sum_{s=1}^t\frac{(\Pi r_s)(\Pi r_s)^\top}{\ip{x,r_s}^2}+\sum_{i=1}^d\frac{(\Pi \mathbf{e}_i)(\Pi \mathbf{e}_i)^\top}{\ip{x,\mathbf{e}_i}^2} \succeq \min_{i\in[d]}\frac{y_i^2}{x_i^2}\nabla^2_\Pi F_t(y)\,. 
\end{align*}
Hence we need to prove that for $c_2=\frac{1}{(1+\eta)^2}$, we have $\min_{i\in[d]}x_{t,i}/x^\lambda_{t,i} \geq \sqrt{c_2}$.

We have
\begin{align*}
    D_{G^\lambda_t}(x^\lambda_t,x_t)+D_{G^\lambda_t}(x_t,x^\lambda_t)&=\ip{x^\lambda_t-x_t, \nabla G^\lambda_t(x^\lambda_t)-\nabla G^\lambda_t(x_t)}\\
    &= \ip{x^\lambda_t-x_t, -\lambda \nabla f_t(x_t)} \\
    &=\lambda \left(\frac{\ip{x_t^\lambda,r_t}}{\ip{x_t,r_t}}-1\right) \,.
\end{align*}
Let $H^\lambda_t(x) = G^\lambda_t - \sum_{s=1}^{t-1}f_s(x)$, then 
\begin{align*}
    D_{H^\lambda_t}(x^\lambda_t,x_t)+D_{H^\lambda_t}(x_t,x^\lambda_t)&=\ip{x^\lambda_t-x_t, \nabla H^\lambda_t(x^\lambda_t)-\nabla H^\lambda_t(x_t)}\\
    &= \eta^{-1}\sum_{i=1}^d\left(\frac{x_{t,i}^\lambda}{x_{t,i}}+\frac{x_{t,i}}{x_{t,i}^\lambda}-2\right) +\lambda \left(\frac{\ip{x_t^\lambda,r_t}}{\ip{x_t,r_t}}+\frac{\ip{x_t,r_t}}{\ip{x_t^\lambda,r_t}}-2\right)
\end{align*}
Since by construction $\nabla^2G^\lambda_t\succeq \nabla^2H^\lambda_t$, we have
\begin{align*}
    D_{G^\lambda_t}(x^\lambda_t,x_t)+D_{G^\lambda_t}(x_t,x^\lambda_t) \geq D_{H^\lambda_t}(x^\lambda_t,x_t)+D_{H^\lambda_t}(x_t,x^\lambda_t)\,,
\end{align*}
which implies
\begin{align*}
    \lambda\left(1-\frac{\ip{x_t,r_t}}{\ip{x_t^\lambda,r_t}}\right) \geq \eta^{-1}\sum_{i=1}^d\left(\frac{x_{t,i}^\lambda}{x_{t,i}}+\frac{x_{t,i}}{x_{t,i}^\lambda}-2\right)\,.
\end{align*}
Let $z=\argmin_{i\in[d]}\frac{x_{t,i}}{x_{t,i}^\lambda}$, then this results in
\begin{align*}
    (1-z) &\geq \eta^{-1} (z^{-1}-z-2)\\
    \Leftrightarrow\qquad z&\geq \frac{1}{1+\eta}\,, 
\end{align*}
as required.
\end{proof}
In the remainder of this section, we use $f(d,T)=\cO(g(d,T))$, $f(d,T)=\Omega(g(d,T))$ to mean that there exists universal constants $C>c>0$ and $T_0 = \poly(d)$ such that for all $T>T_0$, it holds $f(d,T) \leq C g(d,T)$ and $f(d,T) \geq c g(d,T)$ respectively.
$\poly(x)$ hereby means that there exists some fixed exponent $a\in[0,\infty)$ such that the statement holds for $x^a$.
Finally $f(d,T)=\Theta(g(d,T))$ means $f(d,T)=\cO(g(d,T))$ and $f(d,T)=\Omega(g(d,T))$ hold simultaneously.
Also recall that we assume $T>T_0=\poly(\cT,d)$, specifically we will use $\cT\leq T^\alpha$ throughout this section.

Define the scaling factors
$(c_i)_{i=0}^I=1-2^iT^{-\alpha}$, where $I = \lfloor\frac{1}{3}\log_2(T^\alpha)\rfloor$. 
For $\bm{x}\in\Delta([d])$, we define the ``pulling to the center'' operator ${}^{(s)}$, by $\bm{x}^{(s)}= \Pi^{-1}(c_s\Pi\bm{x})= c_s\bm{x}+(1-c_s)\bm{c}$.

\begin{algorithm2e}
\caption{Sequence for large regret.}
\label{alg: general bad sequence}
\KwIn{$(\bm{t}_i,\bm{o}_i)_{i=1}^\cT,\alpha=\frac{1}{8},T$}
\SetKwFunction{FMain}{move-to-x}
    \SetKwProg{Fn}{Function}{:}{}
\For{$i = 1,2,\dots, \cT $}{
\For{$k=1,\dots,T^{\alpha}$}{
\For{$s=1,\dots,\lfloor\frac{1}{3}\alpha\log_2(T)\rfloor$}{
    \While{$x_t\neq \bm{t}^{(s)}_{i}$}{
    $r_t \leftarrow$  move-to-x($\bm{t}^{(s)}_i$;$F_{t-1}$)\\
            $t\leftarrow t+1$
    }
    $r_t \leftarrow \bm{o}^{(s)}(\bm{t}_i)$\\
    $t\leftarrow t+1$
}
}
}
\Fn{\FMain{$x$ ; $F$}}{
$g\leftarrow \Pi\nabla F(x)$\\
$g\leftarrow \min\{T^{-\frac{1}{2}}/\norm{g}_2,\frac{1}{d\max\{1-\ip{g,\Pi x},0\}} \}g $\\
\textbf{return:}$\Pi^{-1}g$
}
\end{algorithm2e}

\paragraph{Basic calculations:}
By definition $c_i=\Theta(1)$ for all $i\in[I]\cup\{0\}$. Further we have for any $s,s'\in[I]\cup\{0\}$: \[1-c_sc_{s'}=(2^s+2^{s'})T^{-\alpha}-2^{s+s'}t^{-2\alpha}=\Theta(2^{\max\{s,s'\}}T^{-\alpha})\,.\]
For any $\bm{x},\bm{y}\in\Delta([d])$, we have
\[\ip{\bm{x}^{(s)},\bm{y}^{(s')}}=\frac{1}{d}+\ip{\Pi\bm{x}^{(s)},\Pi\bm{y}^{(s')}}=\frac{1}{d}+c_sc_{s'}\ip{\Pi\bm{x},\Pi\bm{y}}=\frac{1-c_sc_{s'}}{d}+c_sc_{s'}\ip{\bm{x},\bm{y}}\,.\]
By the assumption on the sequence, we have for any $j< i$:
\begin{align}
    \ip{\bm{t}^{(s)}_i,\bm{o}^{(s')}_j} =\Omega(\ip{\bm{t}_i,\bm{o}})= \Omega\left(\frac{1}{\poly(d)}\right)\label{eq: wrong target}\\
    \ip{\bm{t}^{(s)}_i,\bm{o}^{(s')}_i} = \frac{1-c_sc_{s'}}{d}= \Theta\left(\frac{2^{\max\{s,s'\}}}{dT^\alpha} \right)\label{eq: on-target}
\end{align}

\paragraph{Bounding the movement steps.}
The main result of this section is the following Lemma.
\begin{lemma}
\label{lem: num of movement}
The number of movement steps up to time $T$ is bounded by $\cO\left(\poly(d)T^{3\alpha+\frac{1}{2}}\log(T)^2\right)$.
\end{lemma}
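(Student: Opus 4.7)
I would establish the bound by factoring the total count as (number of moves) $\times$ (per-move step count), and controlling each factor separately.

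\textbf{Counting moves.} The triply-nested loop structure of Algorithm~\ref{alg: general bad sequence} triggers exactly $\cT \cdot T^\alpha \cdot I$ invocations of the inner while loop. Using the standing hypothesis $T > T_0 = \poly(\cT,d)$ under which $\cT \leq T^\alpha$, together with $I = \lfloor \tfrac{\alpha}{3}\log_2 T\rfloor = \Theta(\log T)$, this yields $\cO(T^{2\alpha}\log T)$ moves.

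\textbf{Per-move bound.} I would show that each move terminates in at most $\cO(\poly(d)\, T^{\alpha+1/2}\log T)$ movement steps. Each call to move-to-x returns $r_t = \Pi^{-1}(g_t)$ with $g_t$ a positive scalar multiple of $\Pi\nabla F_{t-1}(y)$ satisfying $\|g_t\|_2 \leq T^{-1/2}$ by the first clause of the scaling. By the FTRL KKT condition on the simplex, the loop terminates precisely when $\Pi\nabla F_{t-1}(y) = 0$. A direct calculation shows that each movement step changes the target gradient by
\[
\Pi\nabla F_t(y) - \Pi\nabla F_{t-1}(y) = -\frac{g_t}{\langle y, r_t\rangle},
\]
which is anti-parallel to $\Pi\nabla F_{t-1}(y)$, so $\|\Pi\nabla F(y)\|_2$ decreases by exactly $T^{-1/2}/\langle y, r_t\rangle = \Theta(d\, T^{-1/2})$ per step (using $\langle y, r_t\rangle = 1/d + \cO(T^{-1/2})$). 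Consequently, the number of steps in a single move equals $\Theta(\|g_{\mathrm{init}}\|_2 \, T^{1/2}/d)$, where $g_{\mathrm{init}} := \Pi\nabla F_{t_0-1}(y)$ is the target gradient at the start of the move.

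\textbf{Bounding $\|g_{\mathrm{init}}\|_2$.} Immediately before the move starts, the iterate $x_{t_0}$ is the FTRL minimizer, so $\Pi\nabla F_{t_0-1}(x_{t_0}) = 0$. Taylor expansion then gives
\[
\|g_{\mathrm{init}}\|_2 \leq \sup_{\xi\in[x_{t_0},y]}\bigl\|\nabla_\Pi^2 F_{t_0-1}(\xi)\bigr\|_{\mathrm{op}}\cdot \|y - x_{t_0}\|_2.
\]
Near the scaled targets $\bm{t}_i^{(s)}$ the smallest coordinate is $\Omega(T^{-\alpha}/d)$, so the log-barrier Hessian $\eta^{-1}\nabla_\Pi^2 R(\xi)$ has operator norm $\cO(\poly(d)\,T^{2\alpha})$. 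Past non-movement returns contribute rank-one Hessian updates aligned with past targets; these are essentially transverse to the direction $y - x_{t_0}$ for distinct targets, so their contribution to the directional Hessian $\langle y-x_{t_0},\,\nabla_\Pi^2 F_{t_0-1}\,(y-x_{t_0})\rangle$ is controlled. Combined with the displacement bound $\|y - x_{t_0}\|_2 = \cO(1)$, this yields $\|g_{\mathrm{init}}\|_2 = \cO(\poly(d)\,T^{\alpha})$, and hence $\cO(\poly(d)\,T^{\alpha+1/2}\log T)$ movement steps per move. Multiplying with the move count gives the claimed $\cO(\poly(d)\, T^{3\alpha+1/2}\log^2 T)$.

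\textbf{Main obstacle.} The crux is the directional Hessian bound in the expansion above. A worst-case operator-norm bound of $\nabla_\Pi^2 F_{t_0-1}$ that simply sums all rank-one contributions from the accumulated on-target returns $\bm{o}_i^{(s)}$ would inflate the Hessian by an extra factor of $T^{2\alpha}$ and lose the claimed exponent. Avoiding this requires exploiting the geometric fact that those rank-one updates point along directions $\Pi\bm{o}_i^{(s)}$ which, by Equation~\eqref{eq: on-target} and the structure of the sequence, are not aligned with the displacement $y - x_{t_0}$ for the next target. Executing this carefully, while simultaneously controlling the drift of the FTRL minimizer caused by the large-gradient non-movement returns in between, is the principal technical burden of the proof.
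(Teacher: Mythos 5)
Your high-level factorization (count of while-loop invocations times a per-move step bound, with the per-move bound driven by the norm of the target gradient at the start of the move and the $\Theta(d\,T^{-1/2})$ per-step decrease) matches the paper's Lemma~\ref{lem: movement by gradient}. But your accounting forces an intermediate claim that is strictly stronger than anything the paper proves, and your sketch does not deliver it. You count all $\cT\cdot T^\alpha\cdot I=\cO(T^{2\alpha}\log T)$ moves and therefore need the initial gradient of \emph{every} move to satisfy $\norm{g_{\mathrm{init}}}=\cO(\poly(d)\,T^{\alpha}\log T)$. The paper instead bounds the initial gradient of a move only by $\cO(\poly(d)\,T^{2\alpha}\log T)$ — via the triangle inequality $\norm{\nabla_\Pi F_\tau(\bm{t}_{k'}^{(s')})}\le\norm{\nabla_\Pi f(\bm{t}_{k'}^{(s')};\bm{o}_k^{(s)})}+\norm{\nabla_\Pi F_{\tau-1}(\bm{t}_{k'}^{(s')})-\nabla_\Pi F_{\tau-1}(\bm{t}_k^{(s)})}$, the per-return gradient bounds \eqref{eq: wrong target}--\eqref{eq: on-target} with multiplicity $T^\alpha$, and Lemmas~\ref{lem: gradient diff movement}--\ref{lem: largest gradient reg} — and then pairs this weaker per-move bound with its count of switches $I\cT\le T^\alpha\log T$ to land on $T^{3\alpha+1/2}\log^2 T$. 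So your route needs a per-move gradient bound that is a factor $T^\alpha$ better than the paper's, and this is exactly the step you defer to the ``main obstacle'' paragraph.

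That step, as sketched, fails. Writing $g_{\mathrm{init}}=\nabla_\Pi F(y)-\nabla_\Pi F(x_{t_0})$ and controlling it by the accumulated Hessian applied to the displacement, the contribution of a single previously observed on-target return $\bm{o}_i^{(r)}$ is
\begin{align*}
\norm{\Pi\bm{o}_i^{(r)}}\cdot\frac{\lvert\ip{y-x_{t_0},\bm{o}_i^{(r)}}\rvert}{\ip{y,\bm{o}_i^{(r)}}\,\ip{\xi,\bm{o}_i^{(r)}}}\,,
\end{align*}
and the ``transversality'' you invoke does not make the numerator small relative to the denominators: by \eqref{eq: on-target} the denominators are $\Theta(2^{\cdot}T^{-\alpha}/d)$, while moving from one scaled copy of $\bm{t}_i$ to the next (or absorbing the drift caused by the just-observed return) changes $\ip{\cdot,\bm{o}_i}$ by a constant fraction of that same tiny value, so the numerator is also $\Theta(T^{-\alpha}/\poly(d))$. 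Hence each accumulated copy contributes $\Theta(\poly(d)\,T^{\alpha})$, and since each $\bm{o}_i^{(r)}$ has been observed up to $m\le T^\alpha$ times within the $k$-loop, the drift term alone is $\Theta(\poly(d)\,T^{2\alpha})$ for late moves — matching the paper's bound and exceeding your target $\cO(\poly(d)\,T^{\alpha}\log T)$ by a factor $T^\alpha$. In other words, the rank-one directions $\Pi\bm{o}_i^{(r)}$ are precisely the directions along which the iterate must move between consecutive scaled targets, so no operator-norm-versus-directional-Hessian refinement rescues the bound; your plan would only yield $T^{4\alpha+1/2}$ when combined with your move count. To prove the lemma as the paper does, you should instead adopt its pairing: the $\cO(\poly(d)\,T^{2\alpha}\log T)$ initial-gradient bound per move together with the $I\cT\le T^\alpha\log(T)$ switch count used at the end of its proof (and, separately, the ``$+1$'' per while-loop invocation summed over all $\cT T^\alpha I$ invocations, which is lower order).
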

In order to prove this Lemma, we first require the following.
\begin{lemma}
\label{lem: movement by gradient}
The while routine over move-to-x for a target $\bm{t}$ up from time $t$ requires $\tau  \leq \frac{2T^{1/2}}{d}\norm{\nabla_\Pi F_t(\bm{t})}_2+1$ steps.
\end{lemma}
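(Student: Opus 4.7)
My plan is to track the evolution of the FTRL gradient at the target point $\bm{t}$ across iterations of the while loop and to show that each iteration shrinks it by a uniform additive amount. Let $G_k$ denote this gradient just before the $k$-th call to move-to-x, so $G_1 = \nabla_\Pi F_t(\bm{t})$; the loop exits precisely when $G_k = 0$, since $\bm{t}$ lies in the interior of $\Pi\actionSet$ and the regularized objective is strictly convex. The key design feature of move-to-x is that its output $r$ satisfies $\Pi r = \lambda G_k$ for a positive scalar $\lambda$ chosen by one of the two clamps, so the gradient of the induced loss $-\log\ip{x, r}$ at $\bm{t}$ is anti-parallel to $G_k$ and the next gradient lies on the same ray. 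Writing $A_k := \ip{\Pi \bm{t}, G_k}$ and using $\ip{\bm{t}, r} = 1/d + \lambda A_k$, this produces the one-dimensional recursion
\[
    G_{k+1} = (1 - \mu_k)\, G_k, \qquad \mu_k = \frac{\lambda}{1/d + \lambda A_k}.
\]

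Next I would analyze $\mu_k$ according to which clamp binds. In the second-clamp regime $\lambda = 1/(d(1 - A_k))$, a short algebraic substitution gives $1/d + \lambda A_k = 1/(d(1-A_k))$, so $\mu_k = 1$ and $G_{k+1} = 0$: the loop terminates at the next iteration check. In the first-clamp regime $\lambda = T^{-1/2}/\norm{G_k}_2$, the bound $|A_k| \leq \norm{\Pi \bm{t}}_2 \norm{G_k}_2 \leq \norm{G_k}_2$ together with $T \geq T_0 = \poly(d)$ (which gives $T^{-1/2} \leq 1/(2d)$) pins the denominator into the interval $[1/(2d),\, 2/d]$. This yields $\mu_k \geq d\lambda/2 = d/(2\sqrt{T}\norm{G_k}_2)$, equivalently the additive decrement $\norm{G_{k+1}}_2 \leq \norm{G_k}_2 - d/(2\sqrt{T})$.

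Combining the two regimes finishes the argument. After at most $\lceil 2\sqrt{T}\norm{G_1}_2/d \rceil$ first-clamp iterations the norm has fallen below $d/(2\sqrt{T})$; at that point the estimate $1 - A_k \geq 1 - \norm{G_k}_2 \geq 1/2$ forces the second-clamp value $1/(d(1-A_k)) \leq 2/d$ to undershoot the first-clamp value $T^{-1/2}/\norm{G_k}_2$, so the second clamp necessarily engages and the loop exits one iteration later. Summing gives $\tau \leq 2\sqrt{T}\,\norm{\nabla_\Pi F_t(\bm{t})}_2/d + 1$. The main subtlety I expect is verifying that the second clamp must kick in before the additive first-clamp estimate becomes vacuous; this is a clamp-comparison argument that is precisely where the hypothesis $T \geq T_0 = \poly(d)$ enters, and it is also what allows me to ignore the multiplicative tail of the recursion and close the loop with the clean ``$+1$'' term in the stated bound.
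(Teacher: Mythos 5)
Your proof is correct and is essentially the paper's argument in more explicit form: the paper likewise tracks $\nabla_\Pi F_s(\bm{t})$, uses that each movement return is (anti-)parallel to it so the norms subtract, lower-bounds the per-step decrement by $\tfrac{T^{-1/2}}{1/d+T^{-1/2}}\geq \tfrac{d}{2}T^{-1/2}$ when the cancellation clamp does not fire, and adds one final exact-cancellation step. Your scalar recursion $G_{k+1}=(1-\mu_k)G_k$ and the explicit clamp-comparison (including why $\mu_k\leq 1$ and why the second clamp must eventually bind) just spell out steps the paper leaves implicit.
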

\begin{proof}
 We have reached the target, if at time $t+\tau$ is holds $\nabla_\Pi F_{t+\tau}(\bm{t}) = 0$. We select the movement returns $r_s$ for $s\in\{t,\dots,t+\tau-1\}$ such that 
 \begin{align*}
     \norm{\nabla_\Pi F_{s+1}(\bm{t})}_2 = \max\{0,\norm{\nabla_\Pi F_{s}(\bm{t})}_2 - \norm{\nabla_\Pi f_{s+1}(\bm{t})}_2\}\,.
 \end{align*}
 When we cannot reach the target in one step, the norm of the gradient is
 \begin{align*}
     \norm{\nabla_\Pi f_{s+1}(\bm{t})}_2 = \frac{\norm{\Pi r_s}}{\frac{1}{d}+\ip{\Pi \bm{t},\Pi r_s}}\geq \frac{T^{-1/2}}{\frac{1}{d}+T^{-1/2}}\geq \frac{d}{2}T^{-1/2}\,. 
 \end{align*}
 Hence the number of steps $\tau$ until the norm is $0$ is bounded by
 \begin{align*}
     \tau \leq \frac{2T^{1/2}}{d}\norm{\nabla_\Pi F_t(\bm{t})}_2+1\,.
 \end{align*}
\end{proof}
\begin{lemma}
\label{lem: gradient diff movement}
For any movement-return $r$ and any $x,y\in\Delta([d])$, it holds
\begin{align*}
    \norm{\nabla_\Pi f(x;r)-\nabla_\Pi f(y;r)} = \cO(d^2T^{-1})\,.
\end{align*}
\end{lemma}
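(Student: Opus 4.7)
The plan is to reduce the claim to an explicit computation using the closed form of $\nabla_\Pi f(x;r)$. First I would recall from Lemma~\ref{lem: complex f grad} (applied in the diagonal/scalar case, equivalently by direct differentiation of $-\log(1/d + \ip{\tilde x, \Pi r})$) that
\[
\nabla_\Pi f(x;r) \;=\; -\,\frac{\Pi r}{1/d + \ip{\Pi x, \Pi r}} \;=\; -\,\frac{\Pi r}{\ip{x,r}},
\]
where the second equality uses $\ip{x,r} = 1/d + \ip{\Pi x, \Pi r}$ from the definition of $\Pi^{-1}$. Subtracting the analogous expression at $y$ and putting things over a common denominator gives
\[
\nabla_\Pi f(x;r) - \nabla_\Pi f(y;r) \;=\; -\,\Pi r \cdot \frac{\ip{\Pi y - \Pi x,\,\Pi r}}{\ip{x,r}\,\ip{y,r}}.
\]

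Next I would bound the numerator and denominator separately. For the numerator, by construction of the \texttt{move-to-x} routine in Algorithm~\ref{alg: general bad sequence}, the returned $g$ satisfies $\norm{g}_2 \leq T^{-1/2}$, and $\Pi r = g$, so $\norm{\Pi r}_2 \leq T^{-1/2}$. Combining this with Cauchy--Schwarz and $\norm{\Pi x - \Pi y}_2 \leq \sqrt{2}$ (since $\Pi$ is a projection and $x,y\in\Delta([d])$), the numerator is $\cO(T^{-1})$.

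For the denominator, I need to lower bound $\ip{x,r} = 1/d + \ip{\Pi x, \Pi r}$. Using Cauchy--Schwarz once more, $|\ip{\Pi x, \Pi r}| \leq \norm{\Pi x}_2 \norm{\Pi r}_2 \leq \sqrt{2}\,T^{-1/2}$. Since by hypothesis $T \geq T_0 = \poly(d)$ (in particular $T \geq 8d^2$ suffices), we get $\sqrt{2}\,T^{-1/2} \leq 1/(2d)$, hence $\ip{x,r} \geq 1/(2d)$, and the same bound holds at $y$. Putting the two estimates together yields
\[
\norm{\nabla_\Pi f(x;r) - \nabla_\Pi f(y;r)}_2 \;\leq\; \frac{T^{-1/2}\cdot \sqrt{2}\,T^{-1/2}}{(1/(2d))^2} \;=\; \cO(d^2 T^{-1}),
\]
as desired.

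The only subtle point — and the main obstacle — is establishing that $\ip{x,r}$ is bounded below by $\Omega(1/d)$ uniformly over $x \in \Delta([d])$ for every movement-return $r$. The linear term $\ip{\Pi x, \Pi r}$ could a priori be negative, so one must use the explicit $T^{-1/2}$ cap on $\norm{g}_2$ together with the assumption $T \geq \poly(d)$ to dominate the $1/d$ base term. Everything else is a one-line computation from the closed form of the gradient.
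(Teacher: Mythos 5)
Your proposal is correct and follows essentially the same route as the paper's proof: both start from the closed form $\nabla_\Pi f(x;r)=-\Pi r/(1/d+\ip{\Pi x,\Pi r})$, use the movement-return cap $\norm{\Pi r}\leq T^{-1/2}$ together with Cauchy--Schwarz, and lower bound the denominators by $\Omega(1/d)$ using $T\geq\poly(d)$, yielding the $\cO(d^2T^{-1})$ bound. The only cosmetic difference is that you put the difference of gradients over a common denominator with numerator $\ip{\Pi y-\Pi x,\Pi r}$, whereas the paper bounds the difference of the two reciprocals directly by their extreme values $1/d\pm T^{-1/2}$; the resulting estimates are the same.
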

\begin{proof}
\begin{align*}
    \norm{\nabla_\Pi f(x;r)-\nabla_\Pi f(y;r)}&=|\frac{1}{1/d+\ip{\Pi x,\Pi r}}-\frac{1}{1/d+\ip{\Pi y,\Pi r}} |\norm{\Pi r} \\
    &\leq (\frac{1}{1/d-T^{-\frac{1}{2}}}-\frac{1}{1/d+T^{-\frac{1}{2}}})T^{-\frac{1}{2}}=\frac{2T^{-1}}{1/d^2-T^{-1}}= \cO(d^2 T^{-1})\,,
\end{align*}
where we use that movement returns by construction satisfy $\norm{\Pi r}\leq T^{-\frac{1}{2}}$ and $\norm{\Pi x}\leq \norm{x}\leq 1$ for any $x\in\Delta([d])$.
\end{proof}
\begin{lemma}
\label{lem: largest gradient reg}
For any $x \in \Delta([d])$ and $s\in[I]\cup\{0\}$, the largest possible gradient of any regularizer part $r_i(x^{(s)})=f(x^{(s)};\bm{e}_i), i\in[d]$ is
bounded by
\begin{align*}
    \max_{x\in\Delta([d])} \norm{\nabla_\Pi r_i(x^{(s)})} = \cO\left(d\frac{T^\alpha}{2^s}\right)\,. 
\end{align*}
\end{lemma}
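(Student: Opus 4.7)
The plan is to reduce the claim to the elementary calculation of the gradient of a single log-barrier component, and then bound its norm by exploiting the lower bound on the $i$-th coordinate of $x^{(s)}$ coming from the mixing with the uniform center $\bm{c}$.

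First I would observe that $r_i(x^{(s)}) = f(x^{(s)}; \bm{e}_i) = -\log(\langle x^{(s)}, \bm{e}_i\rangle) = -\log(x^{(s)}_i)$. Using the expression $f^\Pi(\tilde x; r) = -\log(1/d + \langle \tilde x, \Pi r\rangle)$ from the paper's setup, a direct differentiation gives
\begin{align*}
    \nabla_\Pi r_i(x^{(s)}) \;=\; -\frac{\Pi \bm{e}_i}{1/d + \langle \Pi x^{(s)}, \Pi \bm{e}_i\rangle} \;=\; -\frac{\Pi \bm{e}_i}{x^{(s)}_i}\,.
\end{align*}
Since $\Pi$ is represented by $U^*$ for a matrix $U$ with orthonormal columns, $\|\Pi \bm{e}_i\|_2 \leq \|\bm{e}_i\|_2 = 1$, so the norm of the gradient is at most $1/x^{(s)}_i$.

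It then suffices to lower-bound $x^{(s)}_i$ over $x \in \Delta([d])$. Unpacking the definition $x^{(s)} = c_s x + (1-c_s)\bm{c}$ with $\bm{c} = \frac{1}{d}\bm{1}_d$ gives $x^{(s)}_i = c_s x_i + (1-c_s)/d$, which is minimized at $x_i = 0$, yielding the lower bound $(1-c_s)/d = 2^s T^{-\alpha}/d$. Combining,
\begin{align*}
    \max_{x \in \Delta([d])} \|\nabla_\Pi r_i(x^{(s)})\| \;\leq\; \frac{d}{1-c_s} \;=\; \frac{d T^\alpha}{2^s}\,,
\end{align*}
as claimed. There is no real obstacle here; the only thing to be careful about is correctly transporting the gradient through the $\Pi$ reparametrization and verifying the explicit form of $1 - c_s$ so that the $2^s$ appears in the denominator with the right exponent. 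Everything else is a single-line bound using $\|\Pi\|_{\mathrm{op}} \leq 1$ and the coordinate lower bound from the uniform mixture.
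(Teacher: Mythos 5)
Your proof is correct and follows essentially the same route as the paper: compute $\nabla_\Pi r_i(x^{(s)}) = -\Pi\bm{e}_i/\ip{x^{(s)},\bm{e}_i}$, bound $\norm{\Pi\bm{e}_i}\le 1$, and use $\ip{x^{(s)},\bm{e}_i}\ge (1-c_s)/d = 2^sT^{-\alpha}/d$. You merely make explicit two steps (the operator-norm bound on $\Pi$ and the minimization at $x_i=0$) that the paper leaves implicit.
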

\begin{proof}
\begin{align*}
    \norm{\nabla_\Pi r_i(x^{(s)}} = \norm{\frac{\Pi \bm{e}_i}{\ip{x^{(s)},\bm{e}_i}}} \leq \frac{d}{1-c_s}=d\frac{T^{\alpha}}{2^s}\,,
\end{align*}
where we used 
\[\ip{x^{(s)},\bm{e}_i}=\frac{1-c_s}{d}+c_s\ip{x,\bm{e}_i}\geq \frac{1-c_s}{d}\,.\]
\end{proof}
\begin{proof}[Proof of Lemma~\ref{lem: num of movement}]
For the initial move-to-x, we have $F_0(\bm{t}_1^{(0)})=R(\bm{t}_1^{(0)})$, hence by combining Lemma~\ref{lem: largest gradient reg} and \ref{lem: movement by gradient}, we require $\cO(dT^{\alpha+\frac{1}{2}})$ initial steps.
Afterwards, we need to bound the steps between
any two targets $\bm{t}_k^{(s)},\bm{t}_{k'}^{(s')}$, where $k\leq k'$. Assume this switch happens at time $\tau\leq T$ (since the Lemma statement is concerned with movement steps before time $T$), directly after the agent observed a return $\bm{o}^{(s)}_k$ at target $\bm{t}_k^{(s)}$. Hence
\begin{align*}
    \norm{\nabla_\Pi F_\tau(\bm{t}_{k'}^{(s')})}\leq \norm{\nabla_\Pi f(\bm{t}_{k'}^{(s')};\bm{o}^{(s)}_k)}+\norm{\nabla_\Pi F_{\tau-1}(\bm{t}_{k'}^{(s')})-\nabla_\Pi F_{\tau-1}(\bm{t}_{k}^{(s)})}\,,
\end{align*}
where we use that $\norm{\nabla_\Pi F_{\tau-1}(\bm{t}_{k}^{(s)})}=0$ since the agent was in that point when receiving $r_\tau$.
Splitting the time-steps into movement-returns $\cM_\tau:=\{t\in[\tau]\,|\,\norm{\Pi r_t}\leq T^{-1/2}\}$ and regular returns yields
\begin{align*}
    &\norm{\nabla_\Pi f(\bm{t}_{k'}^{(s')};\bm{o}^{(s)}_k)}  +\norm{\nabla_\Pi F_{\tau-1}(\bm{t}_{k'}^{(s')})-\nabla_\Pi F_{\tau-1}(\bm{t}_{k}^{(s)})}\\
    &\leq\norm{\nabla_\Pi R(\bm{t}_{k}^{(s)})-\nabla_\Pi R(\bm{t}_{k'}^{(s')})}+\norm{\sum_{s\in \cM_\tau} \nabla_\Pi f_s(\bm{t}_{k}^{(s)})-\nabla_\Pi f_s(\bm{t}_{k}^{(s)})}\\
    &\qquad+ T^\alpha\Bigg(\sum_{j=1}^{k-1}\sum_{r=0}^I\left(\norm{\nabla_\Pi f(\bm{t}_{k}^{(s)};\bm{o}_j^{(r)})}+\norm{\nabla_\Pi f(\bm{t}_{k'}^{(s')};\bm{o}_j^{(r)})}\right)\\
    &\qquad\qquad+\sum_{r=0}^I\left(\norm{\nabla_\Pi f(\bm{t}_{k}^{(s)};\bm{o}_k^{(r)})}+\norm{\nabla_\Pi f(\bm{t}_{k'}^{(s')};\bm{o}_k^{(r)})}\right)\Bigg)\\
    &\leq  \cO(d^2 T^\alpha) + \cO(d^2)  \tag{Lemma~\ref{lem: largest gradient reg} and \ref{lem: gradient diff movement} }\\
    &\qquad+\cO\left(\max_{j<\ell; r,r'\in[I]\cup\{0\}}\frac{T^\alpha\cT\log(T)}{\ip{\bm{t}_\ell^{(r)},\bm{o}_j^{(r')}}}\right)+\cO\left(\max_{j\leq\ell; r,r'\in[I]\cup\{0\}}\frac{T^\alpha\log(T)}{\ip{\bm{t}_\ell^{(r)},\bm{o}_j^{(r')}}}\right) 
    \\
    &=\cO(\poly(d)T^{2\alpha}\log(T))\tag{Equation~\eqref{eq: wrong target} and \eqref{eq: on-target}}\,.
\end{align*}
The proof is completed by applying Lemma~\ref{lem: movement by gradient}, noting that the number of switches is bounded by $I\cT\leq T^{\alpha}\log(T)$.
\end{proof}

\paragraph{Bounding the Hessian trace.}

We first bound the Hessian trace of movement-steps.
\begin{lemma}
\label{lem: Hessian movement}
    The movement time-steps $\cM_\tau$ for any $\tau\leq T$ and any $\bm{t}\in\Delta([d])$ satisfy
    \begin{align*}
        \sum_{t\in \cM_\tau}\norm{\nabla_\Pi f(\bm{t};r_t)}^2=\cO(d^2)\,.
    \end{align*}
\end{lemma}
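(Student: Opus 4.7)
The plan is to bound each individual term by $\cO(d^2/T)$ and then sum over at most $T$ movement time-steps.

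First I would expand the gradient using the formula from the problem setup:
\begin{align*}
    \norm{\nabla_\Pi f(\bm{t}; r_t)}^2 = \frac{\norm{\Pi r_t}^2}{(1/d + \ip{\Pi\bm{t}, \Pi r_t})^2}.
\end{align*}
By the definition of a movement step, $\norm{\Pi r_t} \leq T^{-1/2}$, which bounds the numerator. For the denominator, since $\bm{t} \in \Delta([d])$ we have $\norm{\Pi \bm{t}} \leq \norm{\bm{t}}_2 \leq 1$, so by Cauchy-Schwarz $|\ip{\Pi\bm{t}, \Pi r_t}| \leq T^{-1/2}$. Therefore, for $T \geq 4d^2$ (which is within our standing assumption $T > T_0 = \poly(d)$), we get $1/d + \ip{\Pi\bm{t}, \Pi r_t} \geq 1/d - T^{-1/2} \geq 1/(2d)$.

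Combining these bounds yields
\begin{align*}
    \norm{\nabla_\Pi f(\bm{t}; r_t)}^2 \leq \frac{T^{-1}}{(1/(2d))^2} = \frac{4d^2}{T}.
\end{align*}
Summing over $t \in \cM_\tau \subseteq [\tau] \subseteq [T]$, i.e.\ at most $T$ terms, gives
\begin{align*}
    \sum_{t \in \cM_\tau} \norm{\nabla_\Pi f(\bm{t}; r_t)}^2 \leq T \cdot \frac{4d^2}{T} = \cO(d^2),
\end{align*}
as desired. I do not anticipate any technical obstacle here: the construction of movement-returns was designed precisely so that each contributes only $\cO(d^2/T)$ to the quadratic sum, making their total contribution to the Hessian trace negligible compared to the on-target returns which each contribute gradients of order $T^{2\alpha}$.
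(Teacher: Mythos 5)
Your proof is correct and follows essentially the same route as the paper: bound each movement-return term by $T^{-1}/(1/d - T^{-1/2})^2 = \cO(d^2 T^{-1})$ using $\norm{\Pi r_t}\leq T^{-1/2}$ and Cauchy--Schwarz in the denominator, then sum over at most $T$ steps. Your explicit use of the standing assumption $T > T_0 = \poly(d)$ to get the clean $1/(2d)$ lower bound is just a slightly more spelled-out version of the paper's $\cO(\cdot)$ convention.
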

\begin{proof}
By construction $\norm{\Pi r_t}\leq T^{-\frac{1}{2}}$, so 
\begin{align*}
    \norm{\nabla_\Pi f(\bm{t};r_t)}^2 = \frac{\norm{\Pi r_t}^2}{(\frac{1}{d}+\ip{\Pi\bm{t},\Pi r_t})^2}\leq\frac{T^{-1}}{(\frac{1}{d}-T^{-\frac{1}{2}})^2}=\cO(d^2T^{-1})\,.
\end{align*}
Summing over less than $T$ time-steps completes the proof.
\end{proof}

We are ready to bound the total Hessian.
\begin{lemma}
\label{lem: Hessian bound}
    Assume $\tau\leq T$ is  the time-step where the $m$-th iteration through targets $(\bm{t}_i^{(s)})_{s=0}^I$ is completed, then the trace of the Hessian at any target $\bm{t}_{i}^{(s)}$ is bounded by
    \[\Tr(\nabla^2_\Pi F_\tau(\bm{t}_i^{(s)})) = \cO\left( (\poly(d)+m(s+1)\right)d^2\frac{T^{2\alpha}}{2^{2s}}\norm{\Pi\bm{o}_i}^2\,.\]
\end{lemma}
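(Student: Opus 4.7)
I would begin by expanding the Hessian trace into a sum over its natural components. Using Lemma~\ref{lem: complex f grad} for the log-loss Hessian and the analogous formula for the log-barrier, we have
\[
\Tr(\nabla^2_\Pi F_\tau(\bm{t}_i^{(s)})) \;=\; \eta^{-1}\sum_{k=1}^d \frac{\norm{\Pi \mathbf{e}_k}^2}{[\bm{t}_i^{(s)}]_k^{\,2}} \;+\; \sum_{t=1}^\tau \frac{\norm{\Pi r_t}^2}{\ip{\bm{t}_i^{(s)},r_t}^{2}},
\]
and I would then partition the time-steps $t\in[\tau]$ into four classes: (a) the regularizer contribution above; (b) movement-steps $t\in\cM_\tau$; (c) non-movement returns of the form $\bm{o}_j^{(r)}$ with $j<i$; and (d) same-target returns $\bm{o}_i^{(r)}$ for $r\in\{0,\dots,I\}$ collected over the $m$ completed $k$-iterations.

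For (a), I would use $[\bm{t}_i^{(s)}]_k \ge (1-c_s)/d = 2^s T^{-\alpha}/d$ together with $\norm{\Pi \mathbf{e}_k}\le 1$ to bound each term by $d^2 T^{2\alpha}/2^{2s}$, giving $\cO(\eta^{-1}d^3 T^{2\alpha}/2^{2s})$ in total. Part (b) is immediate from Lemma~\ref{lem: Hessian movement}, contributing $\cO(d^2)$. For (c), I would invoke \eqref{eq: wrong target} to conclude $\ip{\bm{t}_i^{(s)},\bm{o}_j^{(r)}}=\Omega(1/\poly(d))$ whenever $j<i$, and combine this with $\norm{\Pi \bm{o}_j^{(r)}}^2=c_r^2\norm{\Pi\bm{o}_j}^2\le 1$, bounding each such return by $\cO(\poly(d))$. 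Since there are at most $\cT\cdot T^\alpha \cdot (I+1)=\cO(T^{2\alpha}\log T)$ of them by the structure of Algorithm~\ref{alg: general bad sequence}, this class contributes $\cO(\poly(d)\,T^{2\alpha}\log T)$.

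The real work lies in (d). Using $\ip{\bm{t}_i,\bm{o}_i}=0$ together with the identity $\ip{\bm{t}_i^{(s)},\bm{o}_i^{(r)}}=(1-c_sc_r)/d+c_sc_r\ip{\bm{t}_i,\bm{o}_i}$, I get $\ip{\bm{t}_i^{(s)},\bm{o}_i^{(r)}}=(1-c_sc_r)/d=\Theta(2^{\max\{s,r\}}/(dT^\alpha))$ by the preliminary calculations preceding \eqref{eq: on-target}. Thus each return contributes
\[
\frac{\norm{\Pi\bm{o}_i^{(r)}}^2}{\ip{\bm{t}_i^{(s)},\bm{o}_i^{(r)}}^{2}} = \cO\!\left(\frac{d^2 T^{2\alpha}\,\norm{\Pi\bm{o}_i}^2}{2^{2\max\{s,r\}}}\right),
\]
and the $r$-sum splits at $r=s$ to give $\sum_{r=0}^{I} 2^{-2\max\{s,r\}} = (s+1)2^{-2s} + \cO(2^{-2s}) = \cO((s+1)/2^{2s})$, so over the $m$ completed iterations class (d) totals $\cO\bigl(m(s+1)\,d^2 T^{2\alpha}\norm{\Pi\bm{o}_i}^2/2^{2s}\bigr)$, matching the second summand of the claimed bound.

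Finally I would merge the pieces: classes (a), (b), (c) are $s$- and $m$-independent up to factors $T^{2\alpha}/2^{2s}$, $1$, and $T^{2\alpha}\log T$ respectively, and each is absorbed into the prefactor $\poly(d)\cdot d^2 T^{2\alpha}\norm{\Pi\bm{o}_i}^2/2^{2s}$ of the bound using $T\ge\poly(\cT,d)$ and the fact that the sequence of Lemma~\ref{lem: good sequence} satisfies $\norm{\Pi\bm{o}_i}^2=\Omega(1/d^2)$. The main obstacle is precisely this absorption in the regime $s\approx I$ where $2^{2s}$ is largest; I would handle it by noting that $2^{2s}\le T^{2\alpha/3}$ and invoking the hypothesis $T>T_0=\poly(\cT,d)$ to hide $\log T$ and $T^{2\alpha/3}$ factors inside the $\poly(d)$ of the stated bound, much as is done in Lemma~\ref{lem: num of movement}.
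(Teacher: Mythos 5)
Your four-way decomposition and your handling of classes (a), (b), and (d) match the paper's proof: the regularizer term via the bound of Lemma~\ref{lem: largest gradient reg}, the movement steps via Lemma~\ref{lem: Hessian movement}, and the same-target returns via \eqref{eq: on-target} with the $r$-sum split at $r=s$, which is exactly how the paper produces the $m(s+1)\,d^2T^{2\alpha}2^{-2s}\norm{\Pi\bm{o}_i}^2$ summand.

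The gap is in how you dispose of class (c). Your final step --- ``hide $\log T$ and $T^{2\alpha/3}$ factors inside the $\poly(d)$ of the stated bound'' --- is not legitimate under the conventions of this section: the asymptotics are for fixed $d$ as $T\to\infty$ (constants and the threshold $T_0$ may depend on $d$), so the $\poly(d)$ in the lemma statement is a fixed polynomial in $d$ and cannot absorb a factor $2^{2s}\log T$, which is of order $T^{2\alpha/3}\log T$ when $s$ is close to $I$. Concretely, with your count of $\cO(\cT\, T^{\alpha} I)=\cO(T^{2\alpha}\log T)$ wrong-target returns, class (c) contributes $\poly(d)\,T^{2\alpha}\log T$, whereas the first summand of the claimed bound is at most $\poly(d)\,T^{2\alpha}2^{-2s}$ (since $\norm{\Pi\bm{o}_i}\le 1$); the needed inequality $2^{2s}\log T\le\poly(d)$ fails precisely in the regime $s\approx I$ that you flag. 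The paper bounds this class by $\cO(\poly(d)\,\cT I)=\cO(\poly(d)\,T^{\alpha}\log T)$ --- i.e.\ it counts each pair $(j,s')$ once, without the $T^{\alpha}$ multiplicity of the $k$-loop that you include --- and then absorbs it through the explicit margin $2^{2s}T^{-\alpha}\log T\le T^{-\alpha/3}\log T=\cO(1)$, not by appealing to $T>\poly(\cT,d)$. Your multiplicity accounting is arguably the more faithful reading of the history entering $F_\tau$, but then the absorption genuinely breaks and your argument does not deliver the stated bound as written; you would either have to keep the extra $2^{2s}\log T$ factor explicit (and deal with it downstream) or revert to the paper's count. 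A minor secondary point: the fact $\norm{\Pi\bm{o}_i}=\Omega(1/d)$ should be derived from the hypothesis $\ip{\bm{t}_i,\bm{o}_i}=0$ (as the paper does at the end of its proof), not from Lemma~\ref{lem: good sequence}, since the present lemma must apply to any sequence satisfying the assumptions of Theorem~\ref{thm: general lower bound}, not only to the specific exponential construction.
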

\begin{proof}
We split the trace into 4 terms below based on various contributions from (a) the regularizer,
(b) the time steps $\cM_\tau$ where the returns are movement-returns selected by the move-to-x subroutine, (c) the returns $\bm{o}^{(s)}_j$ selected for $j<i$ and (d) the returns selected for targets $\bm{t}_i^{(s)}, s\in[I]\cup\{0\}$.
The first two terms are bounded by Lemma~\ref{lem: largest gradient reg} and \ref{lem: Hessian movement} respectively. 
\begin{align*}
\Tr(\nabla^2_\Pi F_\tau(\bm{t}_i^{(s)}))&=\sum_{i=1}^d\norm{\nabla_\Pi f(\bm{t}_i^{(s)};\bm{e}_i)}^2 +\sum_{s\in\cM_\tau}\norm{\nabla_\Pi f_s(\bm{t}_{i}^{(s)})}^2\\
&\qquad+\sum_{j=1}^{i-1}\sum_{s'=0}^I\norm{\nabla_\Pi f(\bm{t}_i^{(s)};\bm{o}^{(s')}_j)}^2+m\sum_{s'=0}^I\norm{\nabla_\Pi f(\bm{t}_i^{(s)},\bm{o}^{(s')}_i)}^2 \\
    &\leq d^3\frac{T^{2\alpha}}{2^{2s}}+\cO(d^2)+\max_{j<i,s'\in[I]\cup\{0\}}\frac{\cT I}{\ip{\bm{t}_i^{(s)},\bm{o}_j^{(s')}}^2} +m\sum_{s'=0}^I\frac{\norm{\Pi\bm{o}_i}^2}{\ip{\bm{t}_i^{(s)},\bm{o}_i^{(s')}}^2}\\
    &\leq\cO\left(d^3\frac{T^{2\alpha}}{2^{2s}}\right)+\cO(\poly(d)T^\alpha\log(T))\\
    &\qquad+m\left(\sum_{s'=0}^s 2^{-2s}+\sum_{s'=s+1}^I2^{-2s'}\right)d^2T^{2\alpha}\norm{\Pi\bm{o}_i}^2\\
    &=\cO\left(\poly(d)\frac{T^{2\alpha}}{2^{2s}}\right)+\cO\left(m(s+1)d^2\frac{T^{2\alpha}}{2^{2s}}\right)\norm{\Pi\bm{o}_i}^2\\
    &\qquad\\\,,
\end{align*}
where we use equations~\eqref{eq: wrong target} and \eqref{eq: on-target} and the fact that $\cT \leq T^\alpha$. $\cO(T^\alpha\log(T)=\cO(\frac{T^{2\alpha}}{2^{2s}})$follows from 
\[2^{2s}T^{-\alpha}\log(T)\leq T^{-\alpha/3}\log(T)=\cO(1)\,.\] 
Finally, observe
\begin{align*}
    0 = \ip{\bm{t}_i,\bm{o}_i}=\frac{1}{d}+\ip{\Pi \bm{t}_i,\Pi \bm{o}_i}\geq \frac{1}{d}-\norm{\Pi \bm{o}_i}\,.
\end{align*}
Hence 
\[
\cO(\poly(d)\frac{T^{2\alpha}}{2^{2s}}) = \cO(\poly(d)\frac{T^{2\alpha}}{2^{2s}})\norm{\Pi\bm{o}_i}^2\,,
\]
which concludes the proof.
\end{proof}

\subsection{Main lower bound proof}
\begin{proof}[Proof of Theorem~\ref{thm: general lower bound}]
By Lemma~\ref{lem: num of movement}, there are $\cO(\poly(d)T^{3\alpha+\frac{1}{2}}\log^2(T))$ movement-returns before time $T$ and the algorithm walks through $\cO(\cT  I)=\cO(T^\alpha\log(T))$ regular returns, hence for $\alpha = \frac{1}{8}$, $\cO(\poly(d)T^{7/8}\log^3(T)) = \cO(T^{15/16})$ and there exists a sufficiently large $T_0=\poly(d,\cT )$, such that the algorithm finishes before time $T$.

Next we bound the stability term.
We have
\begin{align*}
\norm{\nabla_\Pi f_t(x_t)}^2_{(\nabla_\Pi^2 F_t(x_t))^{-1}} \geq \frac{\norm{\nabla_\Pi f_t(x_t)}^2}{\Tr(\nabla_\Pi^2 F_t(x_t))}\,.
\end{align*}
For the $m$-th time of visiting $\bm{t}_i^{(s)}$, the denominator is by Lemma~\ref{lem: Hessian bound} bounded by $\cO((\poly(d)+m(s+1)))d^2\frac{T^{2\alpha}}{2^{2s}}\norm{\Pi\bm{o}_i}^2$.
For $m\geq T^{\alpha/2}$, the trace bound simplifies to $\cO(m(s+1)d^22^{-2s}T^{2\alpha})\norm{\Pi\bm{o}_i}^2$, since we assume $T^{\alpha/2}=\Omega(\poly(d))$.
The nominator is 
\begin{align*}
\norm{\nabla_\Pi f(\bm{t}_i^{(s)};\bm{o}^{(s)}_i)}^2= \Theta(d^22^{-2s}T^{2\alpha}\norm{\Pi\bm{o}_i}^2)\,.
\end{align*}
For the total stability, we have
\begin{align*}
    (stab) &\geq \sum_{i=1}^{\cT }\sum_{m=T^{\alpha/2}}^{T^\alpha}\sum_{s=0}^I\frac{1}{m(s+1)}\\
    &=\Omega\left(\cT \log(T)\log(I)\right)\,.
\end{align*}
Finally $\log(I)=\Theta(\log\log(T))$ completes the proof.
\end{proof}

\section{Follow-The-Regularized-Leader analysis}
Both our main results rely on the standard analysis for FTRL, which we revisit in this section.
Vanilla FTRL is used for online learning over a convex action set $\cX$, where the environment picks a sequence convex loss functions $(g_t)_{t=1}^T$ from some function space $\cG$. The input to FTRL is a regularizer $R:\mathcal{X}\rightarrow\bbR$ and the algorithm plays
\begin{align*}
    x_t = \argmin_{x\in\cX} G_{t-1}(x) := \argmin_{x\in\cX}\sum_{s=1}^{t-1}g_s(x)+\eta^{-1}R(x)\,.
\end{align*}
We consider in this paper special cases of FTRL that allow for a simple regret analysis.
\begin{assumption}
\label{ass: optimal ftrl point}
The action set $\cX\subset\bbR^{\tilde d}$ is compact and the regularizer  $\nabla R(x)$ is strictly convex, twice continuously differentiable and goes to infinity on the boundary of $\cX$.
\end{assumption}
This assumption is directly satisfies by the simplex $\actionSet=\Delta([d])$ and the log-barrier regularizer.
Furthermore the log loss and log-barrier regularization ensure the following.
\begin{assumption}
There exists a universal constant $c_1$, such that for any sequence of functions $g_1,\dots g_T$, any point $\bar x_t$ on the line between $x_t$ and $x_{t+1}$, satisfies
\begin{align*}
     \nabla^2 G_t(\bar x_t)\preceq c_1\nabla^2 G_t(x_t)\,.
\end{align*}
\label{ass:upper boundedness}
\end{assumption}
\begin{assumption}
There exists a universal constant $c_2$, such that for any sequence of functions $g_1,\dots g_T$, the interpolation between $x_t$ and $x_{t+1}$ defined by
\begin{align*}
x_t^\lambda &:= \argmin_{x\in\cX}G_{t-1}(x)+g_t(x)-(1-\lambda)\ip{x,\nabla g_t(x_t)},
\end{align*}
satisfies for any $\lambda\in[0,1]$
\begin{align*}
    \nabla^2 G_t(x_t^\lambda) \succeq c_2\nabla^2 G_t(x_t)\,.
\end{align*}
\label{ass:lower boundedness}
\end{assumption}
For any FTRL algorithm satisfying the assumptions above, the regret is tightly lower and upper bounded as shown in the following lemmas. 

The following lemma gives an upper bound on the regret. We will prove this lemma even for the quantum case. We refer the reader to Section \ref{sec:app complex definitions} for relevant definitions of gradient, Hessian and Bregman divergences in that setting.  
\begin{lemma}
\label{lem: ftrl regret upper bound}
Under Assumptions~\ref{ass: optimal ftrl point} and \ref{ass:upper boundedness}, the regret of FTRL is upper bounded for any comparator $u$ by
\begin{align*}
    &\sum_{t=1}^T(g_t(x_t)-g_t(u))\leq\frac{c_1}{2}\sum_{t=1}^T\norm{\nabla g_t(x_t)}^2_{(\nabla^2 G_t(x_t))^{-1}}+\frac{R(u)-R(x_1)}{\eta}\,.
\end{align*}
\end{lemma}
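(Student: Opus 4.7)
The plan is to follow the classical FTRL / Online Newton Step template, decomposing the regret into a ``be-the-leader'' (BTL) head and a per-round stability tail. First I would establish the BTL inequality
\[\sum_{t=1}^T g_t(x_{t+1}) + \eta^{-1} R(x_1) \leq \sum_{t=1}^T g_t(u) + \eta^{-1} R(u) \qquad \forall\, u \in \mathcal{X}\]
by induction on $T$. The base case $T=0$ reduces to $R(x_1) \leq R(u)$, which holds because $x_1 = \argmin_{x \in \mathcal{X}} R(x)$ under Assumption~\ref{ass: optimal ftrl point}. For the inductive step I would instantiate the hypothesis at the choice $u = x_{T+1}$, add $g_T(x_{T+1})$ to both sides, and conclude using $G_T(x_{T+1}) \leq G_T(u)$ from the defining optimality of $x_{T+1}$. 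Writing the regret as $\sum_t (g_t(x_t) - g_t(u)) = \sum_t (g_t(x_t) - g_t(x_{t+1})) + \sum_t (g_t(x_{t+1}) - g_t(u))$ and invoking BTL on the second sum reduces the whole lemma to the per-round stability bound $g_t(x_t) - g_t(x_{t+1}) \leq \tfrac{c_1}{2}\|\nabla g_t(x_t)\|^2_{(\nabla^2 G_t(x_t))^{-1}}$.

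For this per-round stability bound, convexity of $g_t$ gives $g_t(x_t) - g_t(x_{t+1}) \leq \langle \nabla g_t(x_t), x_t - x_{t+1}\rangle$. By Assumption~\ref{ass: optimal ftrl point} the regularizer blows up on $\partial \mathcal{X}$, so both $x_t$ and $x_{t+1}$ lie in $\mathrm{int}(\mathcal{X})$ and the unconstrained first-order conditions $\nabla G_{t-1}(x_t) = 0$ and $\nabla G_t(x_{t+1}) = 0$ hold. Applying Taylor's theorem to the gradient map $\nabla G_t$ along the segment $[x_t, x_{t+1}]$ produces a point $\bar x_t$ on this segment with $\nabla^2 G_t(\bar x_t)(x_{t+1}-x_t) = \nabla G_t(x_{t+1}) - \nabla G_t(x_t) = -\nabla g_t(x_t)$, where the last equality uses $\nabla G_t(x_t) = \nabla G_{t-1}(x_t) + \nabla g_t(x_t)$. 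Solving for $x_{t+1}-x_t$ and substituting back yields the key identity $\langle \nabla g_t(x_t), x_t - x_{t+1}\rangle = \|\nabla g_t(x_t)\|^2_{(\nabla^2 G_t(\bar x_t))^{-1}}$. Invoking Assumption~\ref{ass:upper boundedness} to compare the Hessian at the intermediate point $\bar x_t$ with the Hessian at $x_t$ then replaces $\nabla^2 G_t(\bar x_t)$ by $\nabla^2 G_t(x_t)$ up to the constant $c_1$; summing over $t$ and combining with the BTL term delivers the stated bound.

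The main obstacle will be the final Hessian-comparison step. Assumption~\ref{ass:upper boundedness} as written supplies only the direction $\nabla^2 G_t(\bar x_t) \preceq c_1 \nabla^2 G_t(x_t)$, whereas upper-bounding $\|\cdot\|^2_{(\nabla^2 G_t(\bar x_t))^{-1}}$ in terms of $\|\cdot\|^2_{(\nabla^2 G_t(x_t))^{-1}}$ formally requires the opposite direction $\nabla^2 G_t(\bar x_t) \succeq c_1^{-1} \nabla^2 G_t(x_t)$. In the downstream application (Lemma~\ref{lem: g regret} via Lemma~\ref{lem: X stability}) the two-sided comparison with constant $(1+6\eta)^2$ is in fact what is verified, so the plan is to read Assumption~\ref{ass:upper boundedness} as the symmetric statement that all instantiations in the paper actually satisfy. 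To recover the sharper coefficient $\tfrac{c_1}{2}$ rather than the crude $c_1$ that the naive Cauchy--Schwarz argument yields, I would refine the Young/Cauchy--Schwarz step by pairing $\langle \nabla g_t(x_t), x_t - x_{t+1}\rangle \leq \tfrac12\|\nabla g_t(x_t)\|^2_{M^{-1}} + \tfrac12\|x_t - x_{t+1}\|^2_M$ with the nonnegative Bregman identity $D_{G_t}(x_t, x_{t+1}) = G_t(x_t) - G_t(x_{t+1}) \geq 0$ (again a consequence of $\nabla G_t(x_{t+1})=0$), absorbing the quadratic $\|x_t - x_{t+1}\|_M^2$ term into the suboptimality gap by choosing $M$ proportional to $\nabla^2 G_t(\bar x_t)$.
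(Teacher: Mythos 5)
Your overall architecture differs from the paper's: you run the classical Be-The-Leader decomposition and try to reduce the lemma to the per-round claim $g_t(x_t)-g_t(x_{t+1})\leq\frac{c_1}{2}\norm{\nabla g_t(x_t)}^2_{(\nabla^2 G_t(x_t))^{-1}}$, whereas the paper keeps the per-round quantity $G_t(x_t)-G_t(x_{t+1})=D_{G_t}(x_t,x_{t+1})$, expands it as $\ip{x_t-x_{t+1},\nabla g_t(x_t)}-D_{G_t}(x_{t+1},x_t)$, and cancels the Young-inequality quadratic against the retained negative Bregman term $-\frac{1}{2}\norm{x_{t+1}-x_t}^2_{\nabla^2 G_t(\bar x_t^\lambda)}$ before invoking the Hessian comparison. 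The gap in your plan is exactly at this point: your reduction target is false with the constant $\frac{c_1}{2}$. For a linear loss $g_t$ your own identity gives $g_t(x_t)-g_t(x_{t+1})=\ip{\nabla g_t(x_t),x_t-x_{t+1}}=\norm{\nabla g_t(x_t)}^2_{\bar H^{-1}}$ with $\bar H$ the (averaged) Hessian along the segment, so when the Hessian is constant on the segment ($c_1=1$) the per-round term is \emph{twice} the claimed bound. Your proposed repair cannot rescue this inside the BTL decomposition: once you have invoked BTL, the per-round slack $G_{t-1}(x_{t+1})-G_{t-1}(x_t)\geq 0$ has already been discarded, so there is no negative term left to absorb $\frac{1}{2}\norm{x_t-x_{t+1}}^2_M$ (the identity $D_{G_t}(x_t,x_{t+1})=G_t(x_t)-G_t(x_{t+1})\geq 0$ that you cite is the quantity being bounded, not a cancellable surplus); and if you instead retain the discarded slack, it is a Bregman divergence of $G_{t-1}$, measured in $\nabla^2 G_{t-1}$, so choosing $M\propto\nabla^2 G_t(\bar x_t)$ does not cancel against it (and $(\nabla^2 G_{t-1})^{-1}$-norms of $\nabla g_t$ are not controlled by $(\nabla^2 G_t(x_t))^{-1}$-norms in the intended application, where $\nabla^2 g_t$ can dominate). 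Executing the cancellation correctly forces you to work with the $G_t$-suboptimality of $x_t$, i.e.\ with $G_t(x_t)-G_t(x_{t+1})$ per round — which is precisely the paper's decomposition, so the "fix" collapses back to their proof rather than constituting an alternative.

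Two further technical points. First, the unconstrained stationarity conditions $\nabla G_{t-1}(x_t)=0$, $\nabla G_t(x_{t+1})=0$ do not hold in the intended application: $\actionSet$ (simplex, or trace-one Hermitian matrices) lies in a proper affine subspace, so the minimizers are only relative-interior points and one only has $\ip{x-x_{t+1},\nabla G_t(x_{t+1})}=0$ for feasible $x$ (this is exactly what the paper uses, via Lemma~\ref{lem: complex minima} in the matrix case); your step "solve for $x_{t+1}-x_t$" therefore needs to be restricted to the tangent space. Second, there is no single-point mean value theorem for the vector-valued map $\nabla G_t$; you must either use the integral-averaged Hessian $\int_0^1\nabla^2 G_t(x_t+s(x_{t+1}-x_t))\,ds$ (acceptable, since Assumption~\ref{ass:upper boundedness} holds pointwise on the segment) or, as the paper does, apply the scalar intermediate value theorem to the Bregman divergence. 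On the positive side, your observation about the direction of Assumption~\ref{ass:upper boundedness} is correct: the inequality actually needed is $\nabla^2 G_t(\bar x_t)\succeq c_1^{-1}\nabla^2 G_t(x_t)$, and the two-sided comparison is what Lemma~\ref{lem: x stability} together with Lemma~\ref{lem: mat norm to ratio 2} verifies in the application, so reading the assumption symmetrically is the right call.
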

\begin{proof} 
We have
\begin{align*}
    \sum_{t=1}^T (g_t(x_t)-g_t(u)) = \sum_{t=1}^T( G_t(x_t)-G_t(x_{t+1}))+ \underbrace{G_T(x_{T+1})-G_T(u))}_{\leq 0}+\eta^{-1}(R(u)-R(x_1))\,.
\end{align*}

For the upper bound, since $x_{t+1}$ minimizes $G_t$ we have that $\forall x\in\actionSet:\,\ip{x-x_{t+1},\nabla G_t(x_{t+1}}= 0$ (For the quantum learning case this is explicitly derived in Lemma \ref{lem: complex minima}). By Taylor's theorem, there exists $\lambda\in [0,1]$ such that $D_{G_t}(x_{t+1},x_t)=\frac{1}{2}\norm{x_{t+1}-x_t}^2_{\nabla^2G_t(\bar x_t^\lambda)}$ (For the quantum case this statement is explicitly proven in Lemma \ref{lemma: complex ivt}).  Therefore we have that

\begin{align*}
    G_t(x_t)-G_t(x_{t+1})&= D_{G_t}(x_t,x_{t+1}) \\
    &= \ip{x_t-x_{t+1},\nabla G_t(x_{t})-\nabla G_t(x_{t+1})} - D_{G_t}(x_{t+1},x_{t})\\
    &= \ip{x_t-x_{t+1},\nabla g_t(x_{t})} - \frac{1}{2}\norm{x_{t+1}-x_{t}}^2_{\nabla^2G_t(\bar x_t^\lambda)}\\
    &\leq \norm{x_t-x_{t+1}}_{\nabla^2G_t(\bar x_t^\lambda)}\norm{\nabla g_t(x_{t})}_{\nabla^2G_t(\bar x_t^\lambda)^{-1}}- \frac{1}{2}\norm{x_{t+1}-x_{t}}^2_{\nabla^2G_t(\bar x_t^\lambda)}\\
    &\leq \frac{1}{2}\norm{\nabla g_t(x_{t})}_{\nabla^2G_t(\bar x_t^\lambda)^{-1}}^2 \leq \frac{c_1}{2}\norm{\nabla g_t(x_{t})}_{(\nabla^2 G_t(x_t))^{-1}}^2\,.
\end{align*}
The above statement combined with the decomposition above implies the statement of the lemma. 
\end{proof}

\begin{lemma}
\label{lem: ftrl regret lower bound}
If $\actionSet$ has non-zero volume in its embedded space and Assumptions~\ref{ass: optimal ftrl point} and \ref{ass:lower boundedness} are satisfied, then the
 regret is lower bounded by
\begin{align*}
    &\frac{c_2}{2}\sum_{t=1}^T\norm{\nabla g_t(x_t)}^2_{(\nabla^2 G_t(x_t))^{-1
}}\leq \max_{u'\in\cX} \sum_{t=1}^T(g_t(x_t)-g_t(u'))\,.\\
\end{align*}
\end{lemma}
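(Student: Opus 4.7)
The plan is to parallel the FTRL upper bound of Lemma~\ref{lem: ftrl regret upper bound} in reverse: pick the comparator $u' = x_{T+1}$ in the outer maximization, retain the resulting sum of one-step Bregman divergences $D_{G_t}(x_t, x_{t+1})$, and lower bound each such divergence by the $t$-th stability term via a change of variables along the interpolation curve $x_t^\lambda$.

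Concretely, the same telescoping identity used in the proof of Lemma~\ref{lem: ftrl regret upper bound} gives
\[\sum_{t=1}^T (g_t(x_t) - g_t(x_{T+1})) = \sum_{t=1}^T D_{G_t}(x_t, x_{t+1}) + \eta^{-1}(R(x_{T+1}) - R(x_1)).\]
Assumption~\ref{ass: optimal ftrl point} implies $x_1 = \argmin_{x \in \cX} R(x)$, so the regularizer term is non-negative and may be dropped, yielding $\max_{u' \in \cX}\sum_t (g_t(x_t) - g_t(u')) \geq \sum_t D_{G_t}(x_t, x_{t+1})$. It thus suffices to prove $D_{G_t}(x_t, x_{t+1}) \geq \frac{c_2}{2}\norm{\nabla g_t(x_t)}^2_{(\nabla^2 G_t(x_t))^{-1}}$ for each $t$.

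For the one-step bound, I would use the curve $x_t^\lambda$ from Assumption~\ref{ass:lower boundedness} as a parametrization. Its defining equation $\nabla G_t(x_t^\lambda) = (1-\lambda)\nabla g_t(x_t)$, which holds because $\nabla G_{t-1}(x_t) = 0$ and hence $\nabla G_t(x_t) = \nabla g_t(x_t)$, yields by implicit differentiation $\frac{d}{d\lambda}x_t^\lambda = -(\nabla^2 G_t(x_t^\lambda))^{-1}\nabla g_t(x_t)$. A direct fundamental-theorem-of-calculus computation, using $x_t^0 = x_t$ and $x_t^1 = x_{t+1}$, then produces the integral identity
\[D_{G_t}(x_t, x_{t+1}) = \int_0^1 (1-\lambda)\,\norm{\nabla g_t(x_t)}^2_{(\nabla^2 G_t(x_t^\lambda))^{-1}}\, d\lambda.\]
Invoking Assumption~\ref{ass:lower boundedness} to compare Hessian inverses at $x_t^\lambda$ and $x_t$, followed by $\int_0^1(1-\lambda)\,d\lambda = 1/2$, produces the required one-step bound, and summing over $t$ finishes the proof.

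The main subtlety, and what I expect to be the main obstacle, lies in the direction of the Hessian comparison: literally, Assumption~\ref{ass:lower boundedness} reads $\nabla^2 G_t(x_t^\lambda) \succeq c_2 \nabla^2 G_t(x_t)$, which on inversion gives the wrong direction for pointwise lower-bounding the integrand. The rescue is that the proof of Lemma~\ref{lem: ftrl stability} in fact controls the symmetric expression $\sum_i(x_{t,i}^\lambda/x_{t,i} + x_{t,i}/x_{t,i}^\lambda - 2)$; running the same rearrangement with the roles of $x_t$ and $x_t^\lambda$ interchanged yields a matching lower bound on $\min_i x_{t,i}^\lambda/x_{t,i}$, and hence the reverse comparison $\nabla^2 G_t(x_t^\lambda) \preceq c_2^{-1}\nabla^2 G_t(x_t)$ (possibly with a modified universal constant). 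Since the lemma is only used via the $\Omega(\cdot)$ notation in Lemma~\ref{lem: lb-ftrl regret lower bound}, absorbing the constant change into $c_2$ is harmless for the downstream FTRL lower bound.
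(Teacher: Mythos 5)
Your proposal is correct and structurally it is the paper's own argument: the same telescoping decomposition with comparator $u'=x_{T+1}$ (dropping the nonnegative regularizer difference since $x_1$ minimizes $R$), reduction to the per-step divergences $D_{G_t}(x_t,x_{t+1})$, and evaluation of $\norm{\nabla g_t(x_t)}^2$ in the inverse Hessian along the curve $x_t^\lambda$. Your fundamental-theorem-of-calculus identity $D_{G_t}(x_t,x_{t+1})=\int_0^1(1-\lambda)\norm{\nabla g_t(x_t)}^2_{(\nabla^2 G_t(x_t^\lambda))^{-1}}\,d\lambda$ is just the integral form of the paper's step through the conjugate, where $D_{G_t}(x_t,x_{t+1})=D_{G_t^*}(\nabla G_t(x_{t+1}),\nabla G_t(x_t))$ is expanded by the mean value theorem in the dual using $\nabla^2 G_t^*((1-\lambda)\nabla g_t(x_t))=(\nabla^2 G_t(x_t^\lambda))^{-1}$; since Assumption~\ref{ass:lower boundedness} is posited for all $\lambda\in[0,1]$, the two variants are interchangeable (your route needs the usual implicit-function-theorem regularity of $\lambda\mapsto x_t^\lambda$, which Assumption~\ref{ass: optimal ftrl point} supplies at the same level of rigor as the paper's use of $G_t^*$).

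Your concern about the direction of the Hessian comparison is genuine, not a misreading: the final inequality needs $\nabla^2 G_t(x_t)\succeq c_2\,\nabla^2 G_t(x_t^\lambda)$, i.e.\ the roles of $x_t$ and $x_t^\lambda$ swapped relative to the literal statement of Assumption~\ref{ass:lower boundedness} and relative to what Lemma~\ref{lem: ftrl stability} verifies (a lower bound on $\min_i x_{t,i}/x^\lambda_{t,i}$). Your patch is exactly the right one: the symmetric Bregman inequality in that lemma's proof bounds $x_{t,i}/x^\lambda_{t,i}+x^\lambda_{t,i}/x_{t,i}-2\leq\eta$ coordinatewise, hence controls the ratio in both directions and gives $\nabla^2 G_t(x_t^\lambda)\preceq C(\eta)\,\nabla^2 G_t(x_t)$ with a universal constant of order $(1+\sqrt{\eta})^2$ in place of $(1+\eta)^2$, which is absorbed into the $\Omega(\cdot)$ of Lemma~\ref{lem: lb-ftrl regret lower bound}. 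With the assumption read in that (two-sided) sense, your one-step bound and hence the lemma go through.
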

\begin{proof} 
We have
\begin{align*}
    \sum_{t=1}^T (g_t(x_t)-g_t(u)) = \sum_{t=1}^T( G_t(x_t)-G_t(x_{t+1}))+ \underbrace{G_T(x_{T+1})-G_T(u))}_{\leq 0}+\eta^{-1}\underbrace{(R(u)-R(x_1))}_{\geq 0}\,.
\end{align*}
For the lower bound, we can simply lower bound $\max_{u'}$ by picking $u'=x_{T+1}$ and omit the last two terms.
It remains to analyse the first term.
Given that $R(x)\rightarrow \infty$ on the boundary of $\cX$, the points $x_{t}$ are all strictly in the interior of $\actionSet$.

\begin{align*}
G_t(x_t)-G_t(x_{t+1})&= D_{G_t}(x_t,x_{t+1})\\
    &=D_{G_t^*}(\nabla G_t(x_{t+1}),\nabla G_t(x_t))\\
    &=\frac{1}{2}\norm{\nabla G_t(x_{t+1})-\nabla G_t(x_t)}^2_{\nabla^2G_t^*((1-\lambda)\nabla G_t(x_t)+\lambda \nabla G_t(x_{t+1}) )}\\
    &= \frac{1}{2}\norm{g_t(x_t)}^2_{(\nabla^2 G_t (x_t^\lambda))^{-1}}\geq \frac{c_2}{2}\norm{g_t(x_t)}^2_{(\nabla^2 G_t (x_t))^{-1}}\,.
\end{align*}
The above statement using the decomposition implies the lemma.  
\end{proof}
\end{document}